\DeclareMathOperator*{\E}{E}
\newcommand{\defined}{:=}
\newcommand{\objective}{h}
\newcommand{\email}[1]{\url{#1}}
\newcommand{\myparagraph}[1]{\noindent {{\bf #1:}}}
\newtheorem{proposition}{Proposition}%
\newtheorem{lemma}[proposition]{Lemma}%
\newtheorem{assumption}[proposition]{Assumption}%
\newtheorem{corollary}[proposition]{Corollary}%
\title{Direct Loss Minimization for Sparse Gaussian Processes}
\author{ 
{\bf Yadi Wei} \\ Indiana University \\ Bloomington, IN \\ \email{weiyadi@iu.edu}
\and
{\bf Rishit Sheth} \\ Microsoft Research New England \\ Cambridge, MA \\ \email{rishet@microsoft.com} 
\and
{\bf Roni Khardon} \\ Indiana University \\ Bloomington, IN \\ \email{rkhardon@iu.edu}
}
\begin{document}

\maketitle

%

%

%
%

\DeclareGraphicsExtensions{.eps,.png,.pdf}

\newcommand{\PutPlotSelection}[1]{
\begin{figure*}[t]
{ \scriptsize
\begin{center}
\begin{minipage}{0.330\linewidth}
\begin{center}
    \includegraphics[width=0.99\linewidth]{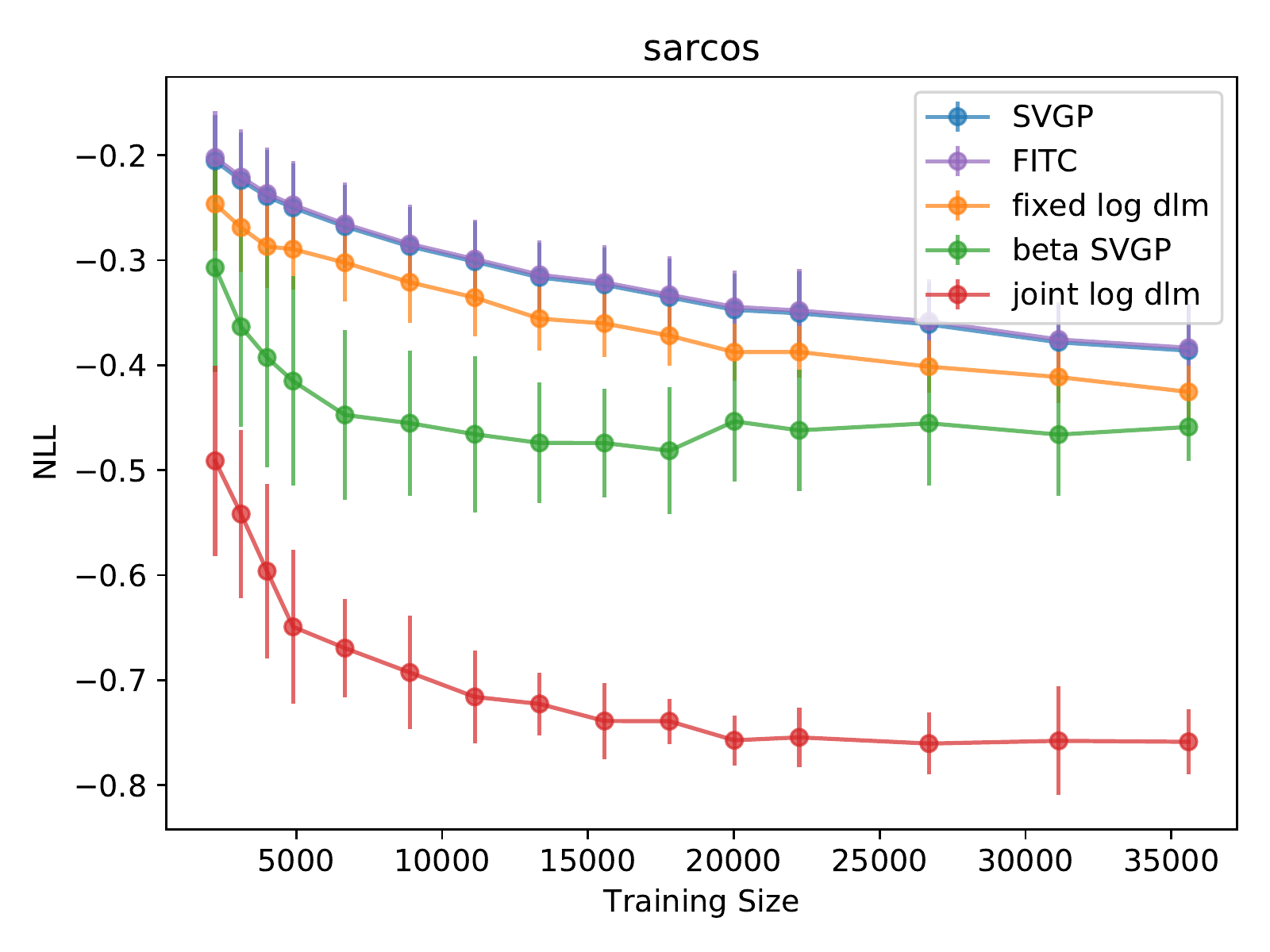}
        (a) regression; log loss
\end{center}
\end{minipage}%
\begin{minipage}{0.330\linewidth}
\begin{center}
    \includegraphics[width=0.99\linewidth]{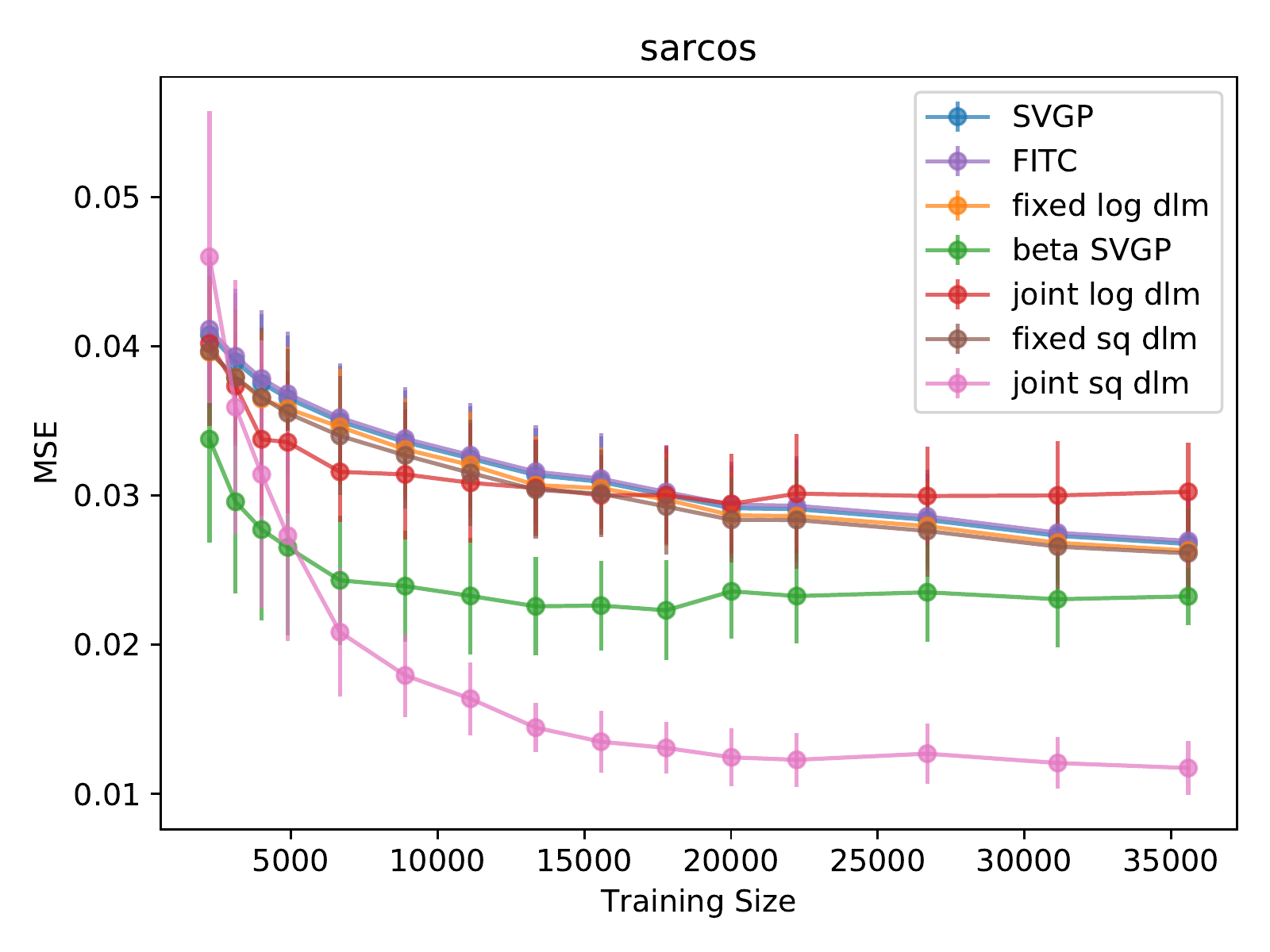}
       (b) regression; square loss
\end{center}
\end{minipage}%
\begin{minipage}{0.330\linewidth}
\begin{center}
    \includegraphics[width=0.99\linewidth]{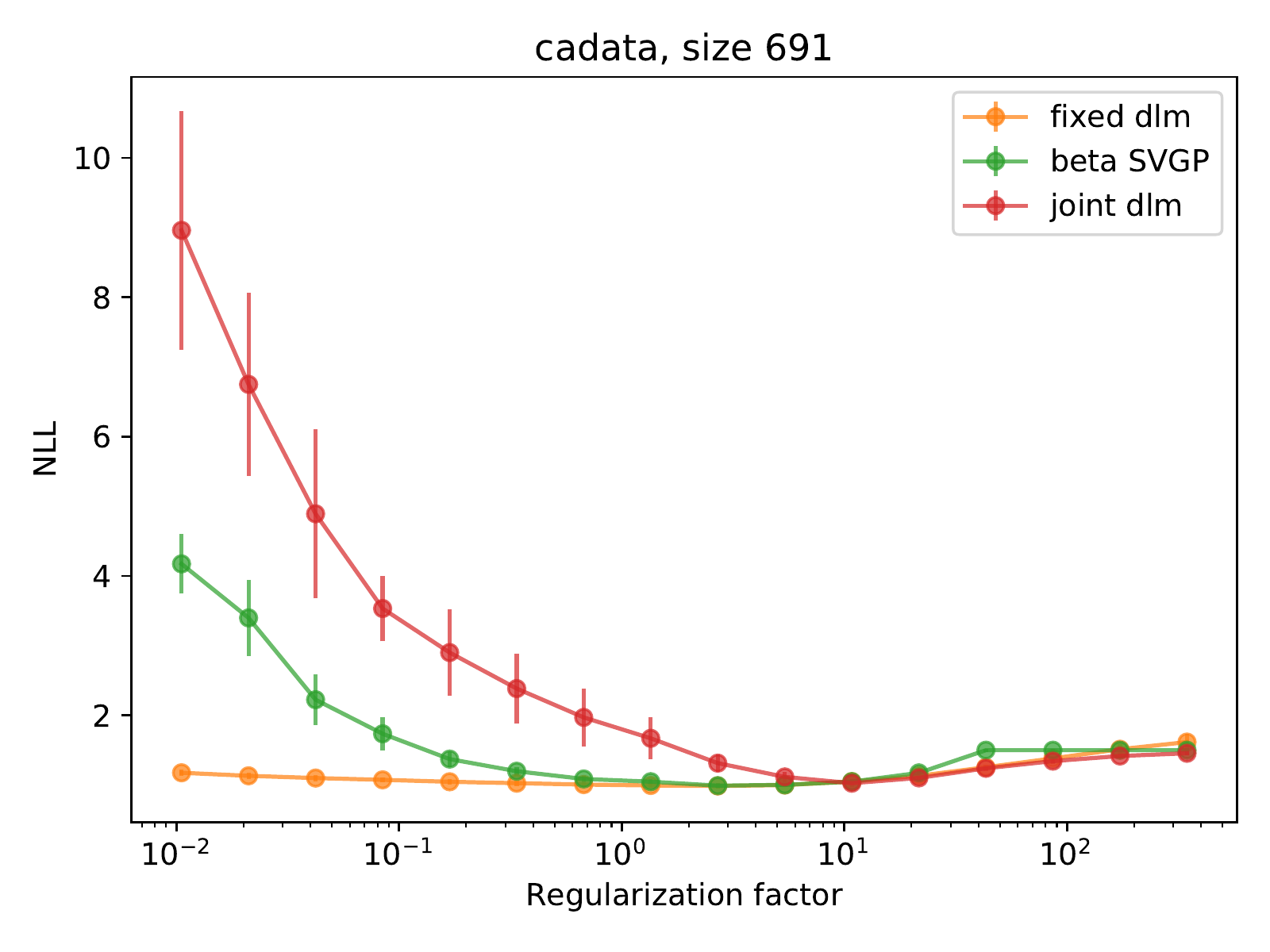}
    (c) selected $\beta$ values
\end{center}
\end{minipage}\quad
\begin{minipage}{0.330\linewidth}
\begin{center}
    \includegraphics[width=0.99\linewidth]{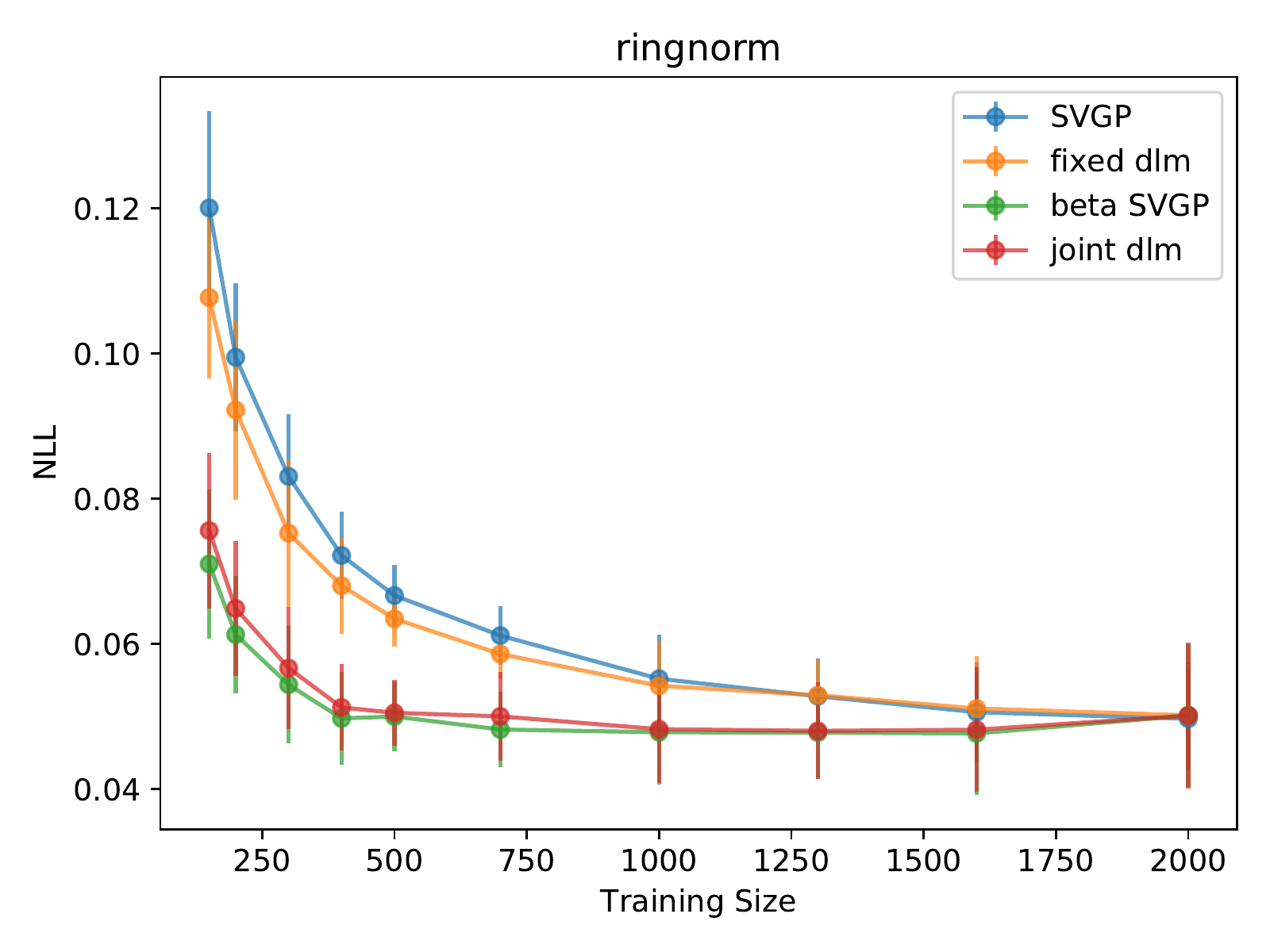}
        (d) classification; log loss
\end{center}
\end{minipage}%
\begin{minipage}{0.330\linewidth}
\begin{center}
    \includegraphics[width=0.99\linewidth]{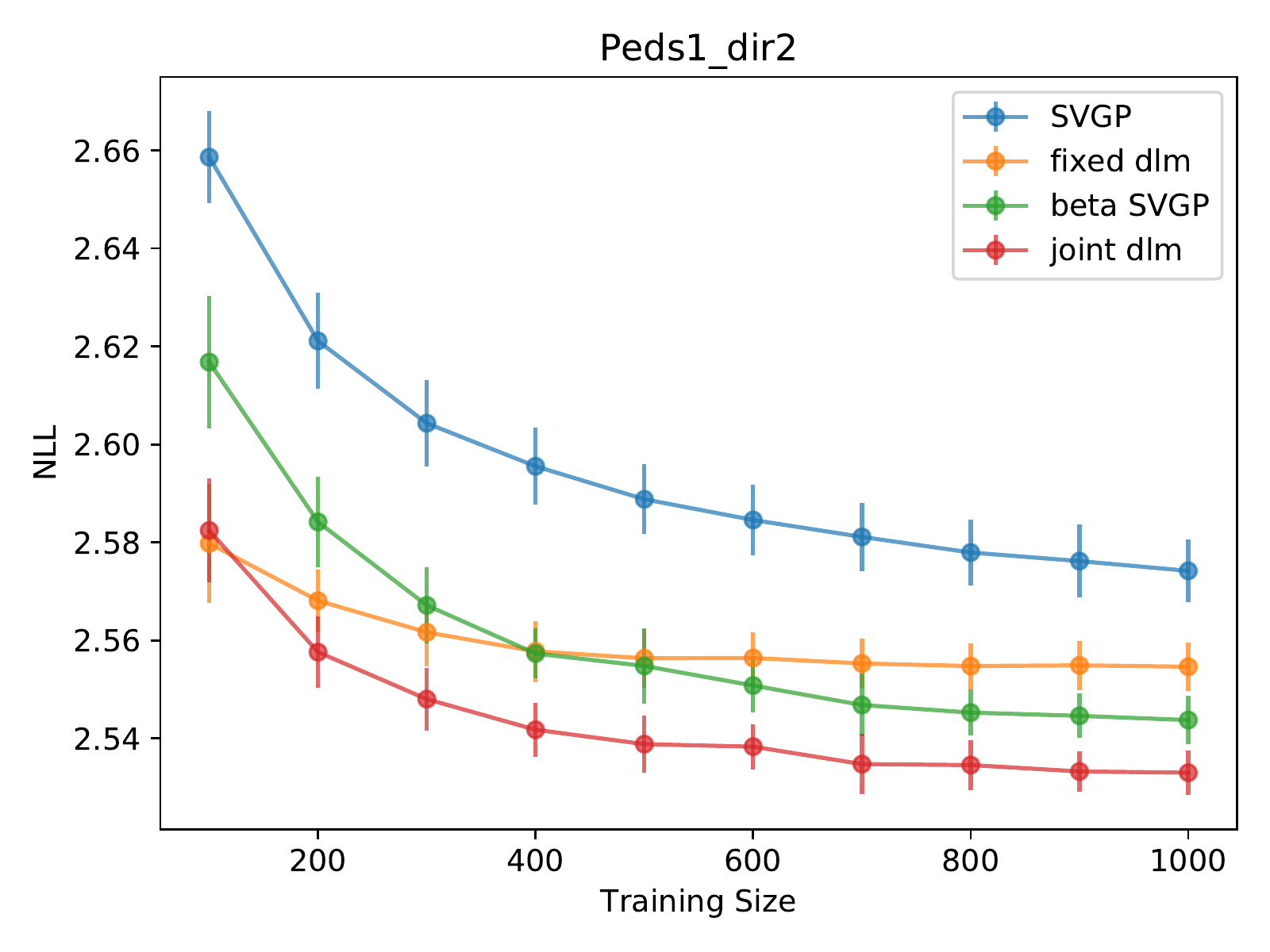}
    (e) count regression; log loss
\end{center}
\end{minipage}%
\begin{minipage}{0.330\linewidth}
\begin{center}
    \includegraphics[width=0.99\linewidth]{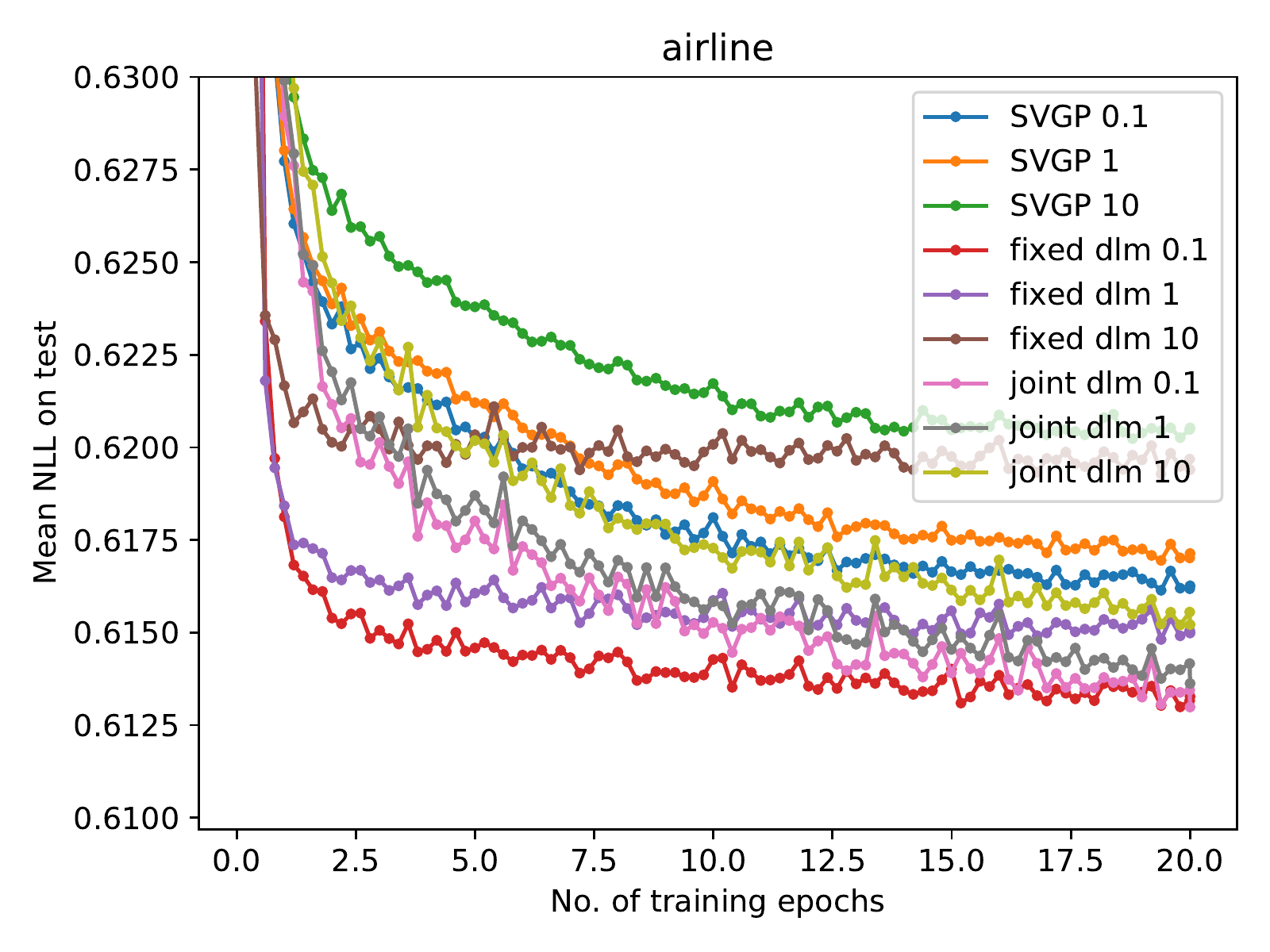}
    (f) ELBO vs.\ DLM on airline
\end{center}
\end{minipage}\quad
\begin{minipage}{0.330\linewidth}
\begin{center}
    \includegraphics[width=0.99\linewidth]{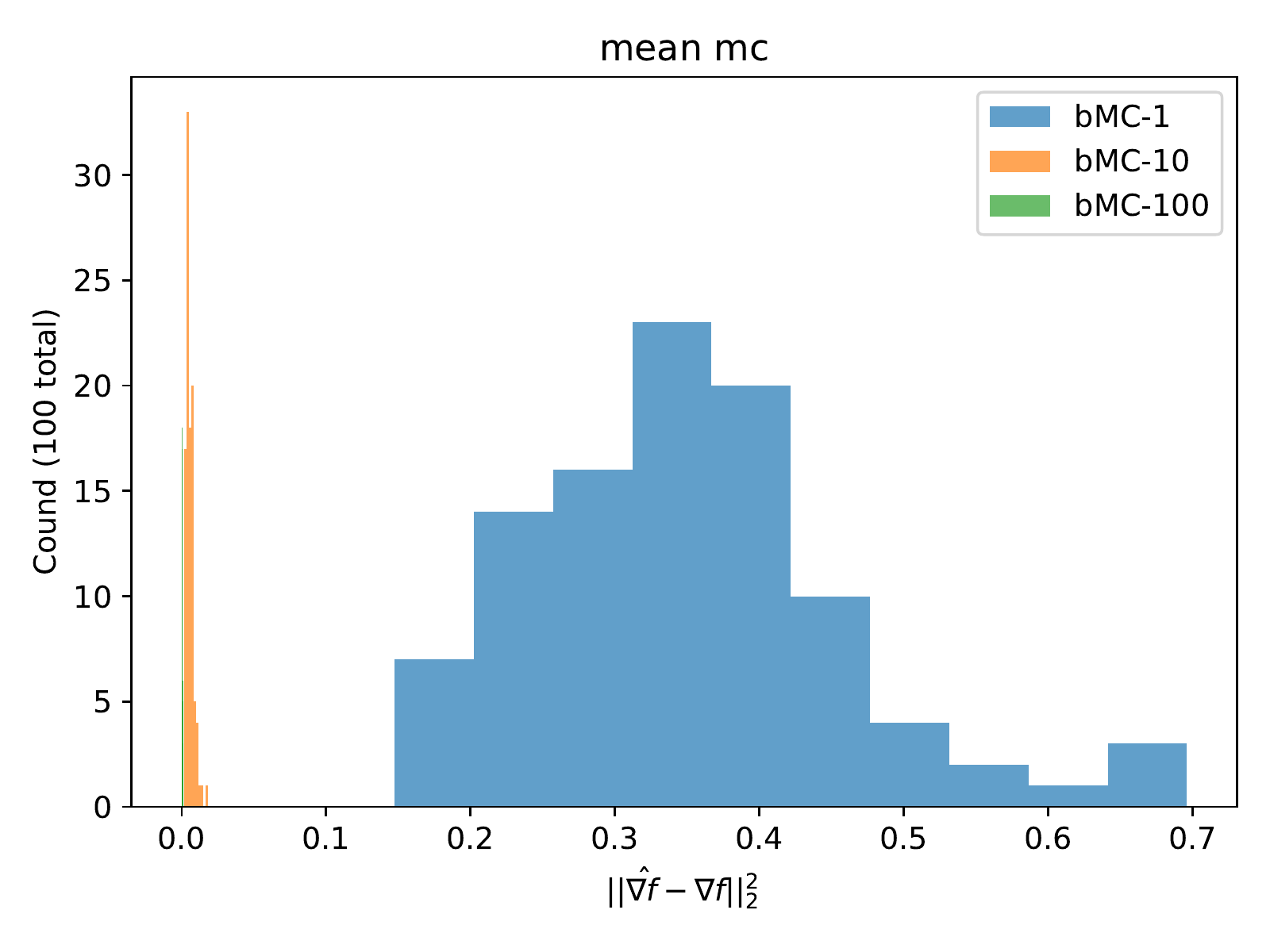}
   (g) bMC estimate of bias on abalone
\end{center}
\end{minipage}%
\begin{minipage}{0.330\linewidth}
\begin{center}
    \includegraphics[width=0.99\linewidth]{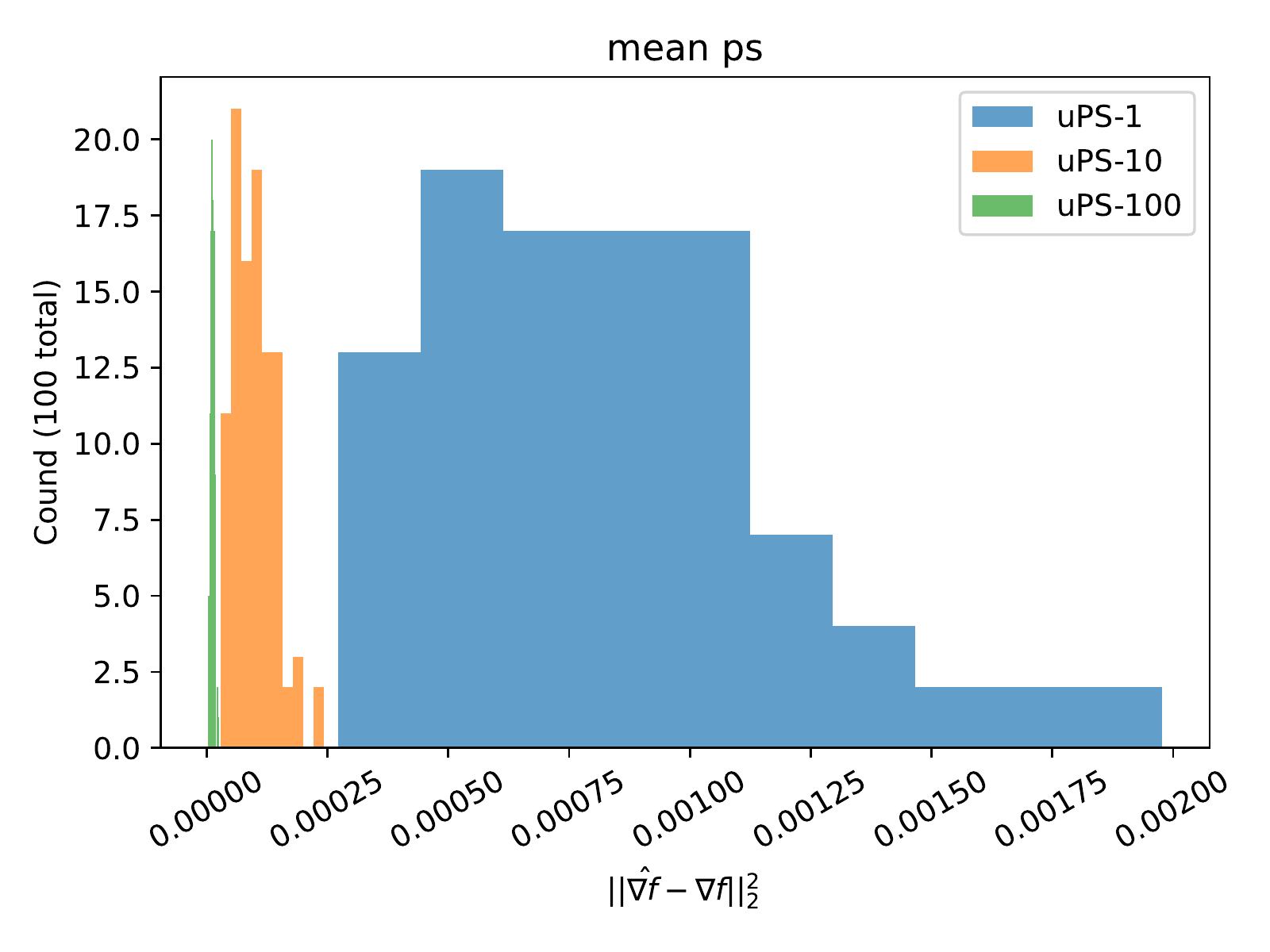}
   (h) uPS estimate of bias on abalone
\end{center}
\end{minipage}%
\begin{minipage}{0.330\linewidth}
\begin{center}
    \includegraphics[width=0.99\linewidth]{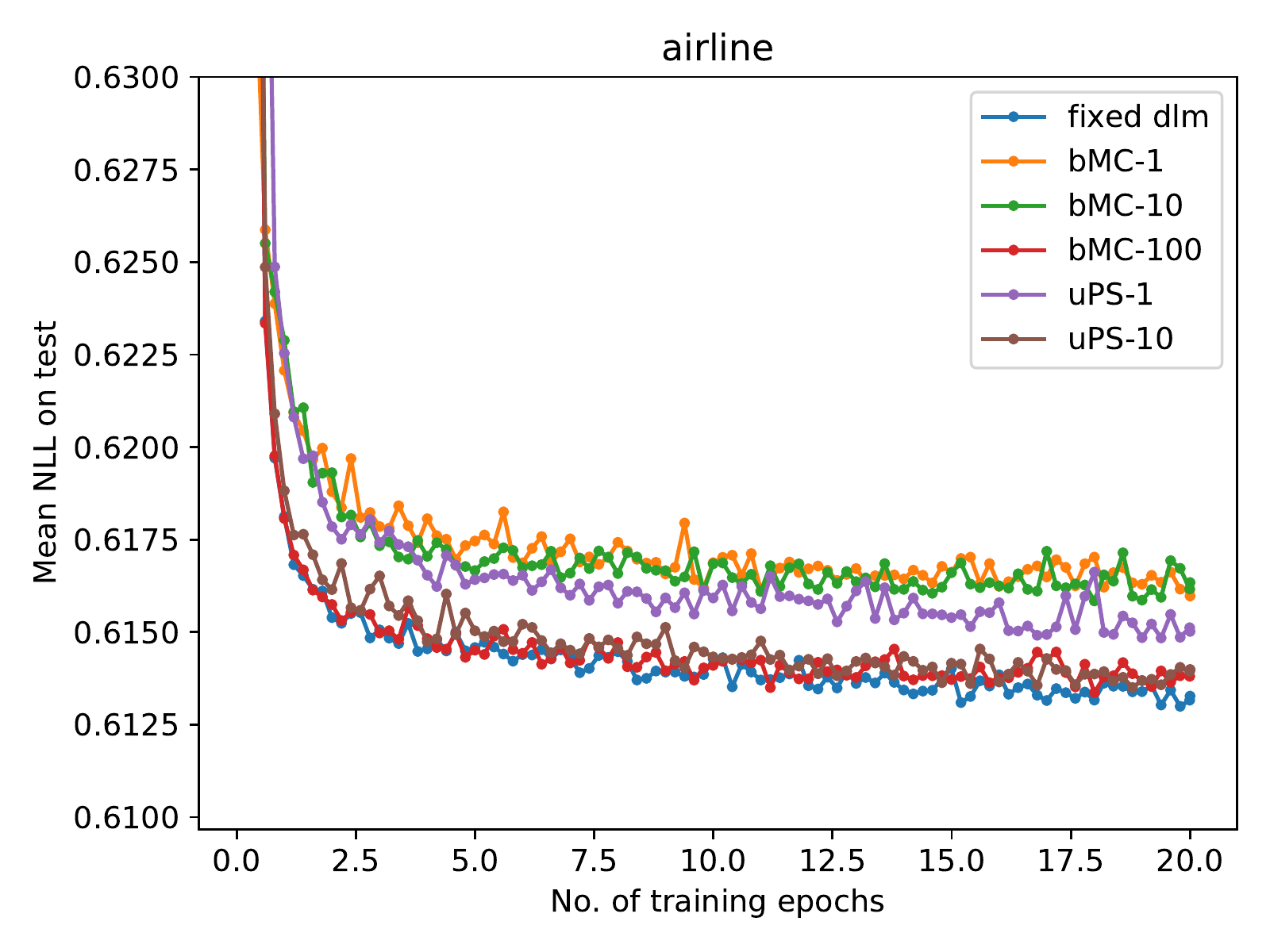}
   (i) DLM exact vs.\ sampling on airline
\end{center}
\end{minipage}\quad
\end{center}
}
\caption{Selected results. Description of individual plots is given in the text. }
\label{#1}
\end{figure*}
}

\newcommand{\PutRegressionLearningCurve}[1]{
\begin{figure*}[h]
\begin{center}
\begin{minipage}{0.433\linewidth}
\begin{center}
    \includegraphics[width=0.99\linewidth]{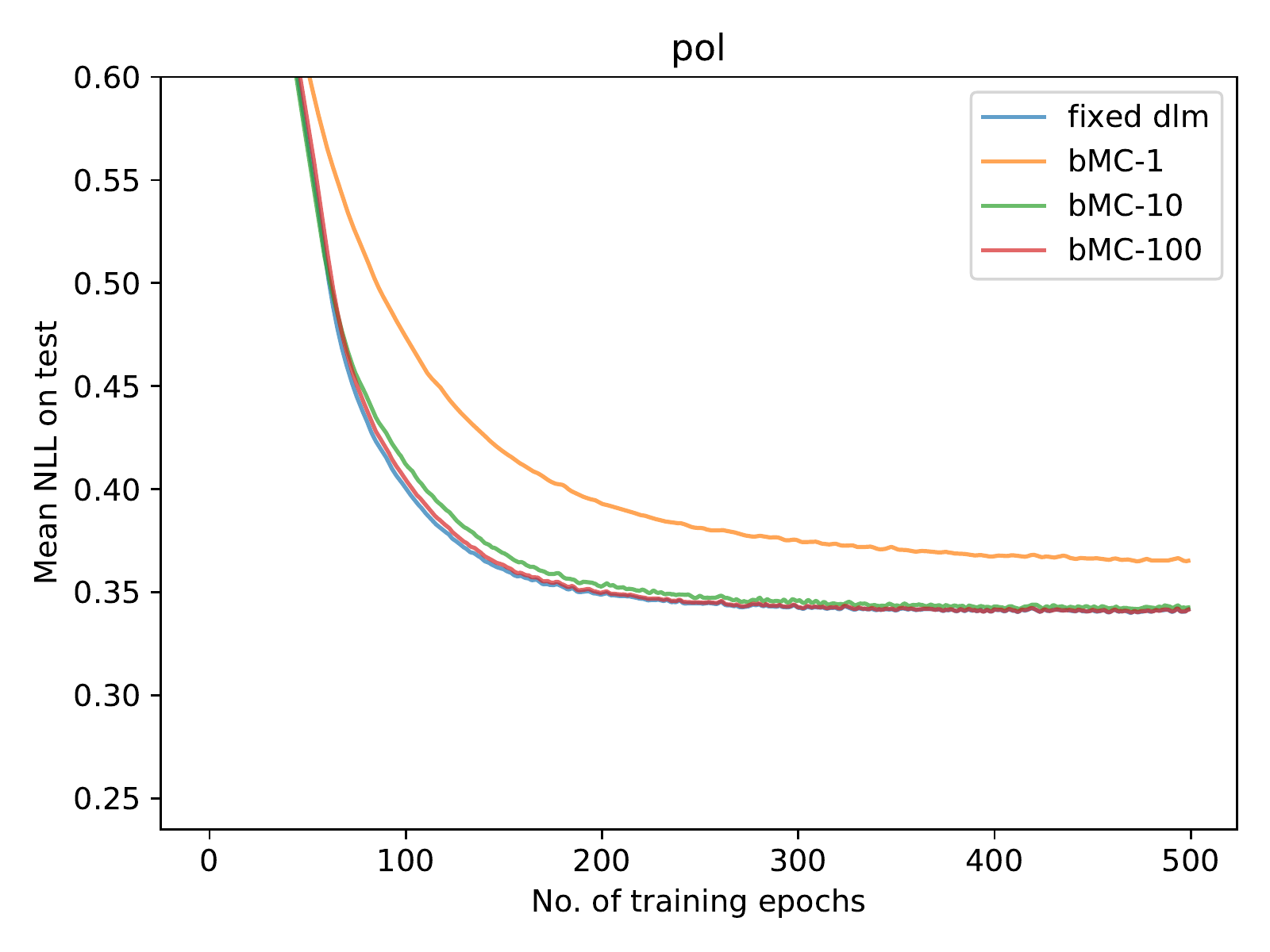}
\end{center}
\end{minipage}\quad
\begin{minipage}{0.433\linewidth}
\begin{center}
    \includegraphics[width=0.99\linewidth]{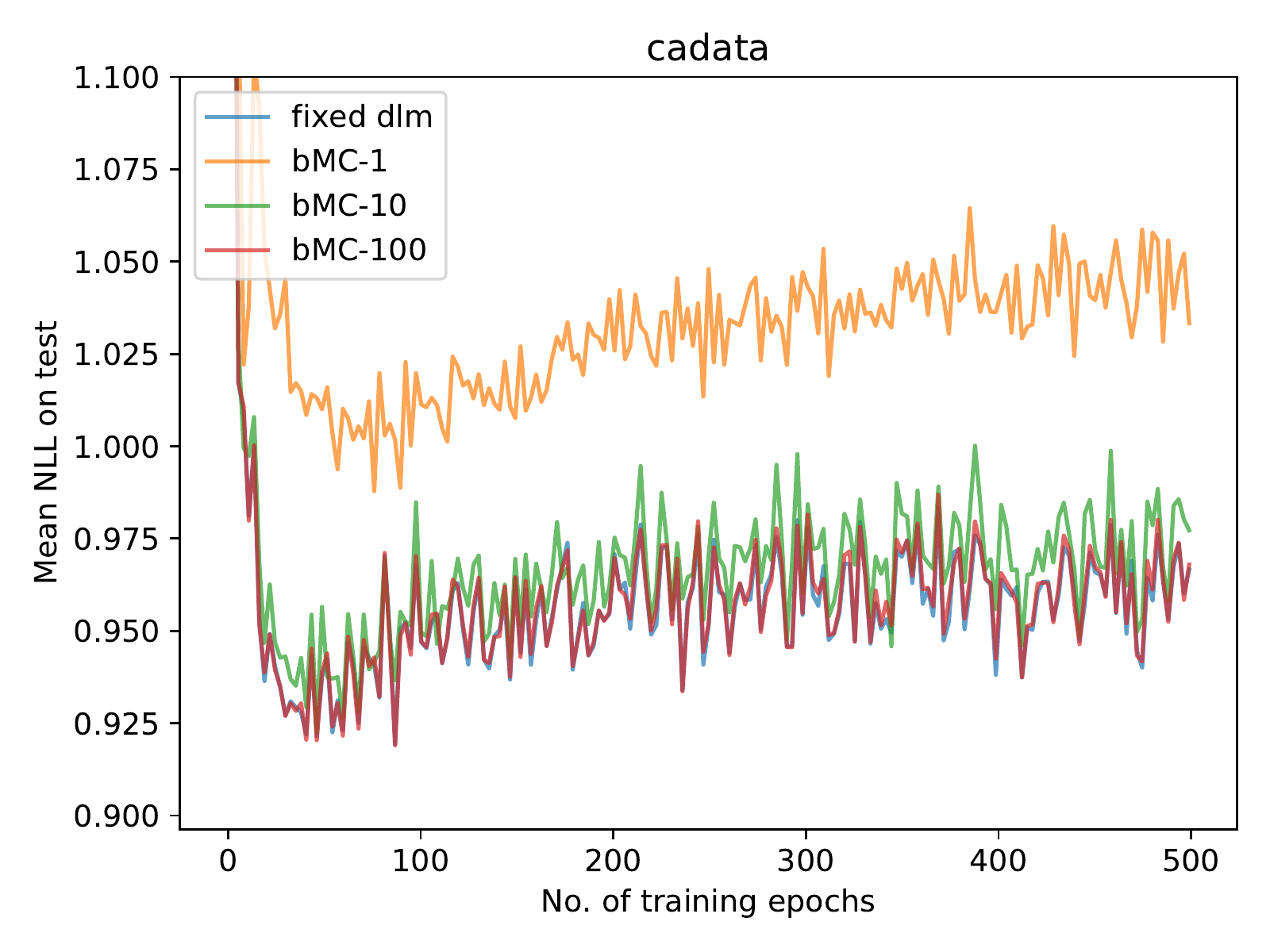}
\end{center}
\end{minipage}\quad
\begin{minipage}{0.433\linewidth}
\begin{center}
    \includegraphics[width=0.99\linewidth]{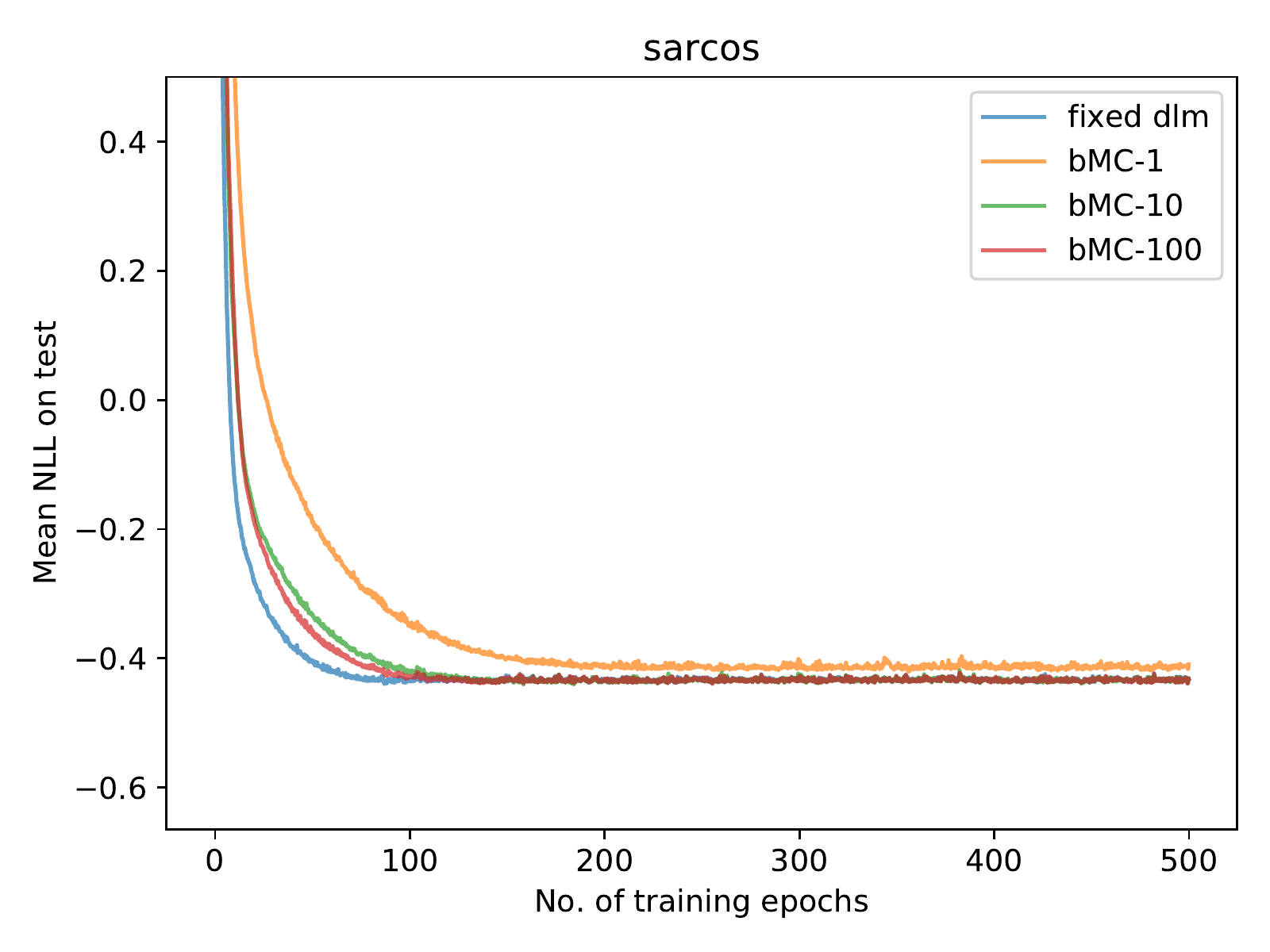}
\end{center}
\end{minipage}\quad
\begin{minipage}{0.433\linewidth}
\begin{center}
    \includegraphics[width=0.99\linewidth]{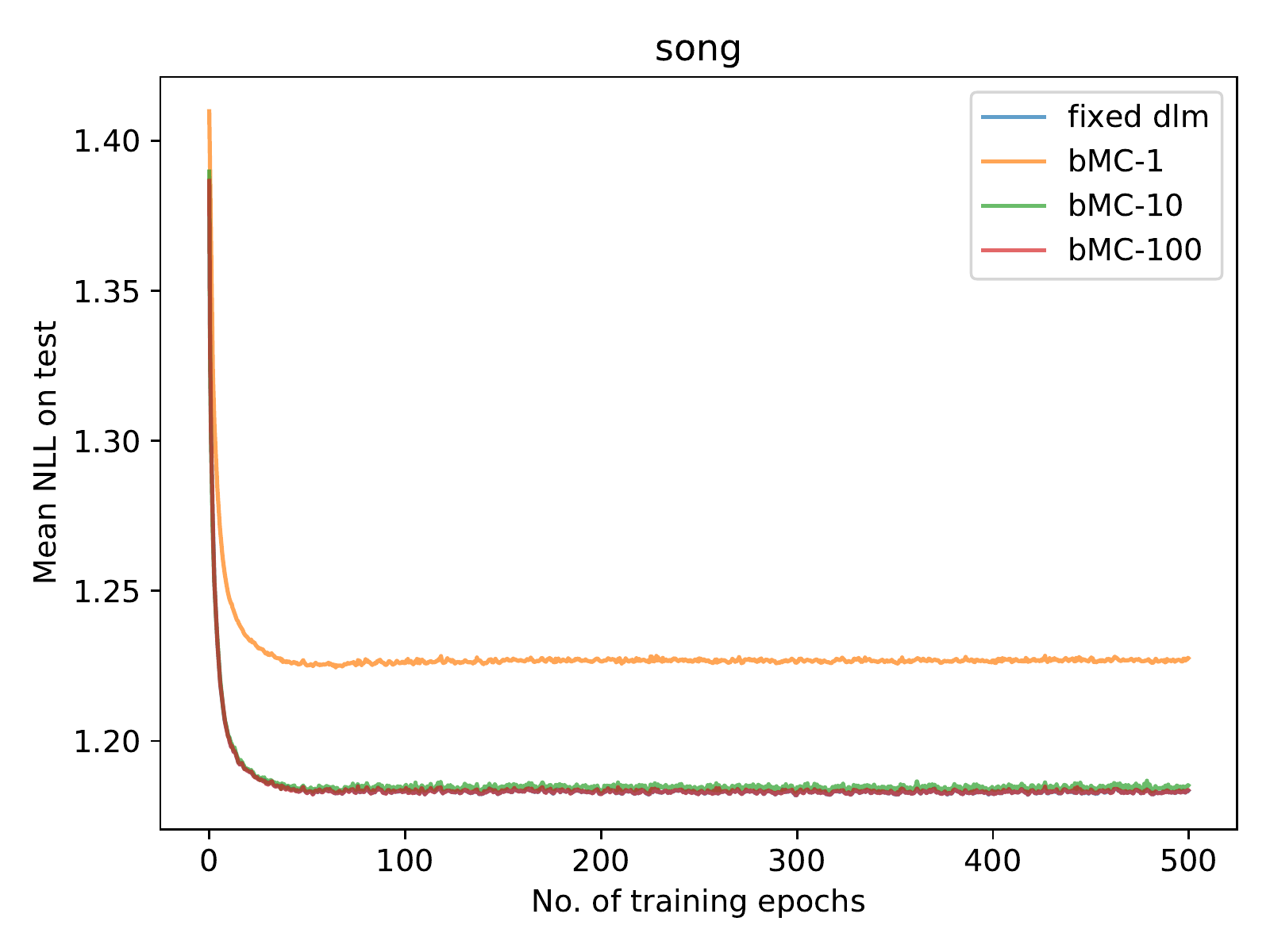}
\end{center}
\end{minipage}
\end{center}
\caption{Comparison of exact and bMC on four datasets for regression when $\beta=0.1$.}
\label{#1}
\end{figure*}

}

\newcommand{\PutPoissonLearningCurve}[1]{
\begin{figure*}[h]
\begin{center}
\begin{minipage}{0.310\linewidth}
\begin{center}
    \includegraphics[width=0.99\linewidth]{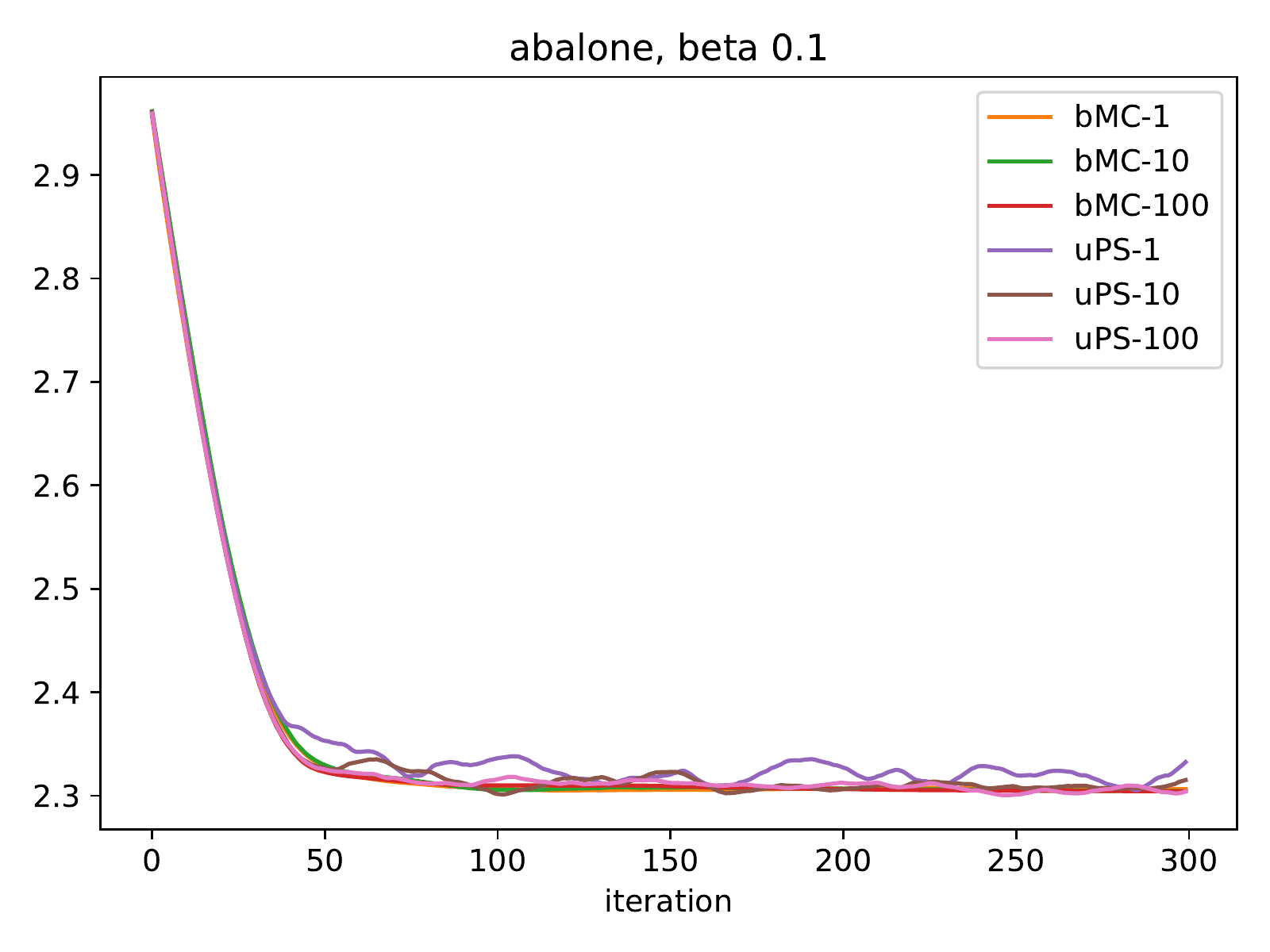}
\end{center}
\end{minipage}\quad
\begin{minipage}{0.310\linewidth}
\begin{center}
    \includegraphics[width=0.99\linewidth]{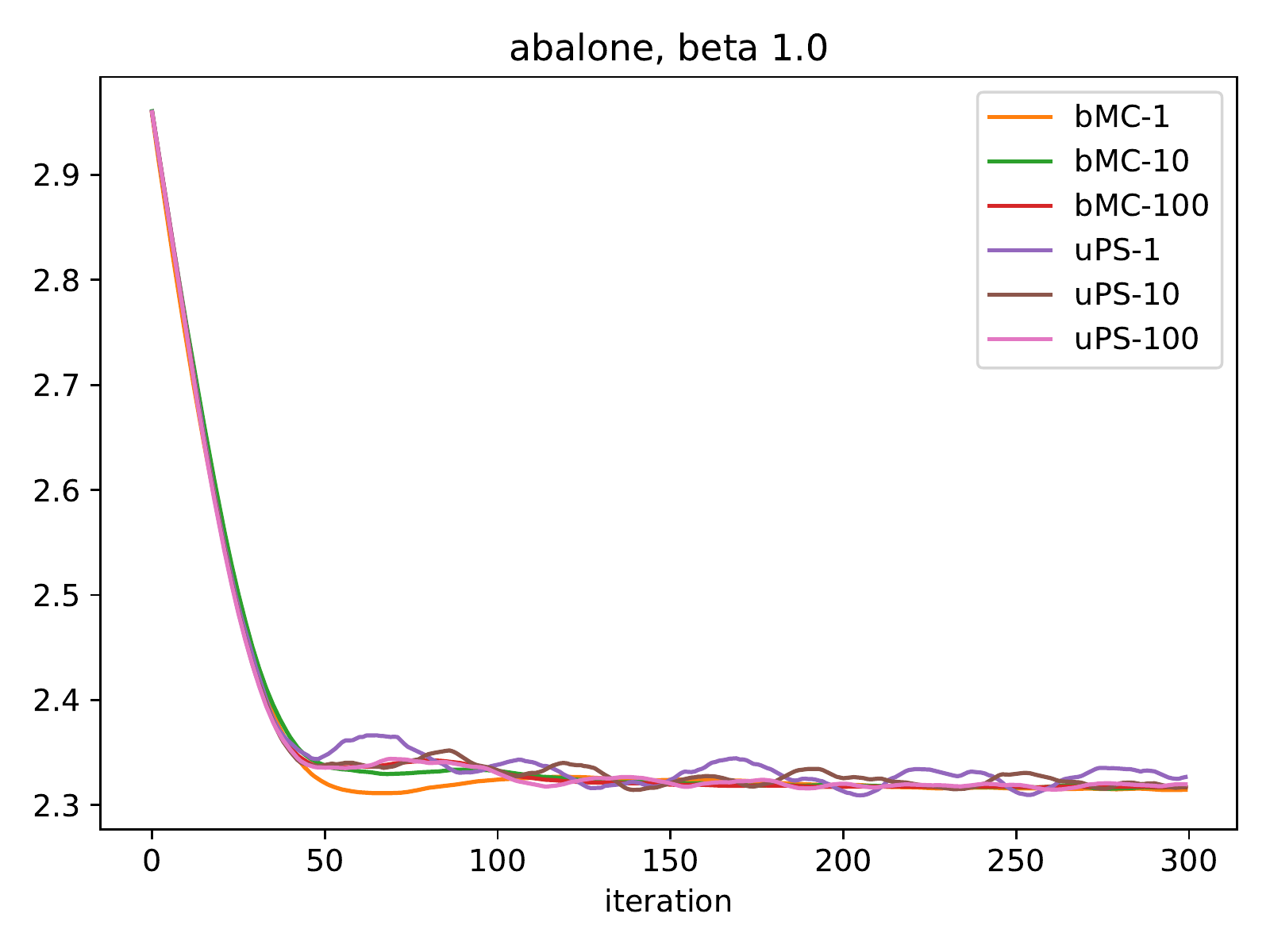}
\end{center}
\end{minipage}\quad
\begin{minipage}{0.310\linewidth}
\begin{center}
\includegraphics[width=0.99\linewidth]{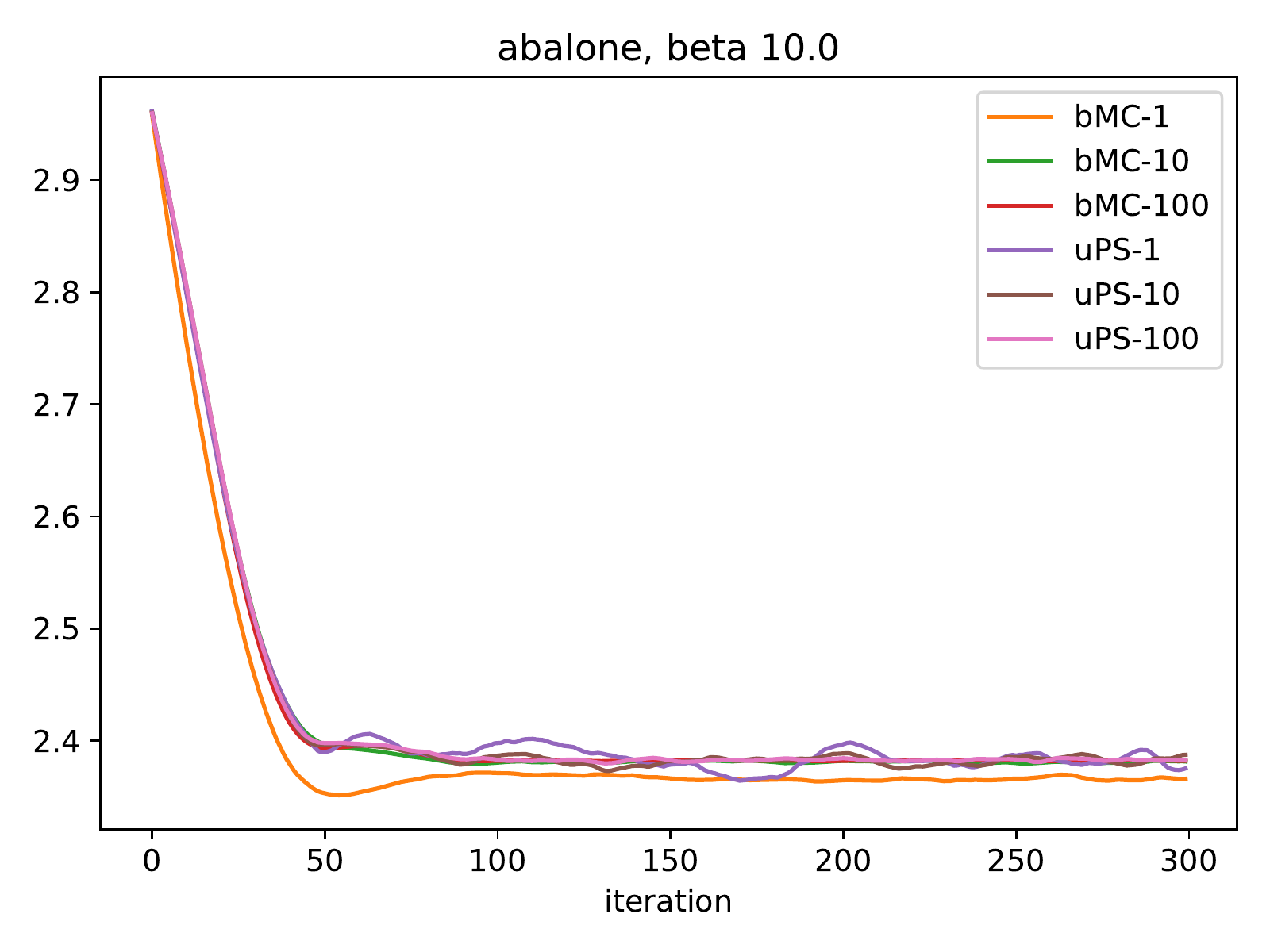}
\end{center}
\end{minipage}
\end{center}
\begin{center}
\begin{minipage}{0.310\linewidth}
\begin{center}
    \includegraphics[width=0.99\linewidth]{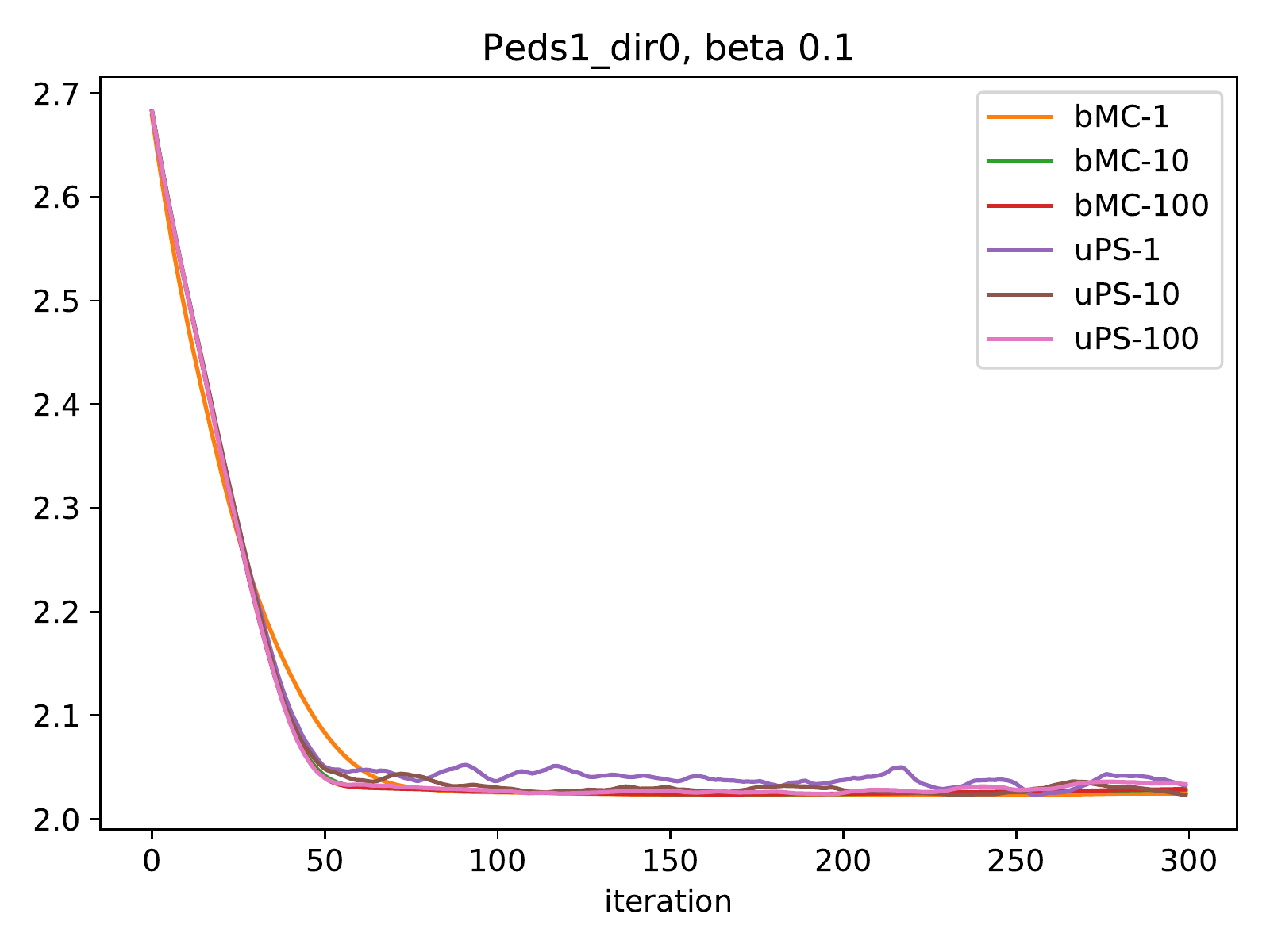}
\end{center}
\end{minipage}\quad
\begin{minipage}{0.310\linewidth}
\begin{center}
    \includegraphics[width=0.99\linewidth]{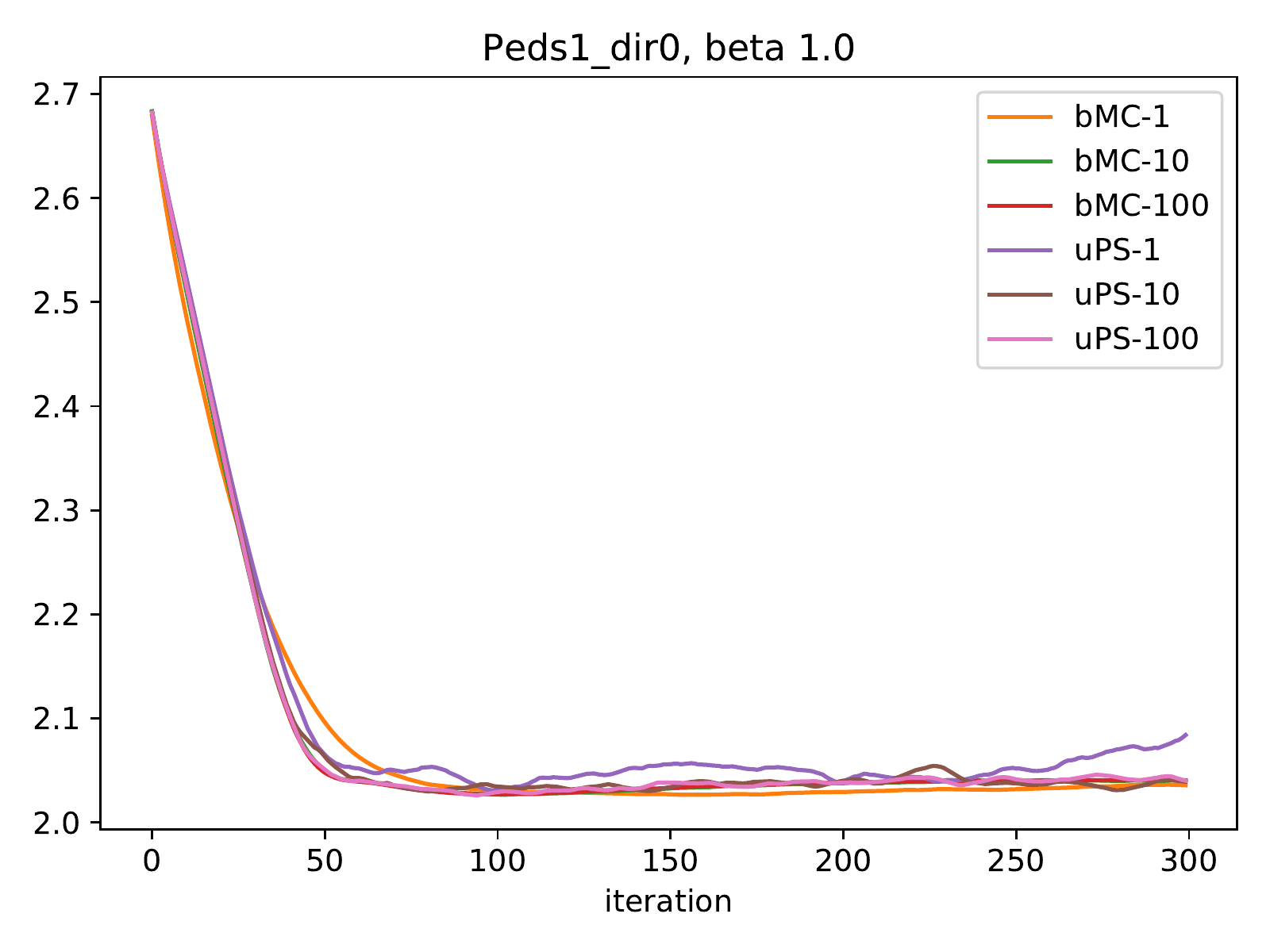}
\end{center}
\end{minipage}\quad
\begin{minipage}{0.310\linewidth}
\begin{center}
\includegraphics[width=0.99\linewidth]{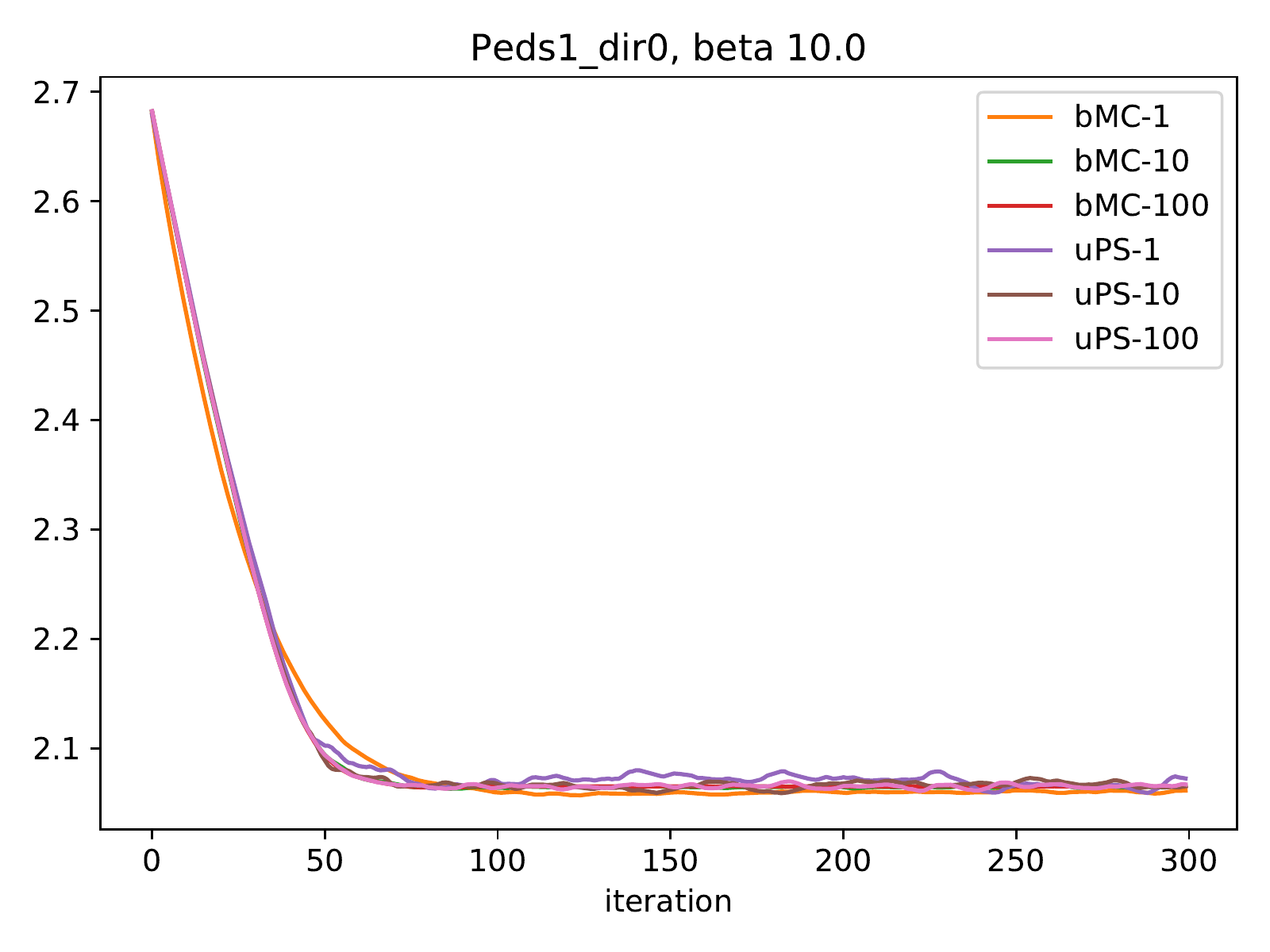}
\end{center}
\end{minipage}
\end{center}
\caption{Comparison of uPS and bMC on two datasets for Count Prediction.}
\label{#1}
\end{figure*}

}

\newcommand{\PutPoissonSmoothBMCLearningCurve}[1]{
\begin{figure*}[h]
\begin{center}
\begin{minipage}{0.310\linewidth}
\begin{center}
    \includegraphics[width=0.99\linewidth]{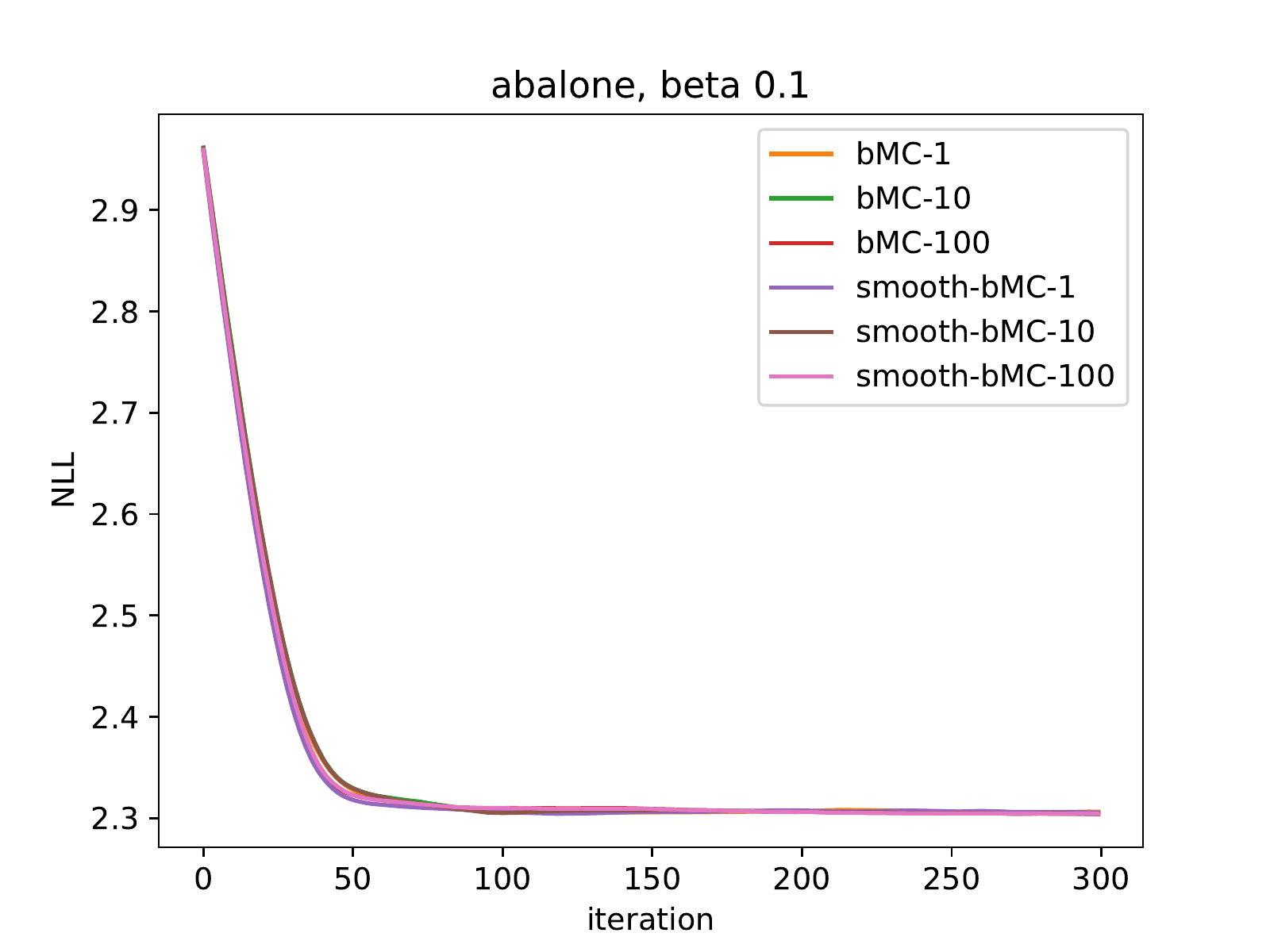}
\end{center}
\end{minipage}\quad
\begin{minipage}{0.310\linewidth}
\begin{center}
    \includegraphics[width=0.99\linewidth]{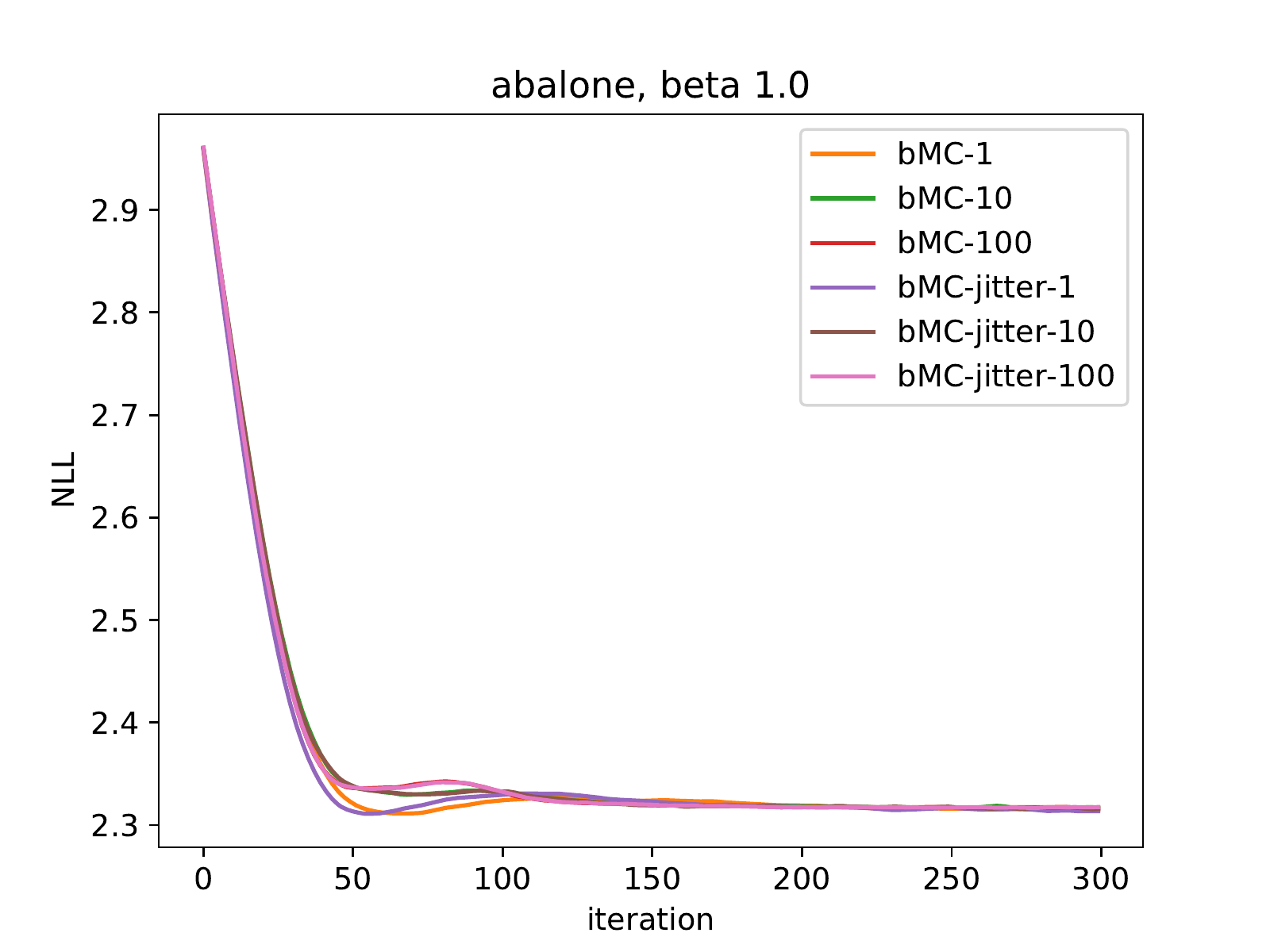}
\end{center}
\end{minipage}\quad
\begin{minipage}{0.310\linewidth}
\begin{center}
\includegraphics[width=0.99\linewidth]{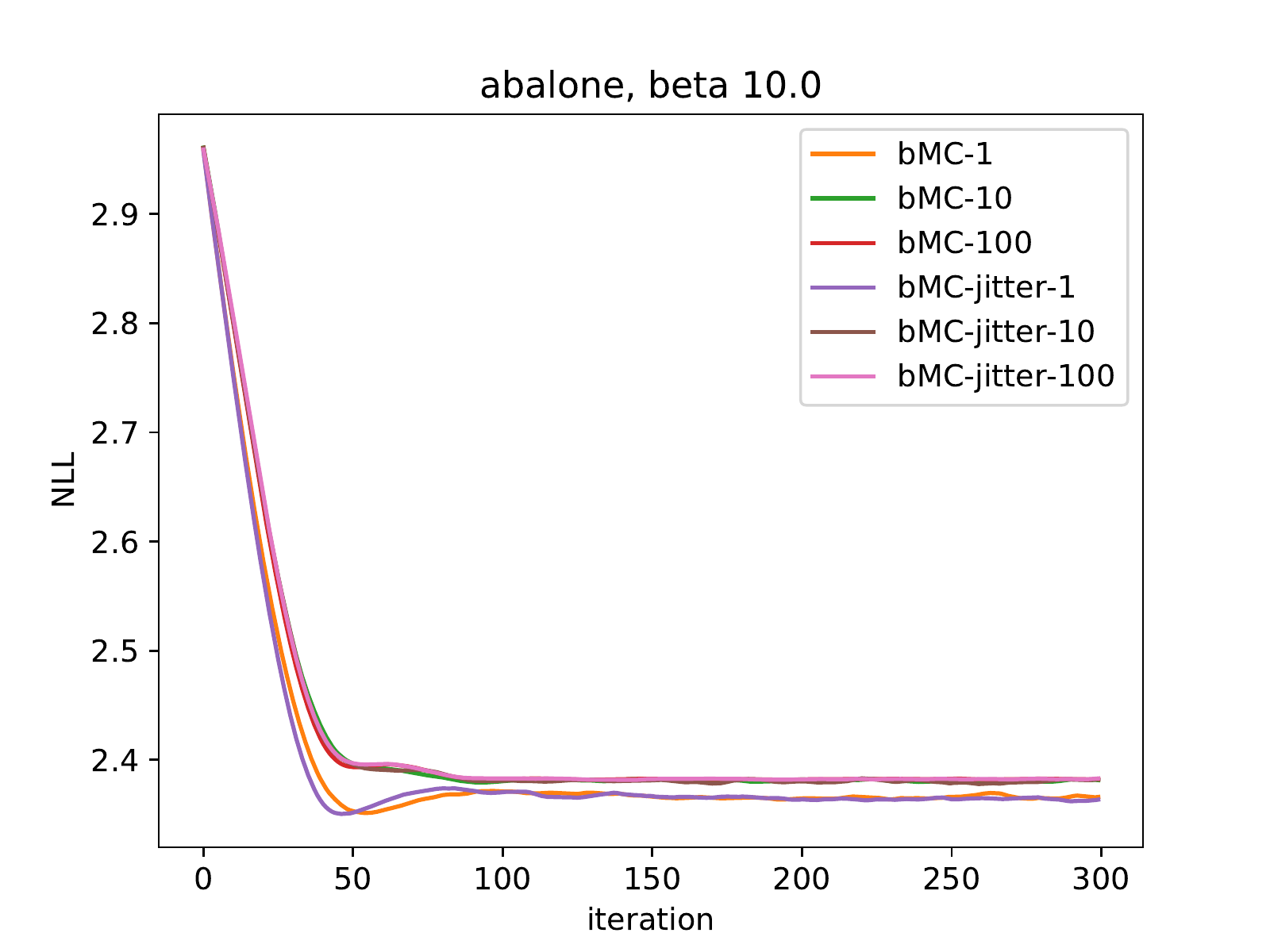}
\end{center}
\end{minipage}
\end{center}
\begin{center}
\begin{minipage}{0.310\linewidth}
\begin{center}
    \includegraphics[width=0.99\linewidth]{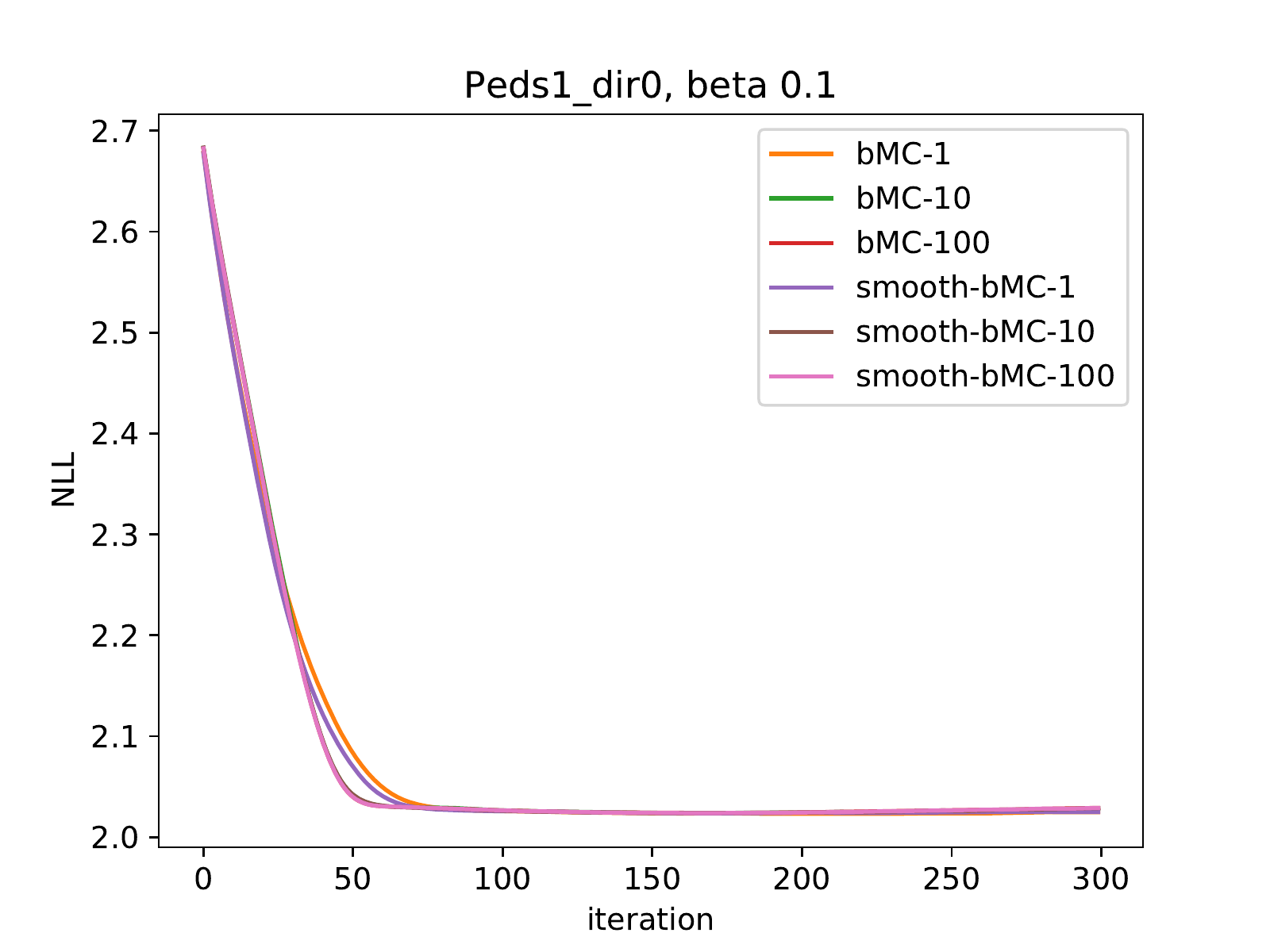}
\end{center}
\end{minipage}\quad
\begin{minipage}{0.310\linewidth}
\begin{center}
    \includegraphics[width=0.99\linewidth]{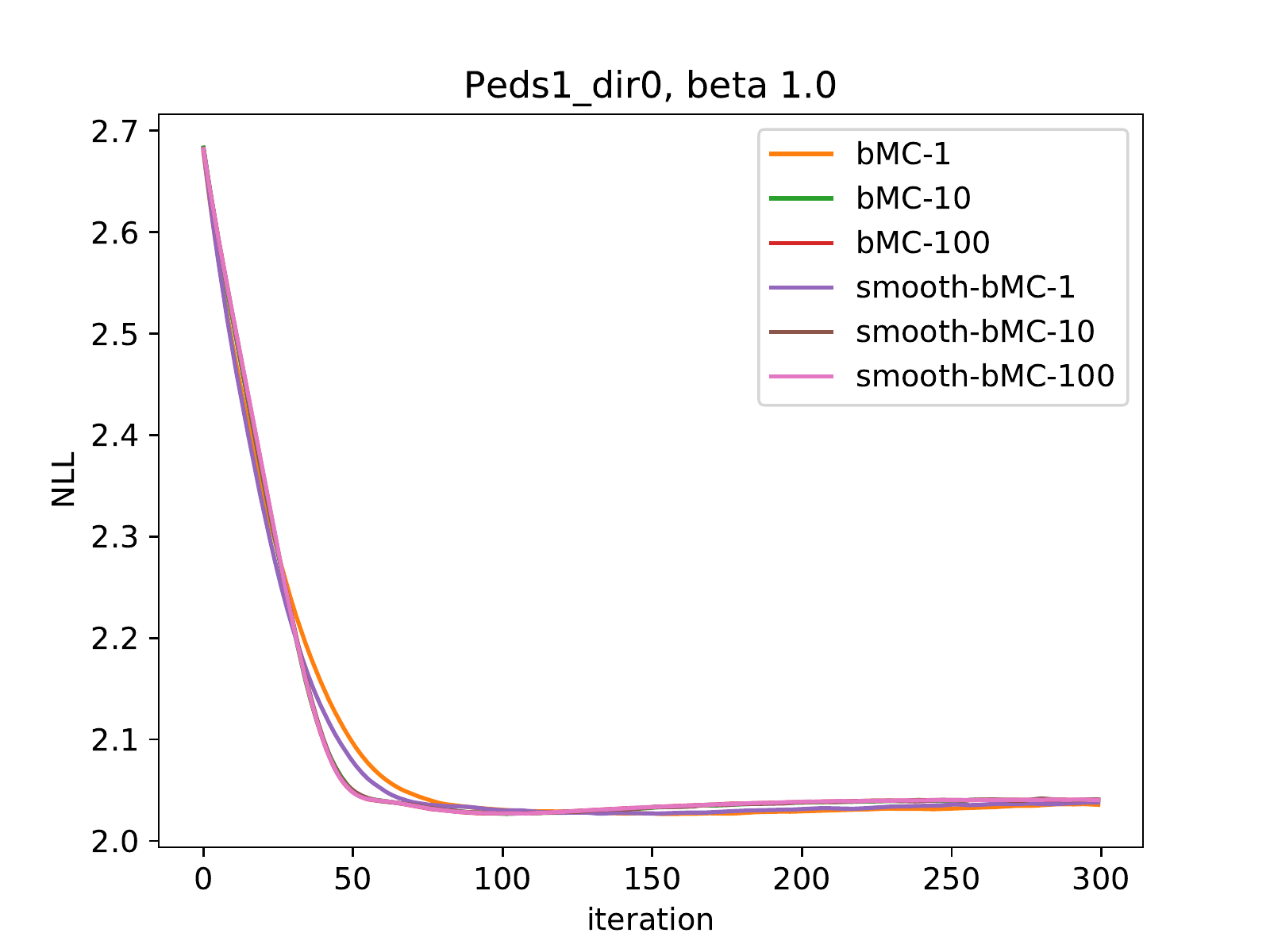}
\end{center}
\end{minipage}\quad
\begin{minipage}{0.310\linewidth}
\begin{center}
\includegraphics[width=0.99\linewidth]{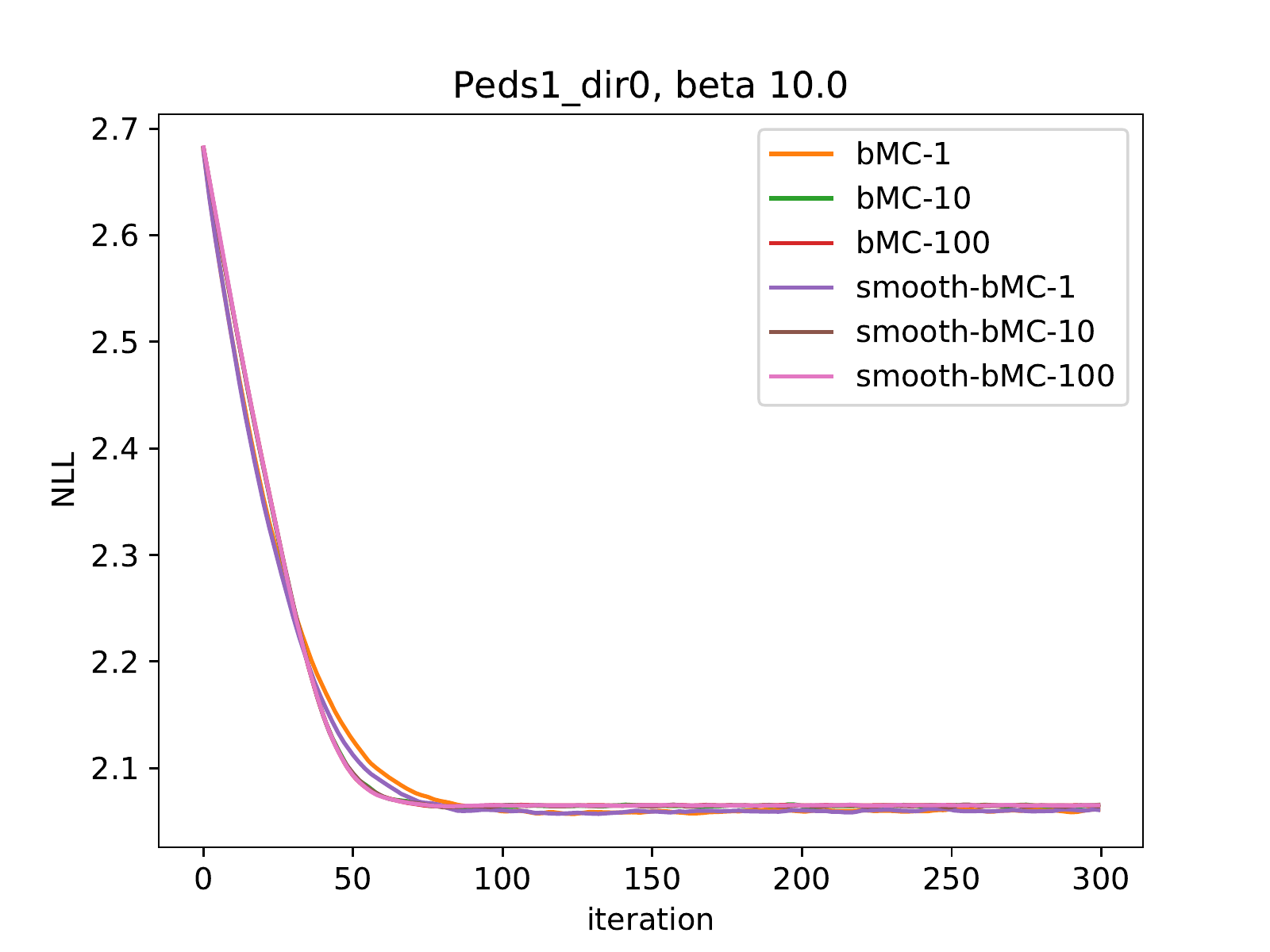}
\end{center}
\end{minipage}
\end{center}
\caption{Comparison of bMC and smooth-bMC on two datasets for Count Prediction.}
\label{#1}
\end{figure*}

}
\newcommand{\PutPoissonGradientsMean}[1]{
\begin{figure*}[h]

\begin{center}
\begin{minipage}{0.31\linewidth}
\begin{center}
    \includegraphics[width=0.99\linewidth]{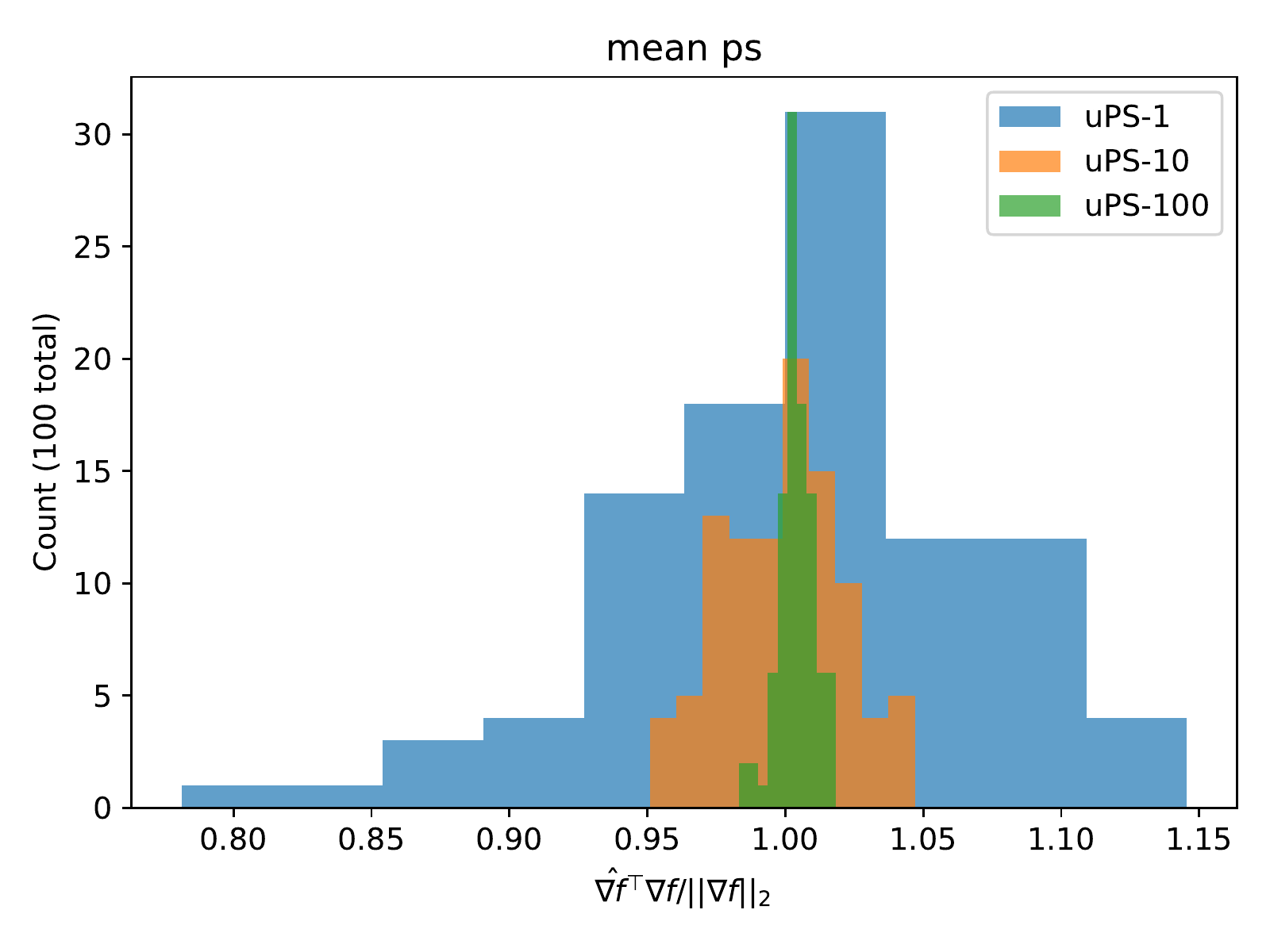}
\end{center}
\end{minipage}\quad
\begin{minipage}{0.31\linewidth}
\begin{center}
    \includegraphics[width=0.99\linewidth]{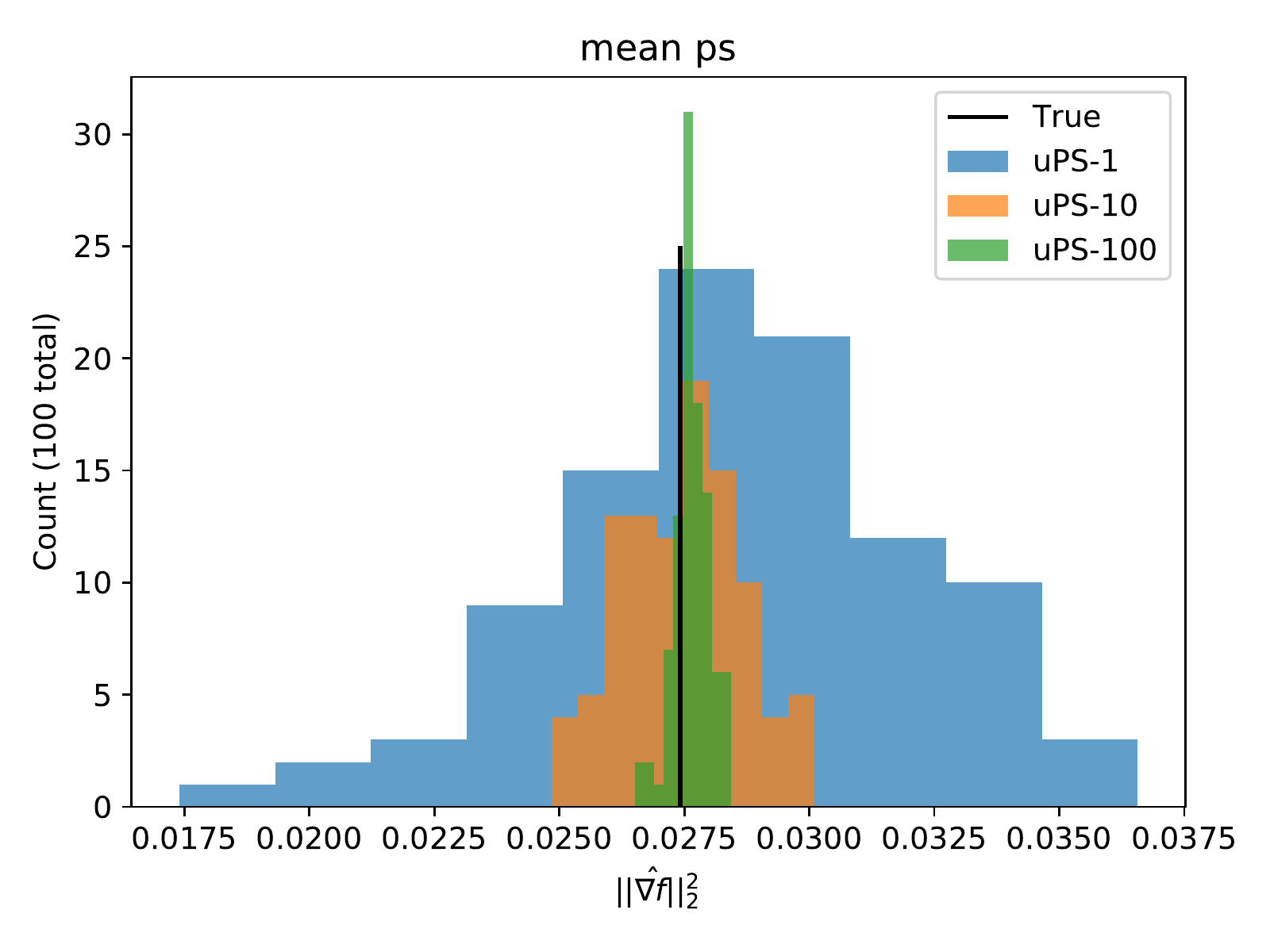}
\end{center}
\end{minipage}\quad
\begin{minipage}{0.31\linewidth}
\begin{center}
\includegraphics[width=0.99\linewidth]{figs/err-abalone-ps-m.pdf}
\end{center}
\end{minipage}
\end{center}

\begin{center}
\begin{minipage}{0.31\linewidth}
\begin{center}
    \includegraphics[width=0.99\linewidth]{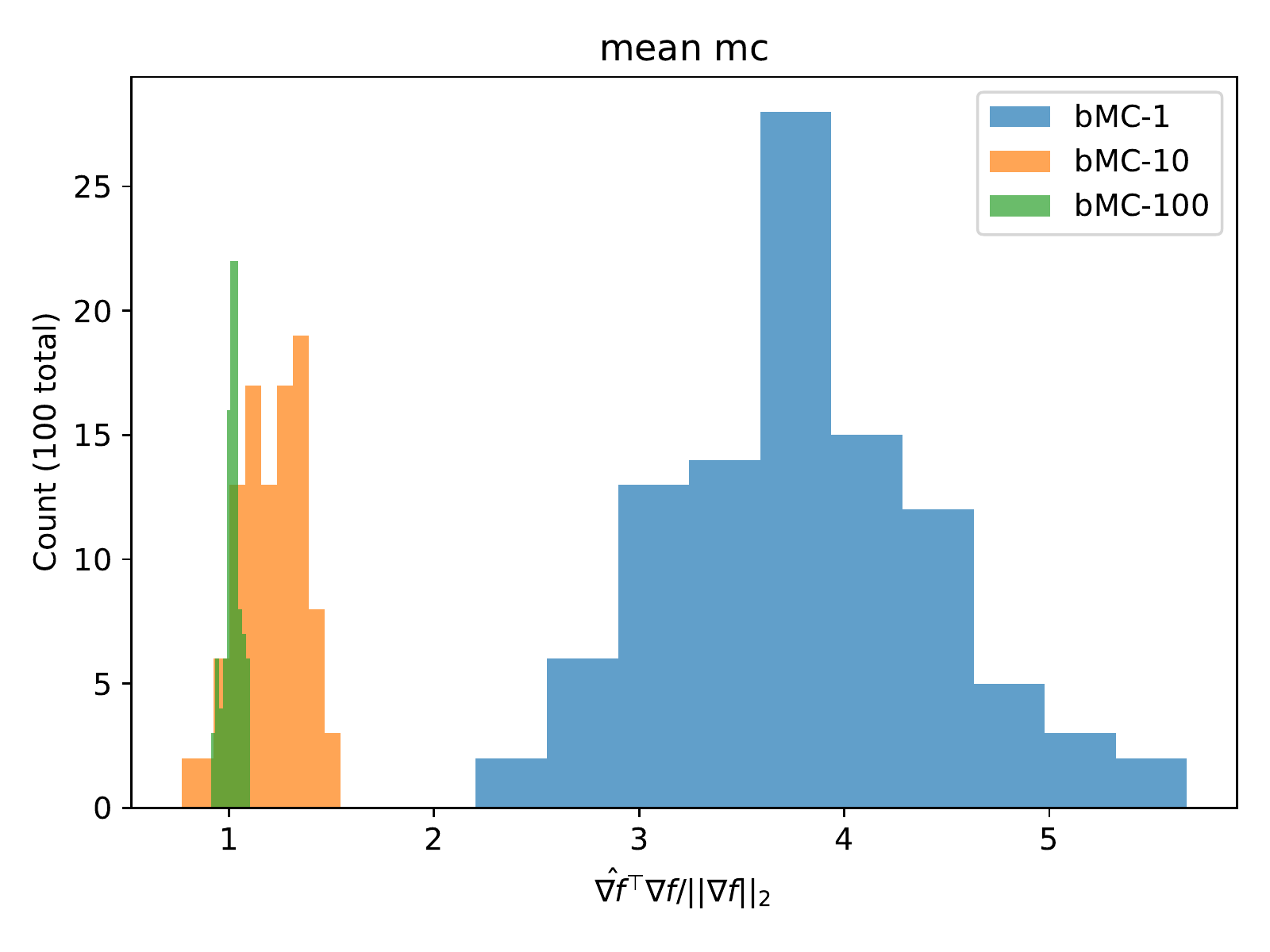}
\end{center}
\end{minipage}\quad
\begin{minipage}{0.31\linewidth}
\begin{center}
    \includegraphics[width=0.99\linewidth]{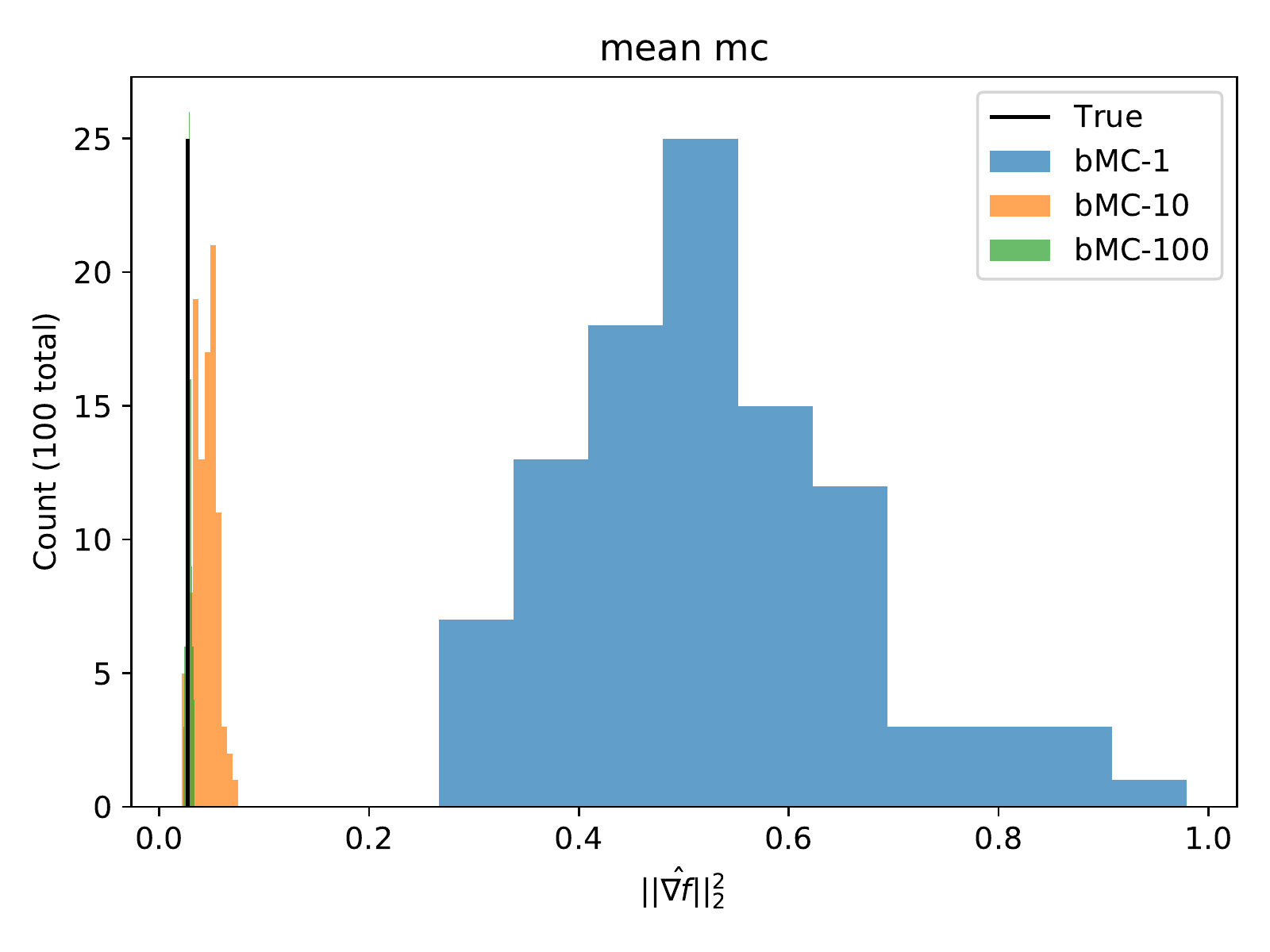}
\end{center}
\end{minipage}\quad
\begin{minipage}{0.31\linewidth}
\begin{center}
\includegraphics[width=0.99\linewidth]{figs/err-abalone-mc-m.pdf}
\end{center}
\end{minipage}
\end{center}

\begin{center}
\begin{minipage}{0.31\linewidth}
\begin{center}
    \includegraphics[width=0.99\linewidth]{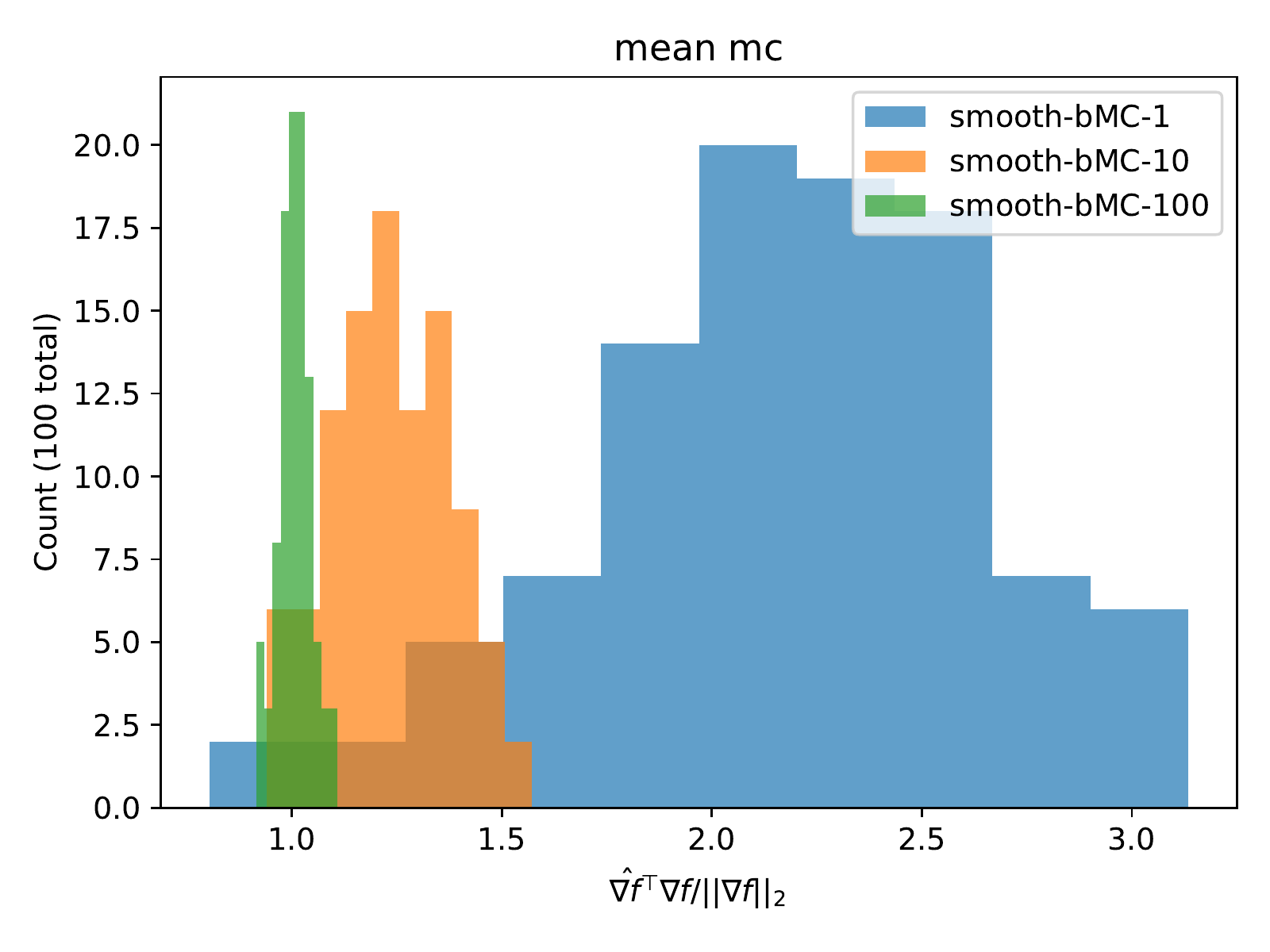}
\end{center}
\end{minipage}\quad
\begin{minipage}{0.31\linewidth}
\begin{center}
    \includegraphics[width=0.99\linewidth]{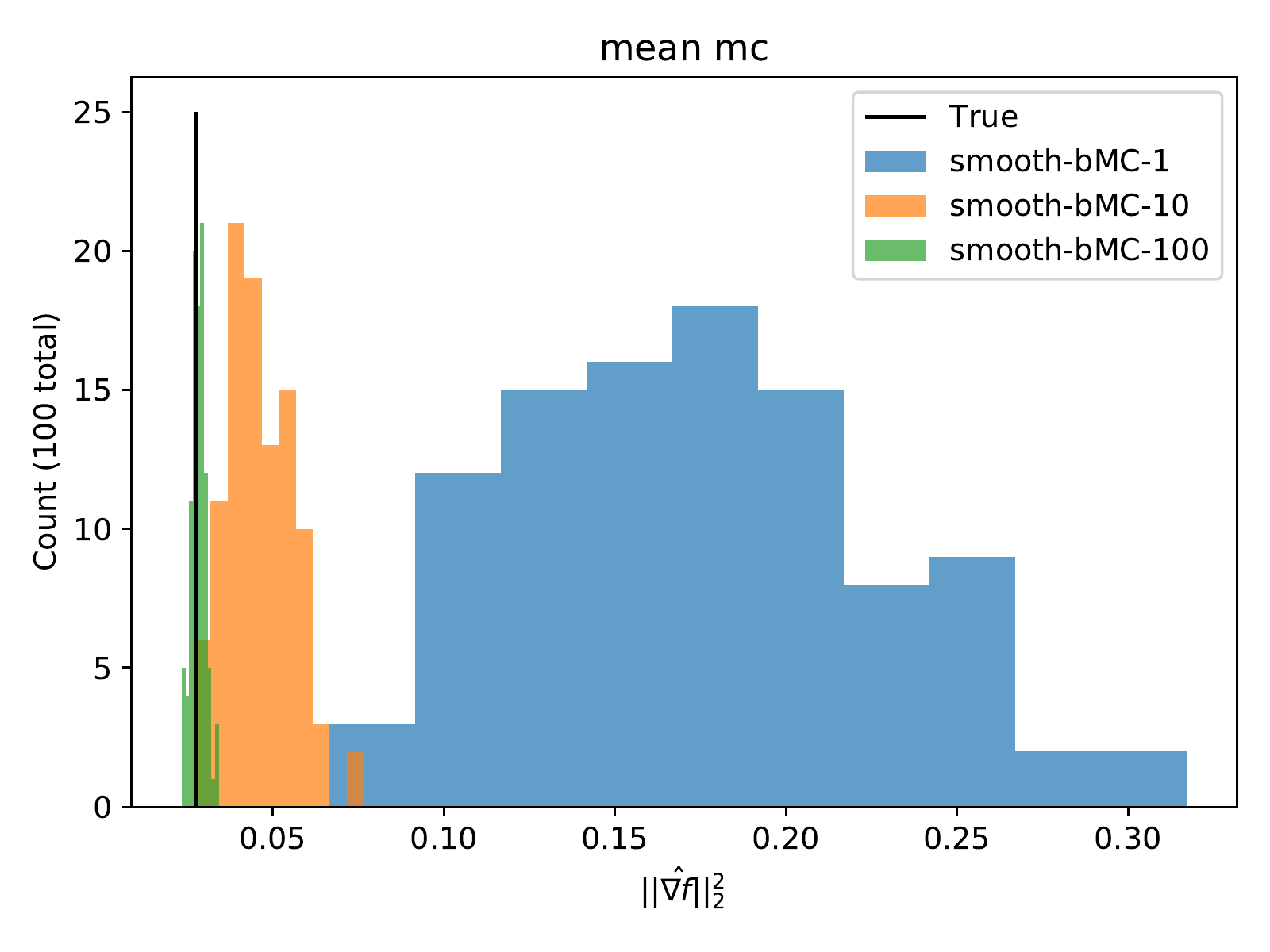}
\end{center}
\end{minipage}\quad
\begin{minipage}{0.31\linewidth}
\begin{center}
\includegraphics[width=0.99\linewidth]{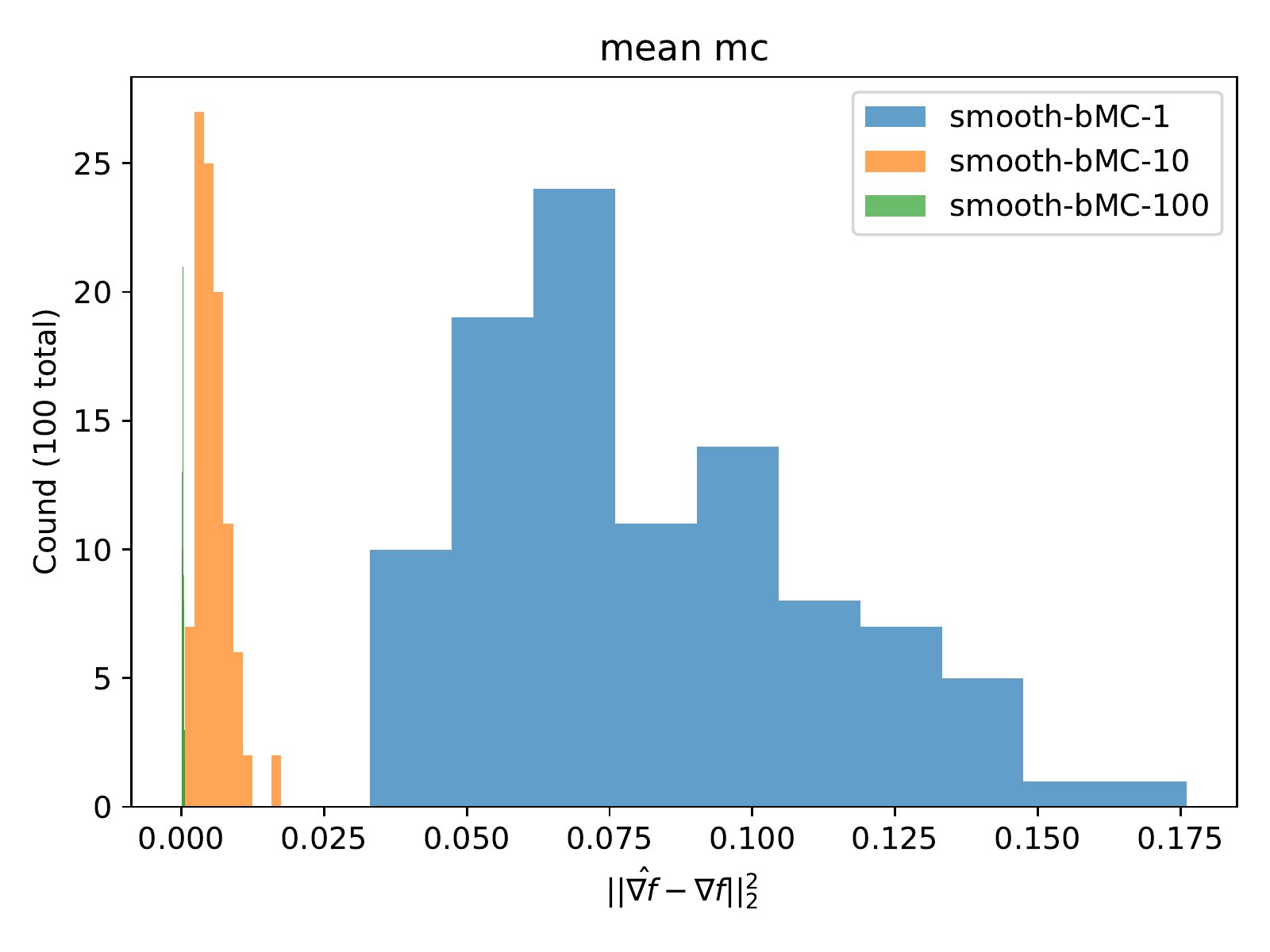}
\end{center}
\end{minipage}
\end{center}

\caption{Statistics for calculation of biased gradients for the mean parameter for Count prediction in the Abalone dataset.
First row uPS, second row bMC, and third row smooth-bMC($\nu=10^{-4}$).
Left: condition (i). Middle: condition (ii). Right: estimate of bias. Exact gradients estimated from 10000 bMC samples.}
\label{#1}
\end{figure*}
}

\newcommand{\PutAirlineA}[1]{
\begin{figure*}[h]
\begin{center}
\begin{minipage}{0.433\linewidth}
\begin{center}
    \includegraphics[width=0.99\linewidth]{figs/airline-20-logloss.pdf}
\end{center}
\end{minipage}\quad
\begin{minipage}{0.433\linewidth}
\begin{center}
\includegraphics[width=0.99\linewidth]{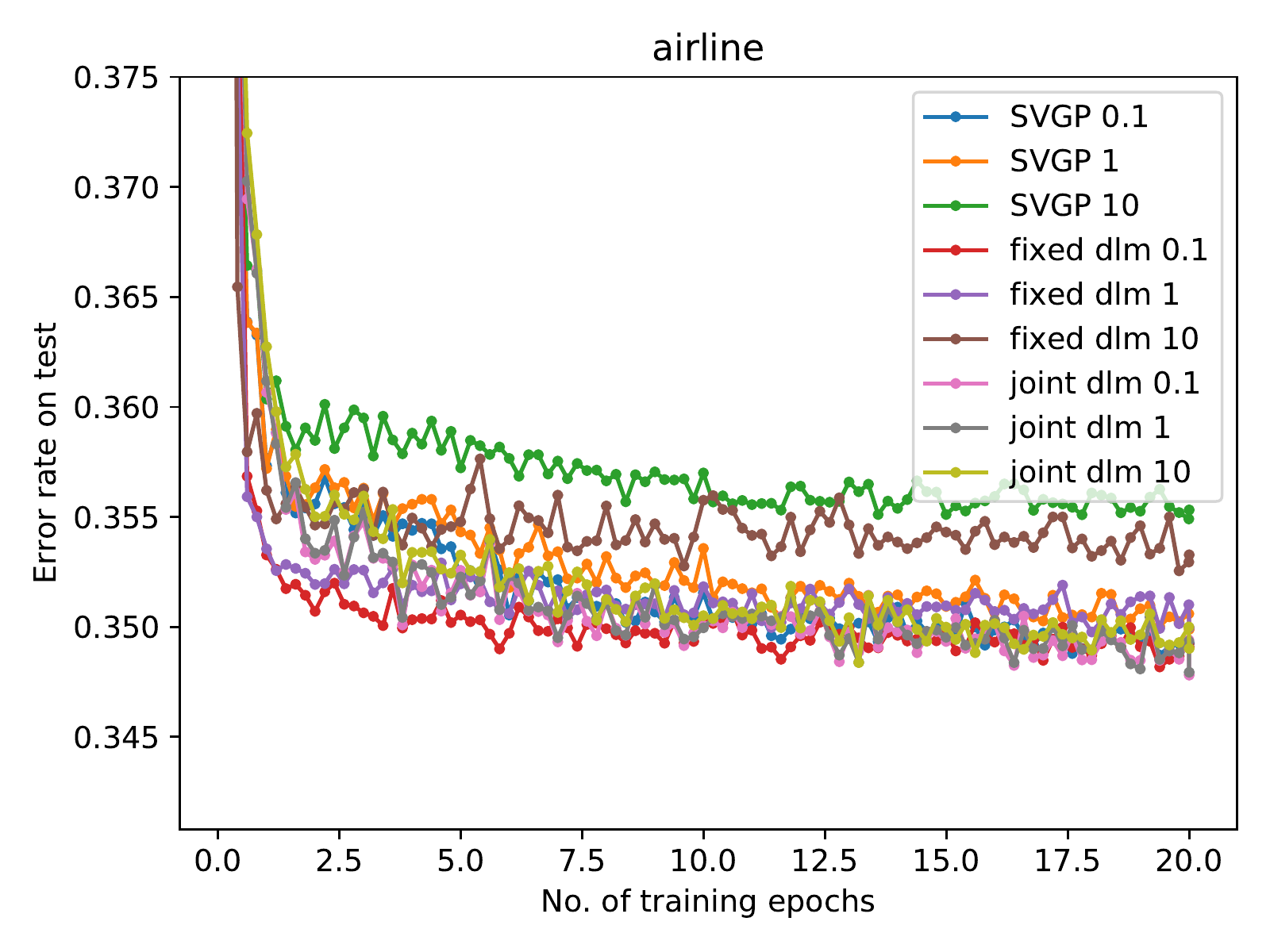}
\end{center}
\end{minipage}
\end{center}
\caption{Comparison of SVGP and DLM with exact gradients on the binary classification airline dataset. On the left is mean NLL and on the right is mean error. In both plots, lower values imply better performance.}
\label{#1}
\end{figure*}

}

\newcommand{\PutAirlineBA}[1]{
\begin{figure*}[h]
\begin{center}
\begin{minipage}{0.433\linewidth}
\begin{center}
    \includegraphics[width=0.99\linewidth]{figs/airline-20-dlm_ps-beta0.1-logloss.pdf}
\end{center}
\end{minipage}\quad
\begin{minipage}{0.433\linewidth}
\begin{center}
\includegraphics[width=0.99\linewidth]{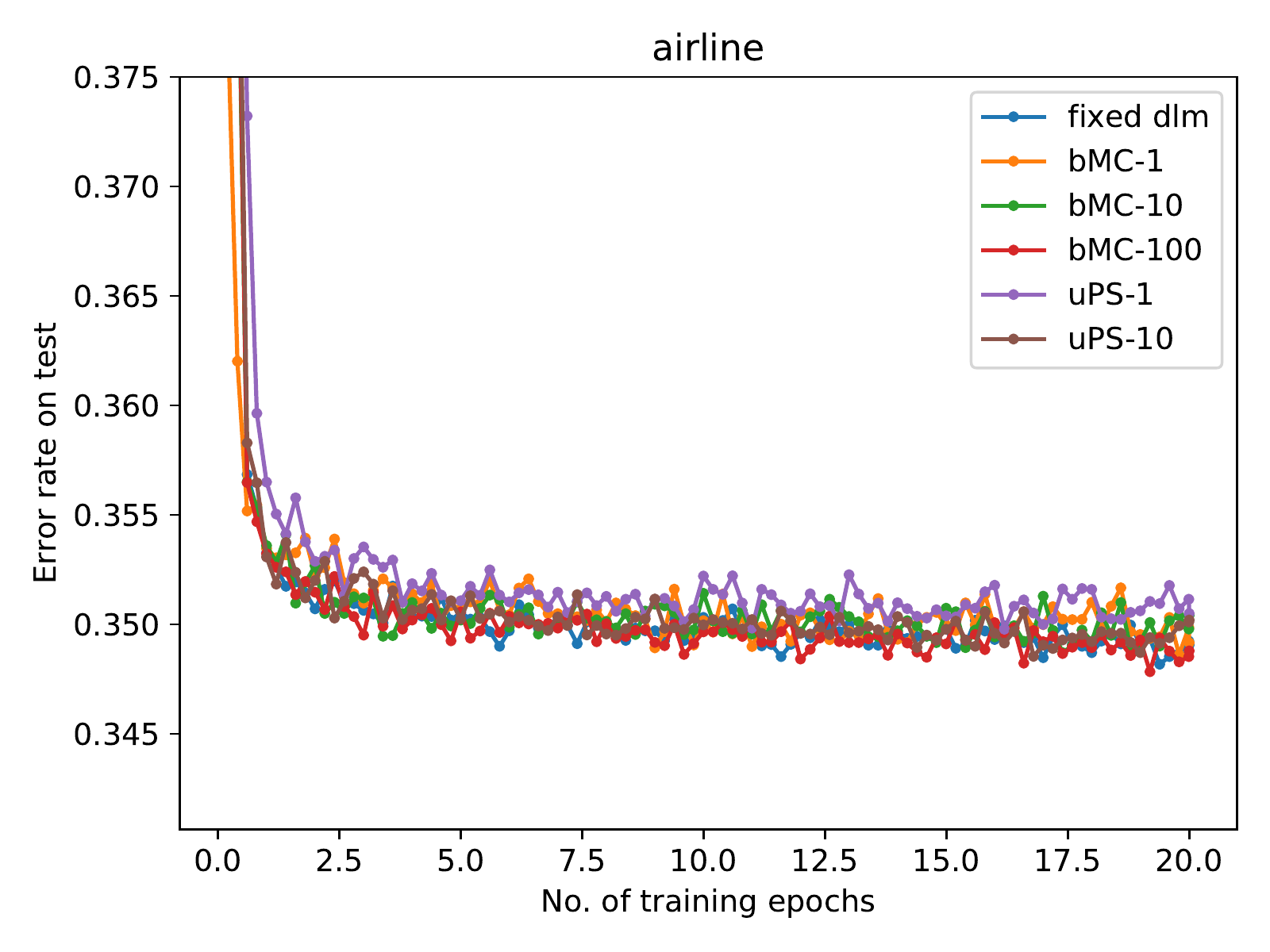}
\end{center}
\end{minipage}
\end{center}
\caption{Comparison of DLM with exact gradients, bMC gradients and uPS gradients with $\beta=0.1$ on the binary classification airline dataset. On the left is mean NLL and on the right is mean error. In both plots, lower values imply better performance.}
\label{#1}
\end{figure*}

}

\newcommand{\PutAirlineBB}[1]{
\begin{figure*}[h]
\begin{center}
\begin{minipage}{0.433\linewidth}
\begin{center}
    \includegraphics[width=0.99\linewidth]{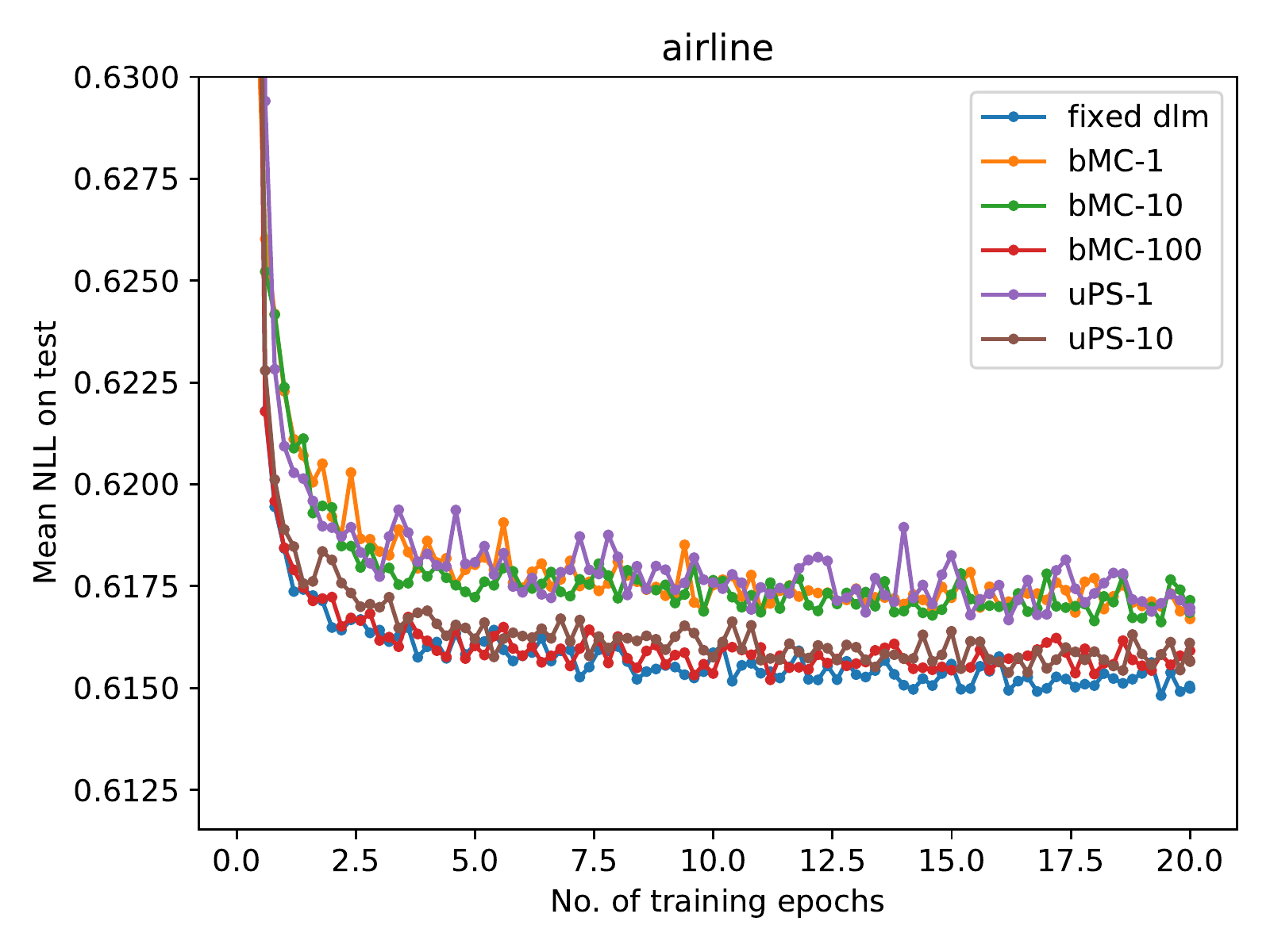}
\end{center}
\end{minipage}\quad
\begin{minipage}{0.433\linewidth}
\begin{center}
\includegraphics[width=0.99\linewidth]{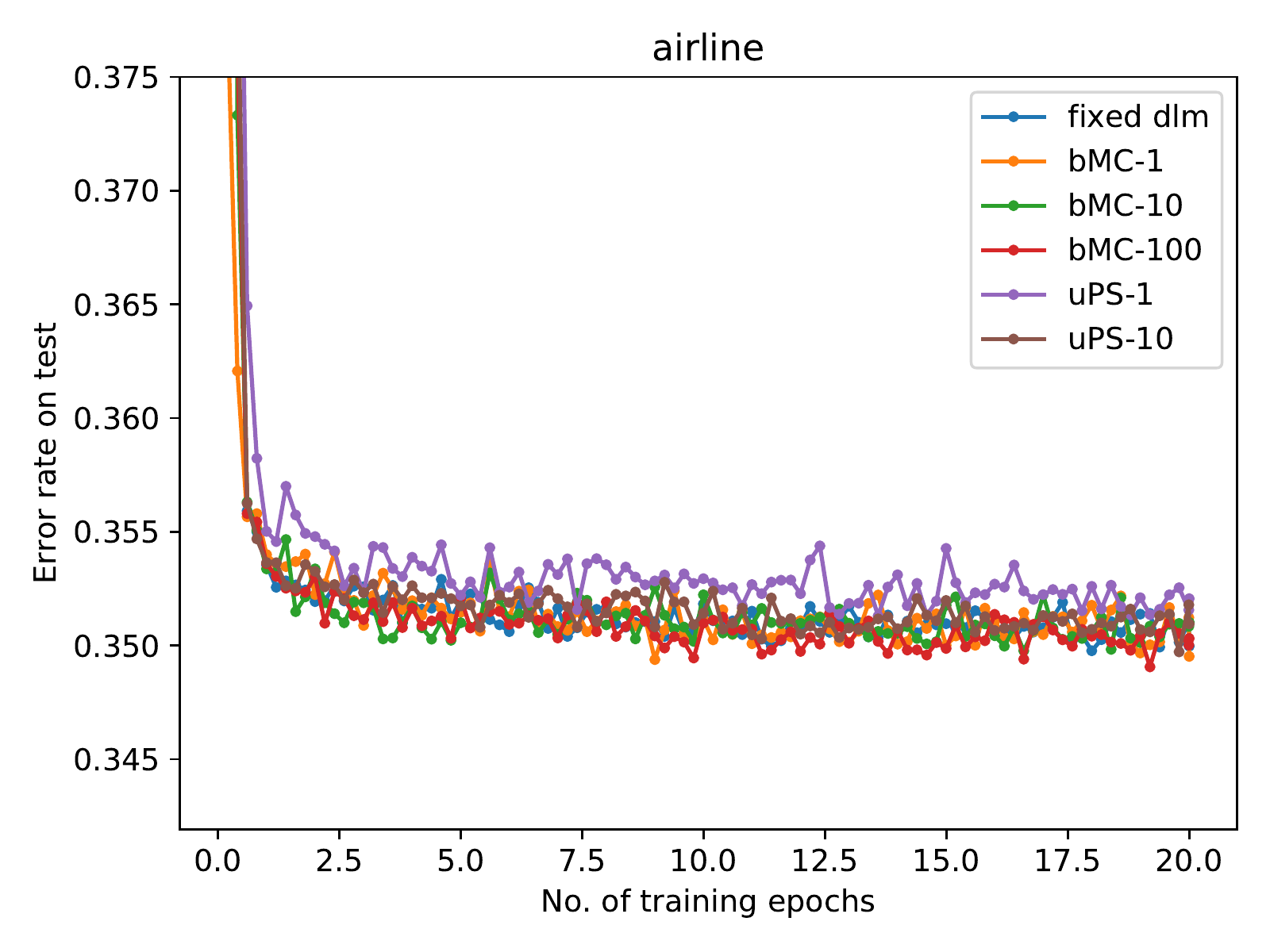}
\end{center}
\end{minipage}
\end{center}
\caption{Comparison of DLM with exact gradients, bMC gradients and uPS gradients with $\beta=1$ on the binary classification airline dataset. On the left is mean NLL and on the right is mean error. In both plots, lower values imply better performance.}
\label{#1}
\end{figure*}

}

\newcommand{\PutAirlineBC}[1]{
\begin{figure*}[h]
\begin{center}
\begin{minipage}{0.433\linewidth}
\begin{center}
    \includegraphics[width=0.99\linewidth]{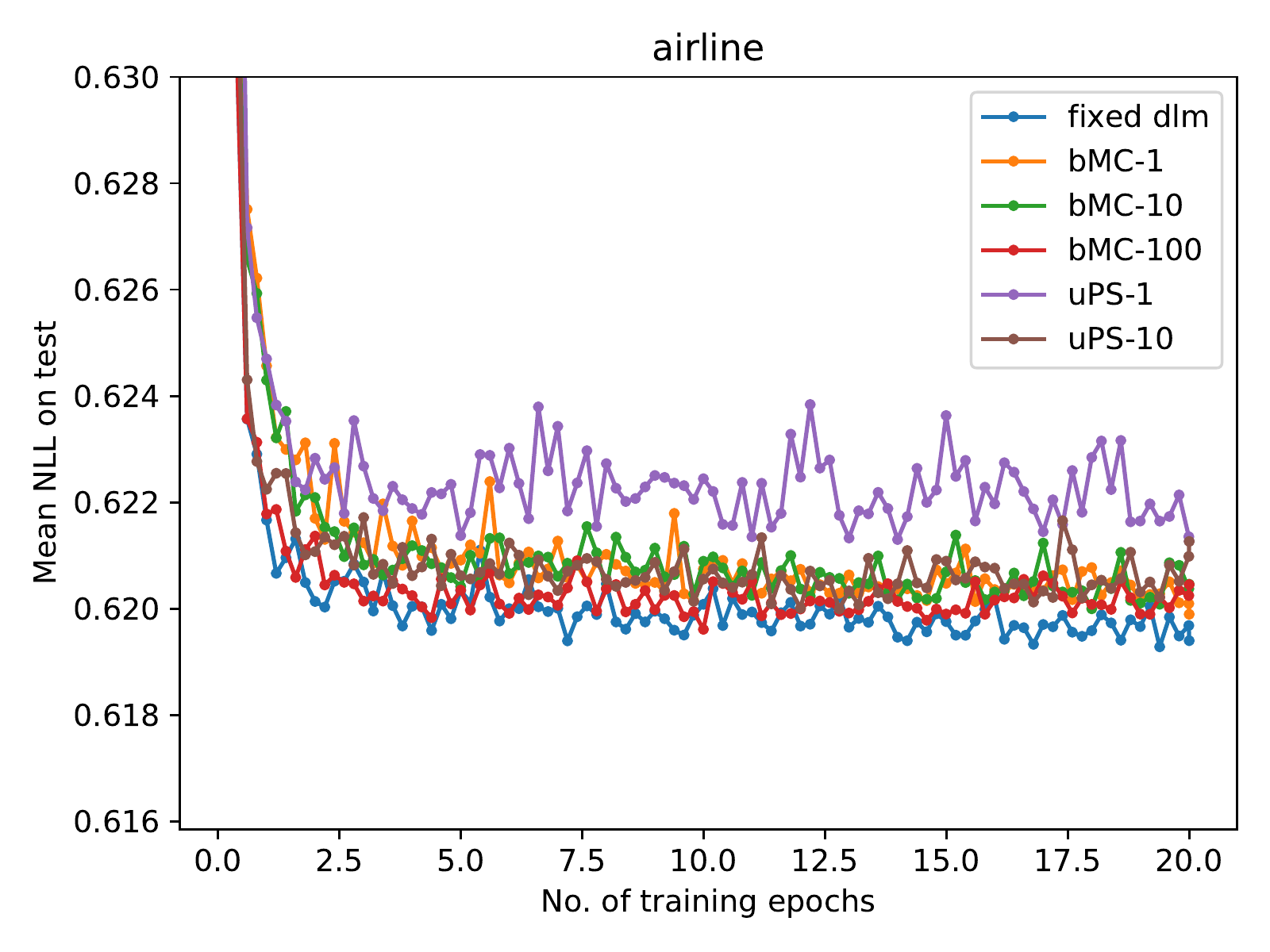}
\end{center}
\end{minipage}\quad
\begin{minipage}{0.433\linewidth}
\begin{center}
\includegraphics[width=0.99\linewidth]{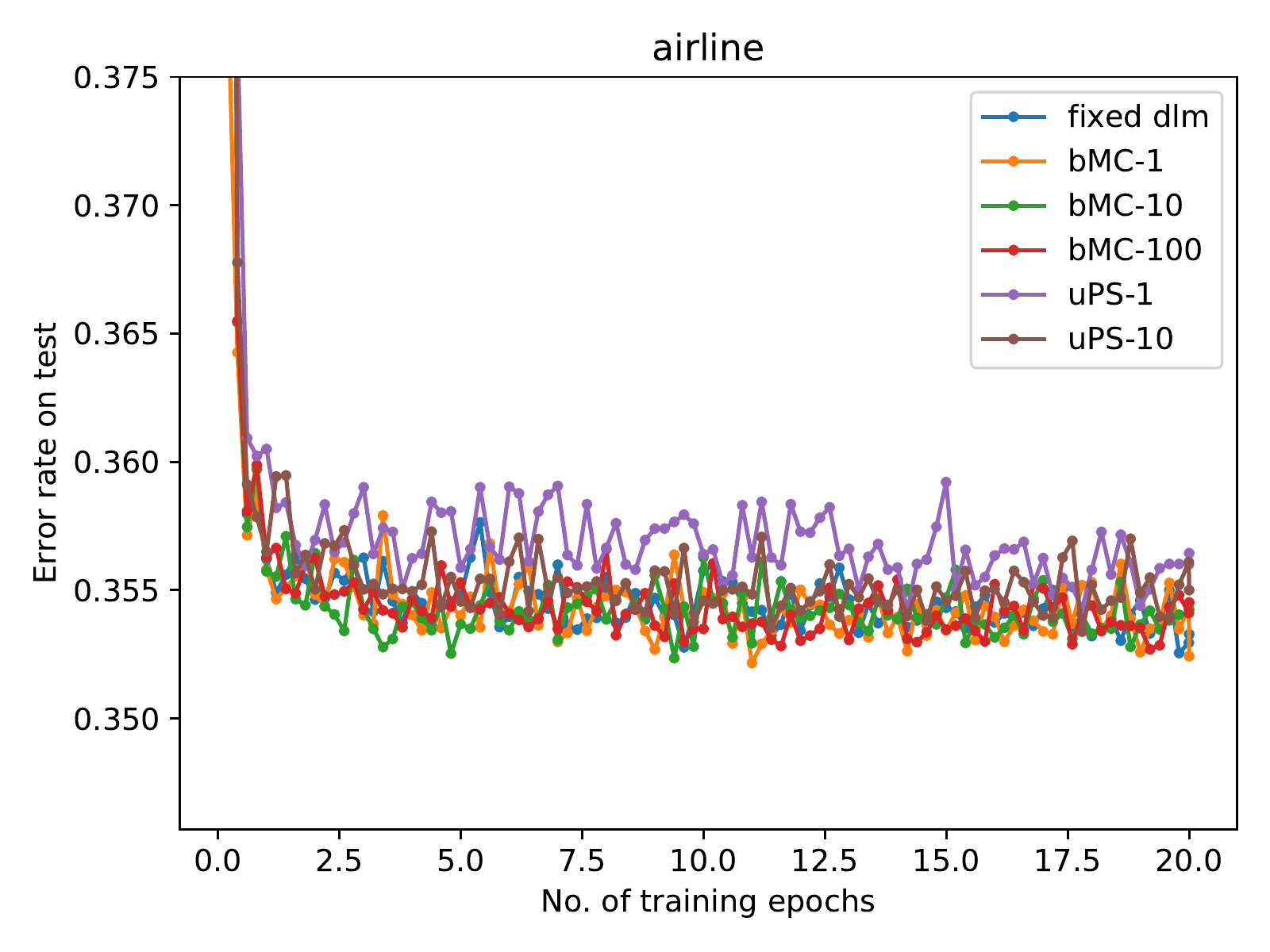}
\end{center}
\end{minipage}
\end{center}
\caption{Comparison of DLM with exact gradients, bMC gradients and uPS gradients with $\beta=10$ on the binary classification airline dataset. On the left is mean NLL and on the right is mean error. In both plots, lower values imply better performance.}
\label{#1}
\end{figure*}

}

\newcommand{\PutAirlineMain}[1]{
\begin{figure*}[h]
\begin{center}
\begin{minipage}{0.433\linewidth}
\begin{center}
    \includegraphics[width=0.99\linewidth]{figs/airline-logloss.pdf}
\end{center}
\end{minipage}\quad
\begin{minipage}{0.433\linewidth}
\begin{center}
\includegraphics[width=0.99\linewidth]{figs/airline-mis.pdf}
\end{center}
\end{minipage}
\end{center}
\caption{Comparison of SVGP and DLM on the binary classification airline dataset. On the left is negative log predictive probability and on the right is error rate. In both plots, lower values imply better performance.}
\label{#1}
\end{figure*}

}

\newcommand{\PutAirlineMainMCA}[1]{
\begin{figure*}[h]
\begin{center}
\begin{minipage}{0.433\linewidth}
\begin{center}
    \includegraphics[width=0.99\linewidth]{figs/airline-mc1-logloss.pdf}
\end{center}
\end{minipage}\quad
\begin{minipage}{0.433\linewidth}
\begin{center}
\includegraphics[width=0.99\linewidth]{figs/airline-mc1-mis.pdf}
\end{center}
\end{minipage}
\end{center}
\caption{Comparison of SVGP and DLM with one Monte Carlo sample per datapoint on the binary classification airline dataset. On the left is negative log predictive probability and on the right is error rate. In both plots, lower values imply better performance.}
\label{#1}
\end{figure*}

}

\newcommand{\PutAirlineMainMCB}[1]{
\begin{figure*}[h]
\begin{center}
\begin{minipage}{0.433\linewidth}
\begin{center}
    \includegraphics[width=0.99\linewidth]{figs/airline-mc10-logloss.pdf}
\end{center}
\end{minipage}\quad
\begin{minipage}{0.433\linewidth}
\begin{center}
\includegraphics[width=0.99\linewidth]{figs/airline-mc10-mis.pdf}
\end{center}
\end{minipage}
\end{center}
\caption{Comparison of SVGP and DLM with ten Monte Carlo samples per datapoint on the binary classification airline dataset. On the left is negative log predictive probability and on the right is error rate. In both plots, lower values imply better performance.}
\label{#1}
\end{figure*}

}

\newcommand{\PutAirlineMainMCC}[1]{
\begin{figure*}[h]
\begin{center}
\begin{minipage}{0.433\linewidth}
\begin{center}
    \includegraphics[width=0.99\linewidth]{figs/airline-mc100-logloss.pdf}
\end{center}
\end{minipage}\quad
\begin{minipage}{0.433\linewidth}
\begin{center}
\includegraphics[width=0.99\linewidth]{figs/airline-mc100-mis.pdf}
\end{center}
\end{minipage}
\end{center}
\caption{Comparison of SVGP and DLM with one hundred Monte Carlo samples per datapoint on the binary classification airline dataset. On the left is negative log predictive probability and on the right is error rate. In both plots, lower values imply better performance.}
\label{#1}
\end{figure*}

}

\newcommand{\PutAirlineMainMCTenHundred}[1]{
\begin{figure*}[h]
\begin{center}
\begin{minipage}{0.433\linewidth}
\begin{center}
    \includegraphics[width=0.99\linewidth]{figs/airline-mc10-logloss.pdf}
\end{center}
\end{minipage}\quad
\begin{minipage}{0.433\linewidth}
\begin{center}
\includegraphics[width=0.99\linewidth]{figs/airline-mc100-logloss.pdf}
\end{center}
\end{minipage}
\end{center}
\caption{Comparison of negative log predictive probability for SVGP and DLM with 10 (left) and 100 (right) Monte Carlo samples per datapoint on the binary classification airline dataset.}
\label{#1}
\end{figure*}

}

\newcommand{\PutAirlineGradientsCov}[1]{
\begin{figure*}[h]
\begin{center}
\begin{minipage}{0.31\linewidth}
\begin{center}
    \includegraphics[width=0.99\linewidth]{figs/grad_stats_airline-0.1-seed0-lr0.001-dlm-beta0.1-20-ard_rbf-M200-B1000-seed0-lr0.001-svgp_condc_hists_cov.pdf}
\end{center}
\end{minipage}\quad
\begin{minipage}{0.31\linewidth}
\begin{center}
    \includegraphics[width=0.99\linewidth]{figs/grad_stats_airline-0.1-seed0-lr0.001-dlm-beta0.1-20-ard_rbf-M200-B1000-seed0-lr0.001-svgp_condd_hists_cov.pdf}
\end{center}
\end{minipage}\quad
\begin{minipage}{0.31\linewidth}
\begin{center}
\includegraphics[width=0.99\linewidth]{figs/grad_stats_airline-0.1-seed0-lr0.001-dlm-beta0.1-20-ard_rbf-M200-B1000-seed0-lr0.001-svgp_err_hists_cov.pdf}
\end{center}
\end{minipage}
\end{center}
\caption{Statistics for calculation of biased gradients for the Cholesky parameter for binary classification.
Left: condition c. Middle: condition d. Right: estimate of bias.
(Note, $\hat{\nabla f}$ denotes the $s_t$ variable described in the main text.)
}
\label{#1}
\end{figure*}
}

\newcommand{\PutAirlineGradientsIND}[1]{
\begin{figure*}[h]
\begin{center}
\begin{minipage}{0.31\linewidth}
\begin{center}
    \includegraphics[width=0.99\linewidth]{figs/grad_stats_airline-0.1-seed0-lr0.001-dlm-beta0.1-20-ard_rbf-M200-B1000-seed0-lr0.001-svgp_condc_hists_ind.pdf}
\end{center}
\end{minipage}\quad
\begin{minipage}{0.31\linewidth}
\begin{center}
    \includegraphics[width=0.99\linewidth]{figs/grad_stats_airline-0.1-seed0-lr0.001-dlm-beta0.1-20-ard_rbf-M200-B1000-seed0-lr0.001-svgp_condd_hists_ind.pdf}
\end{center}
\end{minipage}\quad
\begin{minipage}{0.31\linewidth}
\begin{center}
\includegraphics[width=0.99\linewidth]{figs/grad_stats_airline-0.1-seed0-lr0.001-dlm-beta0.1-20-ard_rbf-M200-B1000-seed0-lr0.001-svgp_err_hists_ind.pdf}
\end{center}
\end{minipage}
\end{center}
\caption{Statistics for calculation of biased gradients for the inducing locations parameter for binary classification.
Left: condition c. Middle: condition d. Right: estimate of bias.}
\label{#1}
\end{figure*}
}

\newcommand{\PutAirlineGradientsMean}[1]{
\begin{figure*}[h]
\begin{center}
\begin{minipage}{0.31\linewidth}
\begin{center}
    \includegraphics[width=0.99\linewidth]{figs/grad_stats_airline-0.1-seed0-lr0.001-dlm-beta0.1-20-ard_rbf-M200-B1000-seed0-lr0.001-svgp_condc_hists_mean.pdf}
\end{center}
\end{minipage}\quad
\begin{minipage}{0.31\linewidth}
\begin{center}
    \includegraphics[width=0.99\linewidth]{figs/grad_stats_airline-0.1-seed0-lr0.001-dlm-beta0.1-20-ard_rbf-M200-B1000-seed0-lr0.001-svgp_condd_hists_mean.pdf}
\end{center}
\end{minipage}\quad
\begin{minipage}{0.31\linewidth}
\begin{center}
\includegraphics[width=0.99\linewidth]{figs/grad_stats_airline-0.1-seed0-lr0.001-dlm-beta0.1-20-ard_rbf-M200-B1000-seed0-lr0.001-svgp_err_hists_mean.pdf}
\end{center}
\end{minipage}
\end{center}
\caption{Statistics for calculation of biased gradients for the mean parameter for binary classification.
Left: condition c. Middle: condition d. Right: estimate of bias.}
\label{#1}
\end{figure*}
}

\newcommand{\PutAirlineProductSamplingLearningCurve}[1]{
\begin{figure*}[h]
\begin{center}
\begin{minipage}{0.433\linewidth}
\begin{center}
    \includegraphics[width=0.99\linewidth]{figs/airline-20-dlm_ps-beta0.1-logloss.pdf}
\end{center}
\end{minipage}\quad
\begin{minipage}{0.433\linewidth}
\begin{center}
    \includegraphics[width=0.99\linewidth]{figs/airline-20-dlm_ps-beta0.1-mis.pdf}
\end{center}
\end{minipage}
\end{center}
\caption{Learning curve of bMC and uPS.}
\label{#1}
\end{figure*}
}

\newcommand{\PutAirlineSmoothBMCLearningCurve}[1]{
\begin{figure*}[h]
\begin{center}
\begin{minipage}{0.433\linewidth}
\begin{center}
    \includegraphics[width=0.99\linewidth]{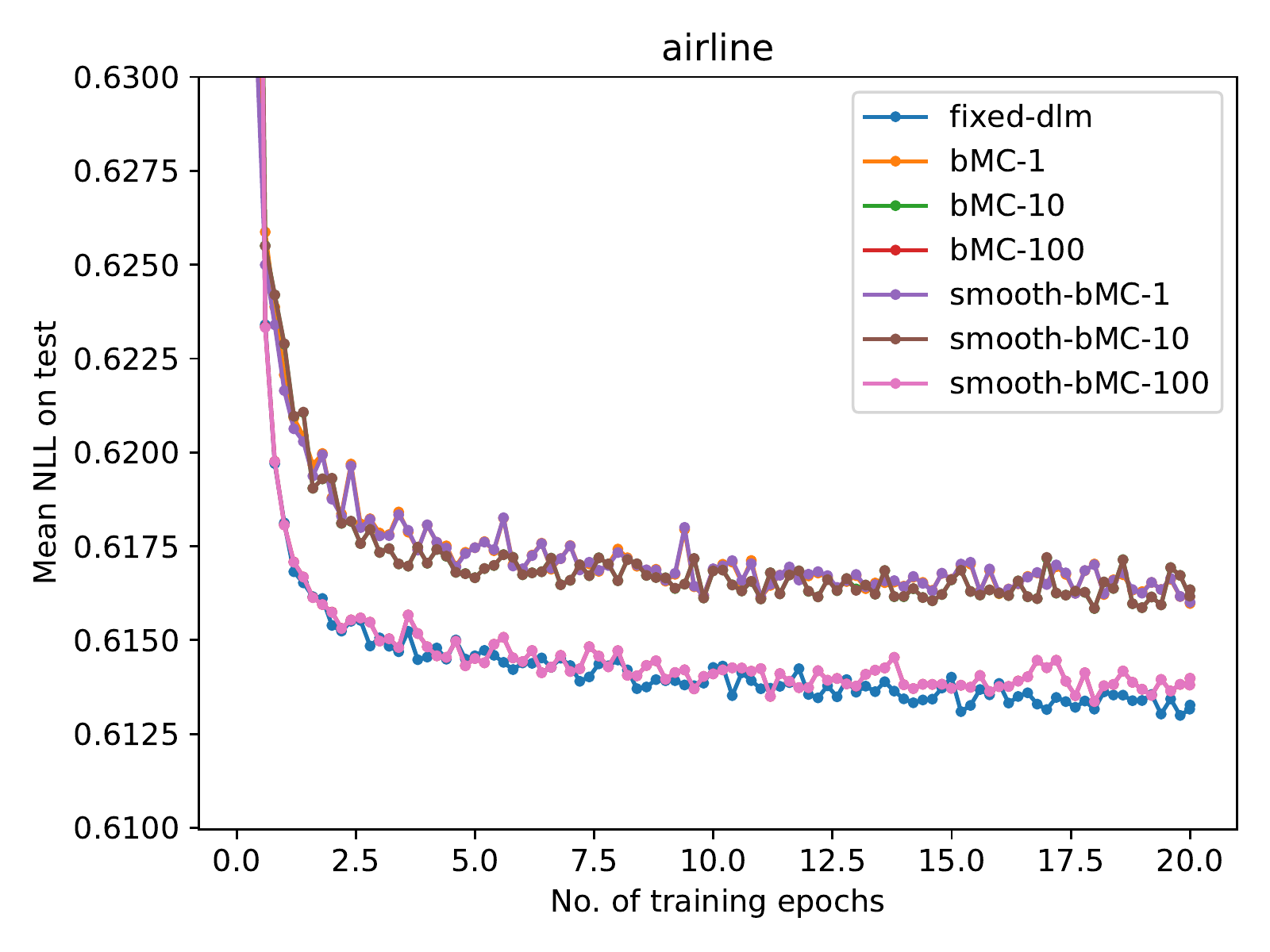}
\end{center}
\end{minipage}\quad
\begin{minipage}{0.433\linewidth}
\begin{center}
    \includegraphics[width=0.99\linewidth]{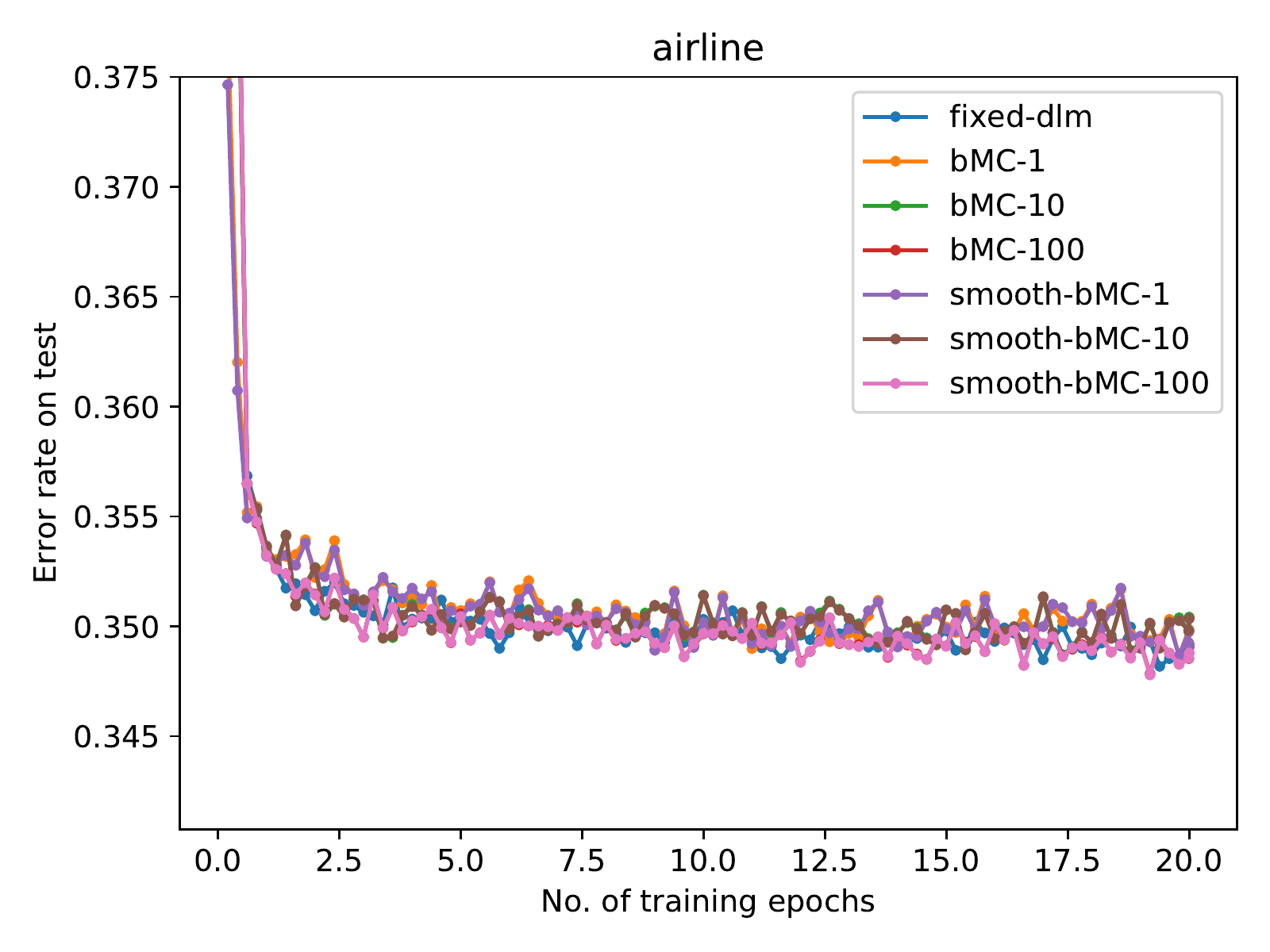}
\end{center}
\end{minipage}
\end{center}
\caption{Learning curve of bMC and smooth-bMC, $\beta=1$.}
\label{#1}
\end{figure*}
}

\newcommand{\PutRegressionLog}[1]{
\begin{figure*}[h]
\begin{center}
\begin{minipage}{0.310\linewidth}
\begin{center}
    \includegraphics[width=0.99\linewidth]{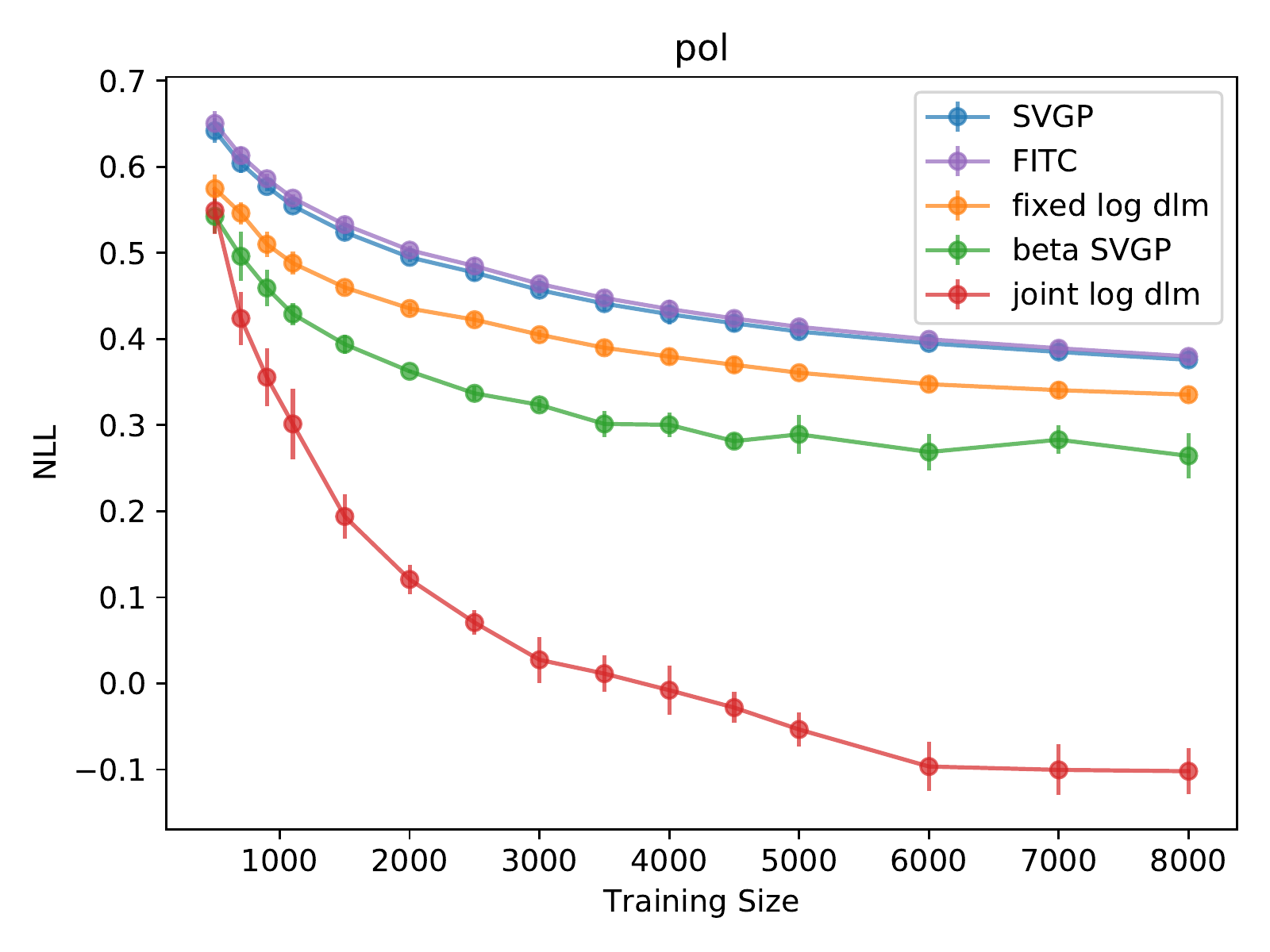}
\end{center}
\end{minipage}\quad
\begin{minipage}{0.310\linewidth}
\begin{center}
    \includegraphics[width=0.99\linewidth]{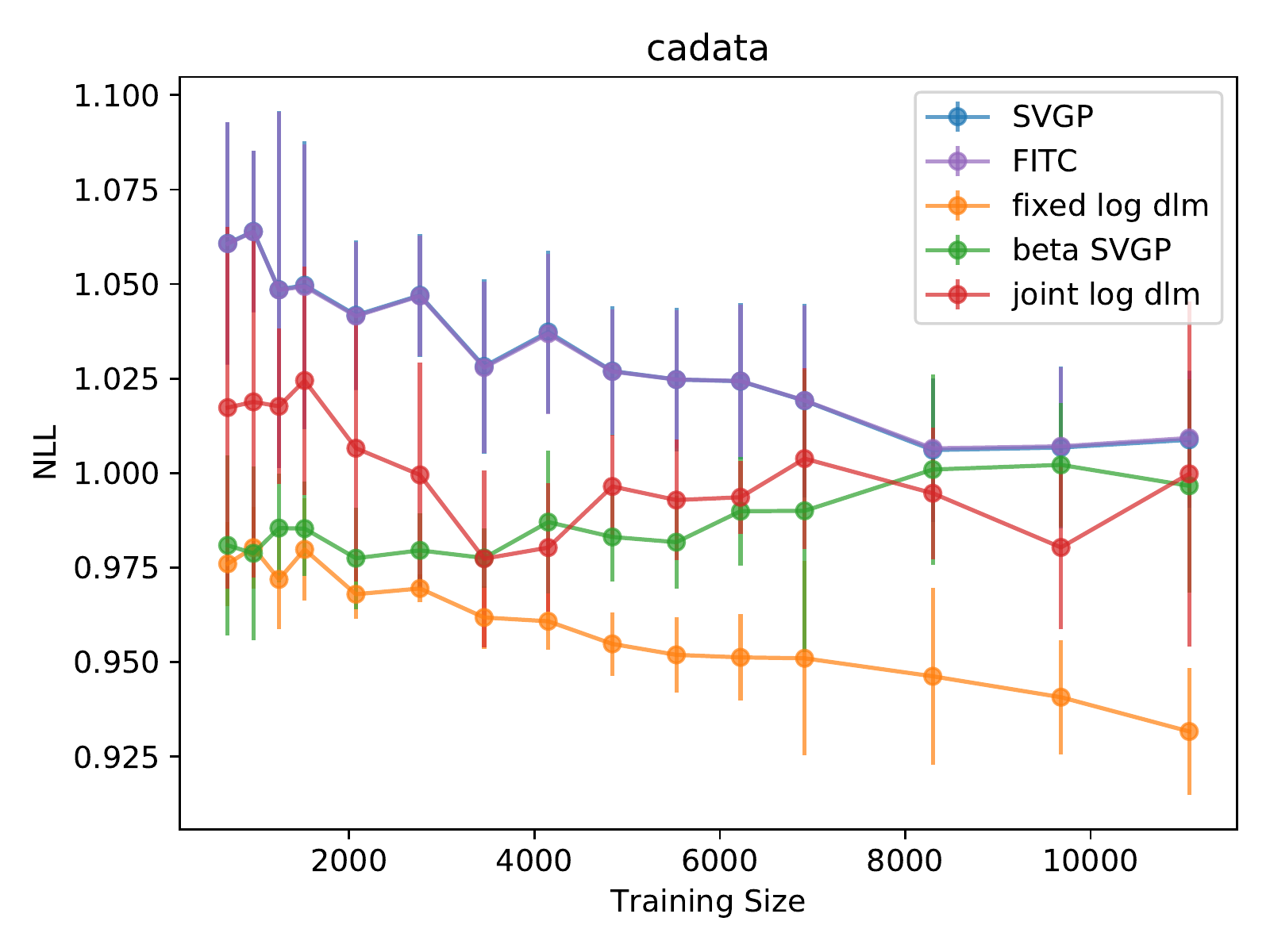}
\end{center}
\end{minipage}\quad
\begin{minipage}{0.310\linewidth}
\begin{center}
    \includegraphics[width=0.99\linewidth]{figs/beta_cadata_size691.pdf}
\end{center}
\end{minipage}\quad
\begin{minipage}{0.310\linewidth}
\begin{center}
    \includegraphics[width=0.99\linewidth]{figs/log_sarcos.pdf}
\end{center}
\end{minipage}\quad
\begin{minipage}{0.310\linewidth}
\begin{center}
    \includegraphics[width=0.99\linewidth]{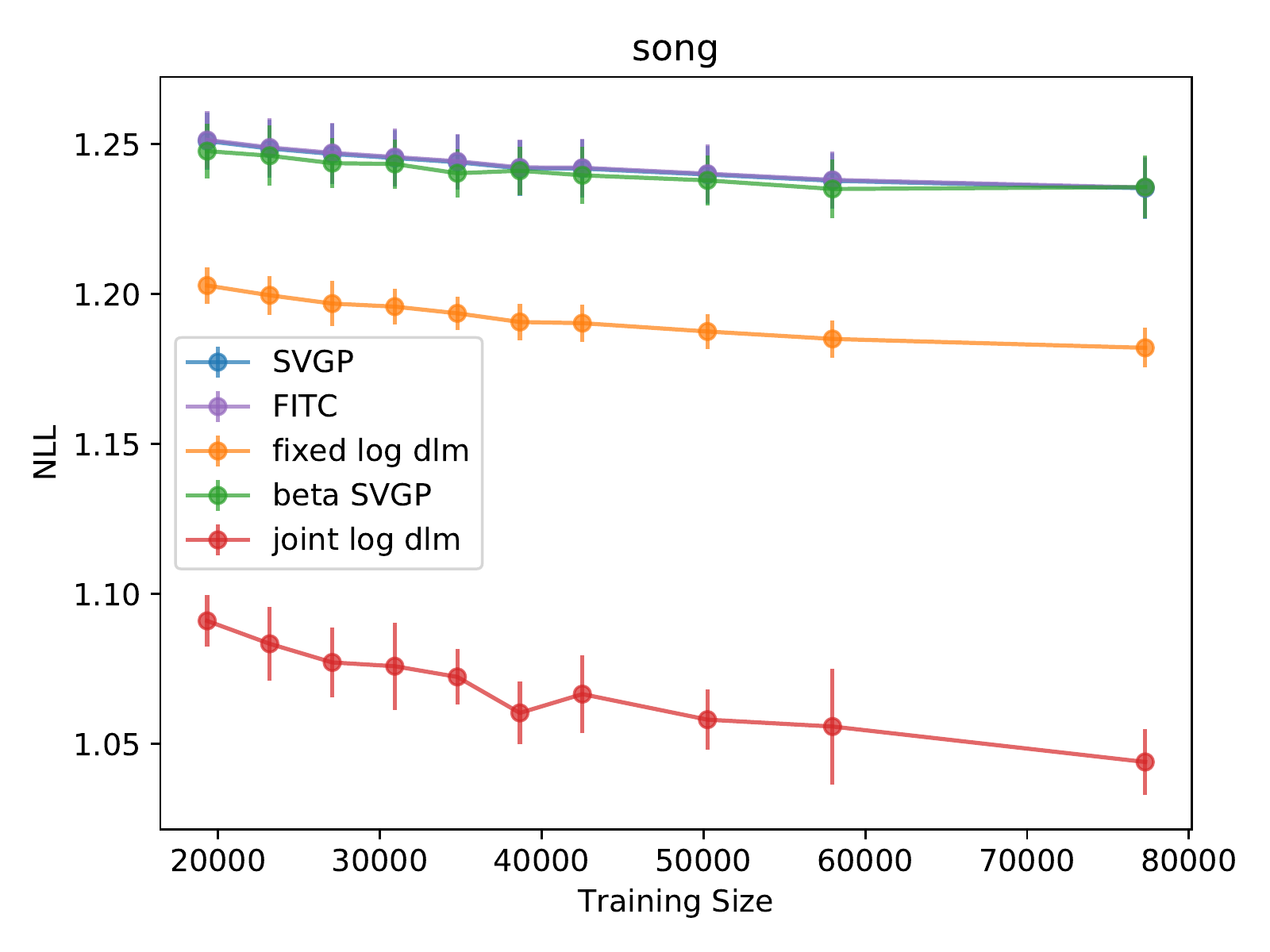}
\end{center}
\end{minipage}
\begin{minipage}{0.310\linewidth}
\begin{center}
    \includegraphics[width=0.99\linewidth]{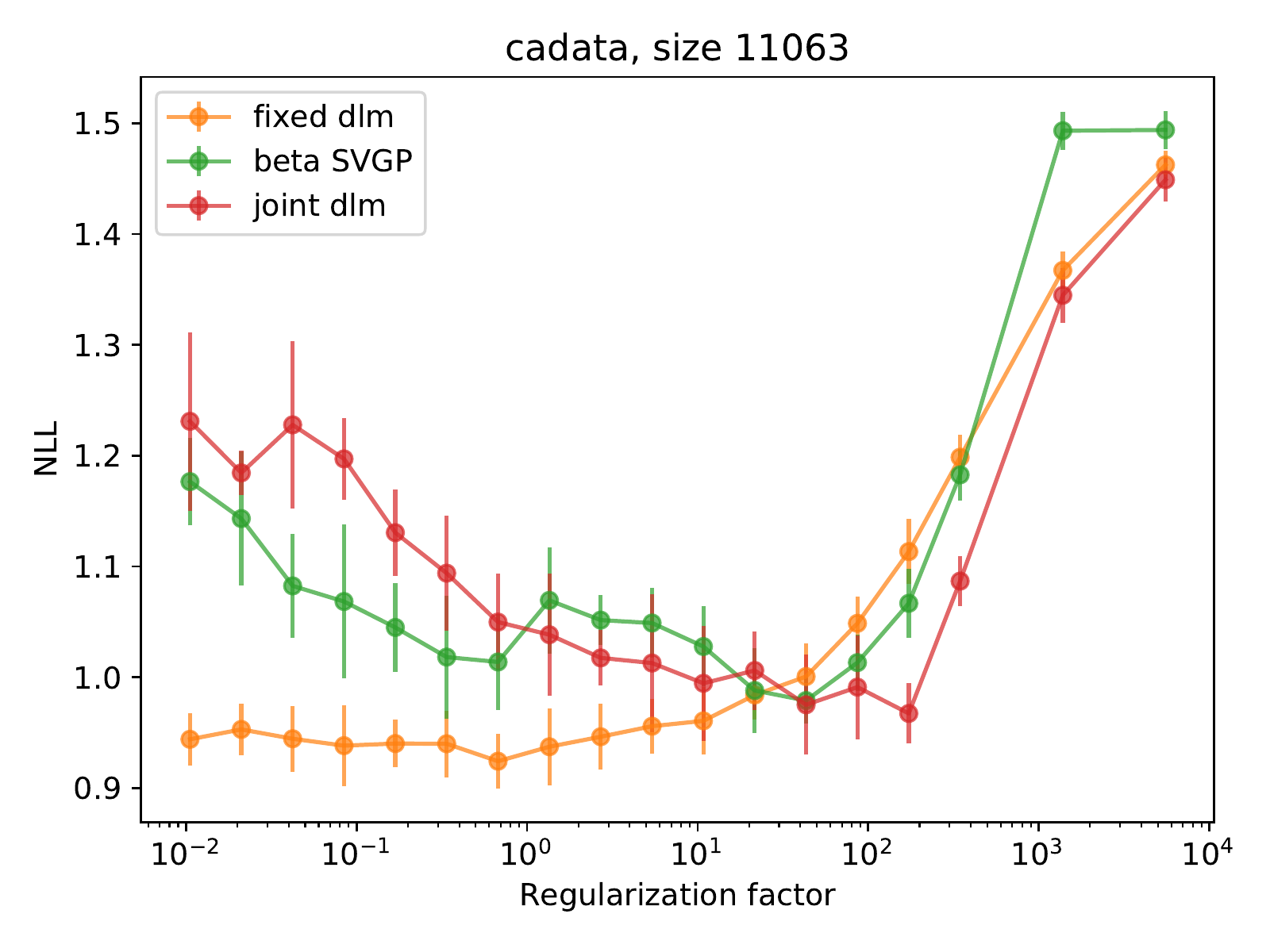}
\end{center}
\end{minipage}
\end{center}
\caption{sGP Regression: Left and middle columns show a comparison of SVGP, FITC and DLM on mean test NLL in 4 datasets. The right column shows NLL as a function of $\beta$ for cadata for a small training size and a large training size. In all plots, lower values imply better performance. }
\label{#1}
\end{figure*}

}

\newcommand{\PutRegressionSq}[1]{
\begin{figure*}[h]
\begin{center}
\begin{minipage}{0.433\linewidth}
\begin{center}
    \includegraphics[width=0.99\linewidth]{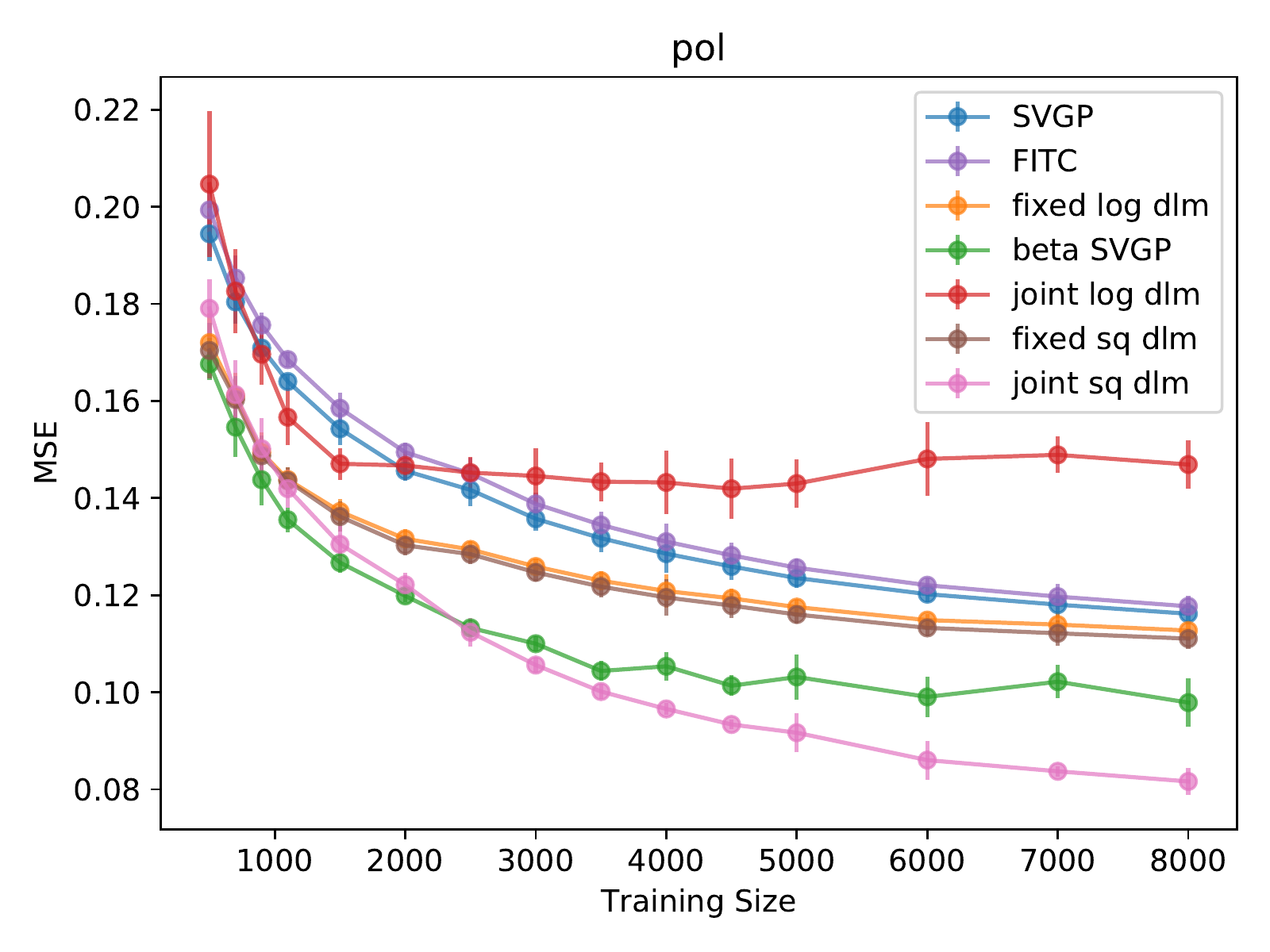}
\end{center}
\end{minipage}\quad
\begin{minipage}{0.433\linewidth}
\begin{center}
    \includegraphics[width=0.99\linewidth]{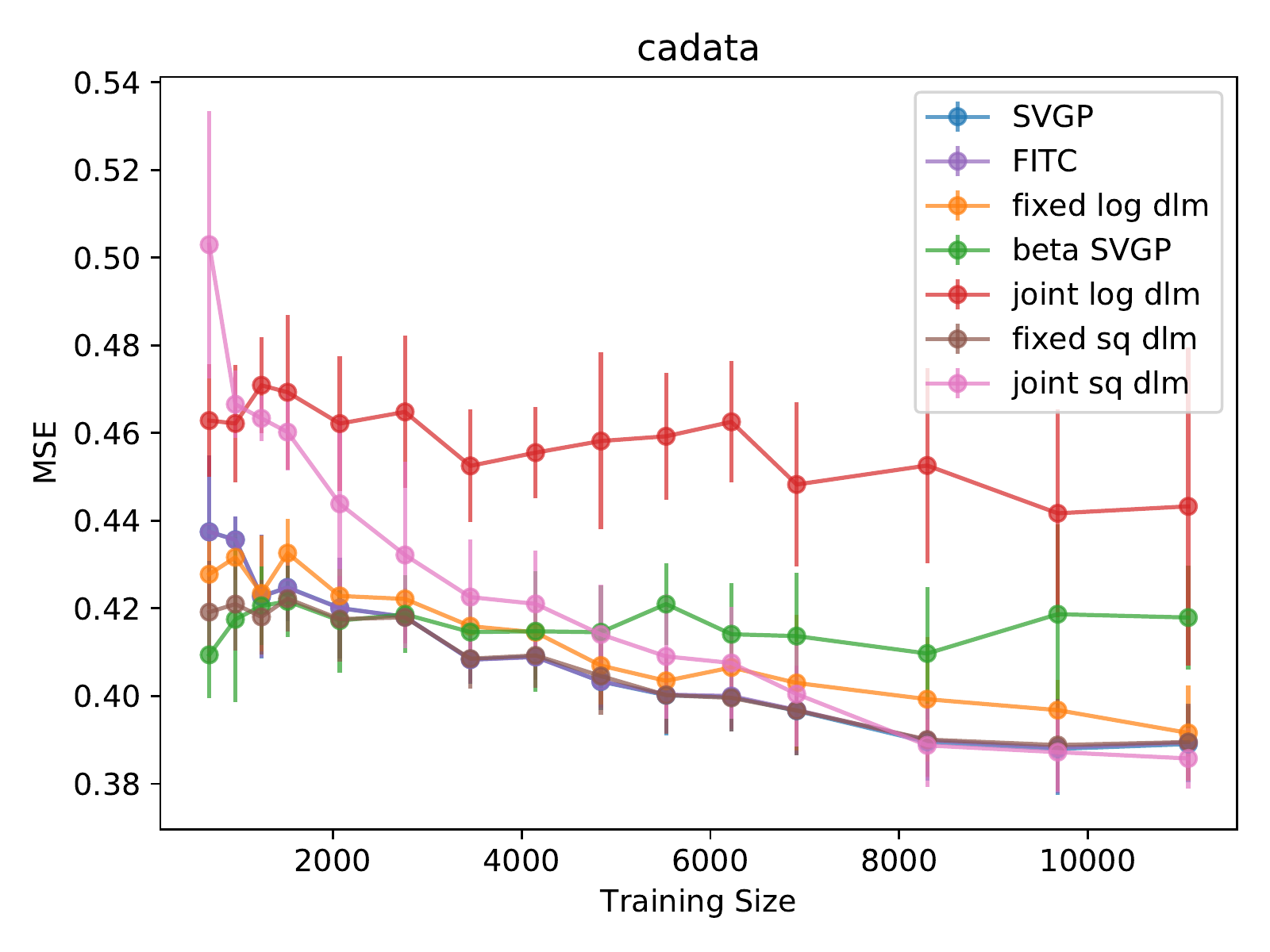}
\end{center}
\end{minipage}\quad
\begin{minipage}{0.433\linewidth}
\begin{center}
    \includegraphics[width=0.99\linewidth]{figs/mse_sarcos.pdf}
\end{center}
\end{minipage}\quad
\begin{minipage}{0.433\linewidth}
\begin{center}
    \includegraphics[width=0.99\linewidth]{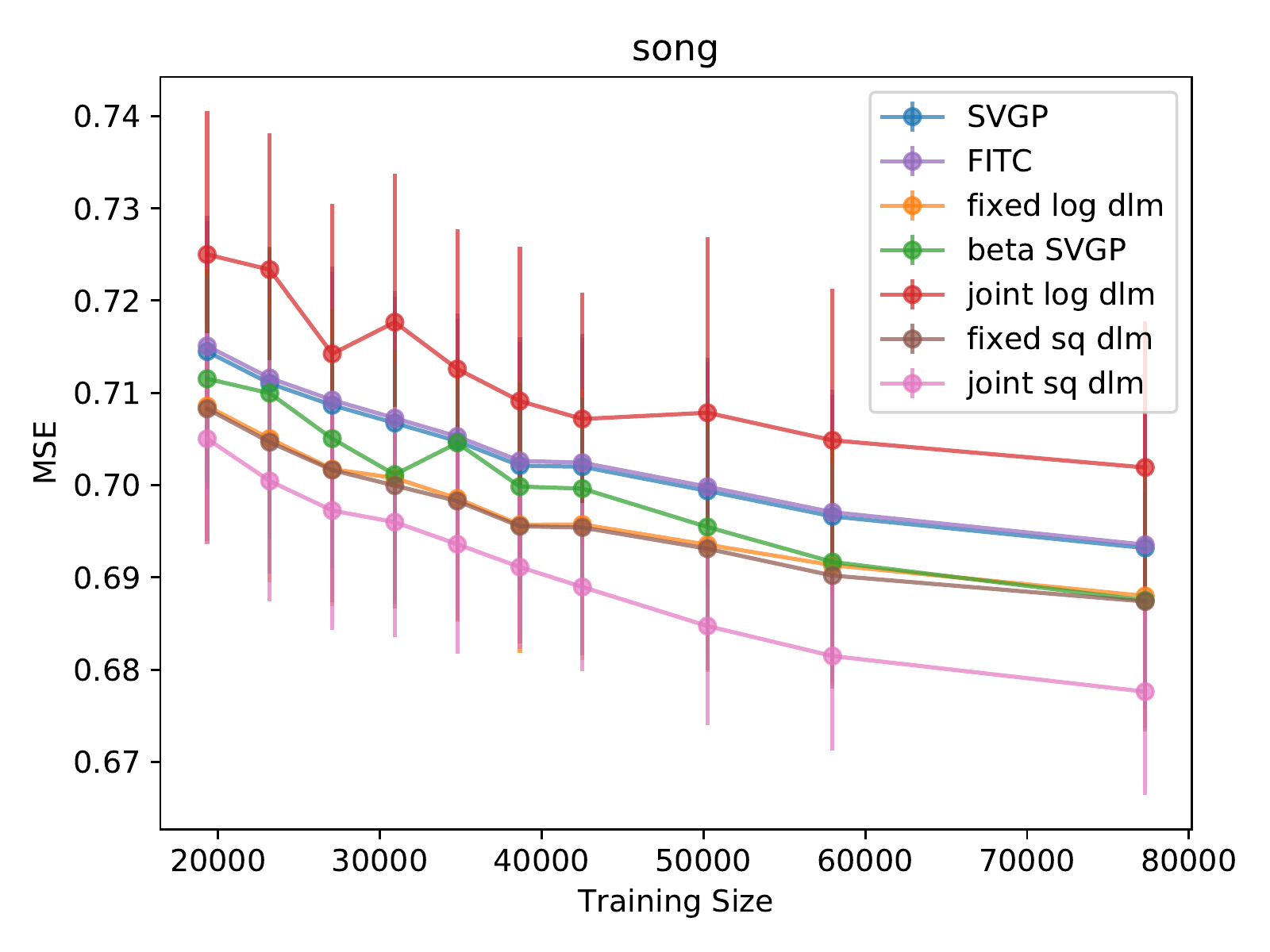}
\end{center}
\end{minipage}
\end{center}
\caption{Square loss in sGP Regression: Comparison of SVGP, FITC, DLM and SQ\_DLM in MSE. In all plots, lower values imply better performance.}
\label{#1}
\end{figure*}

}

\newcommand{\PutClassificationLog}[1]{
\begin{figure*}[h]
\begin{center}
\begin{minipage}{0.433\linewidth}
\begin{center}
    \includegraphics[width=0.99\linewidth]{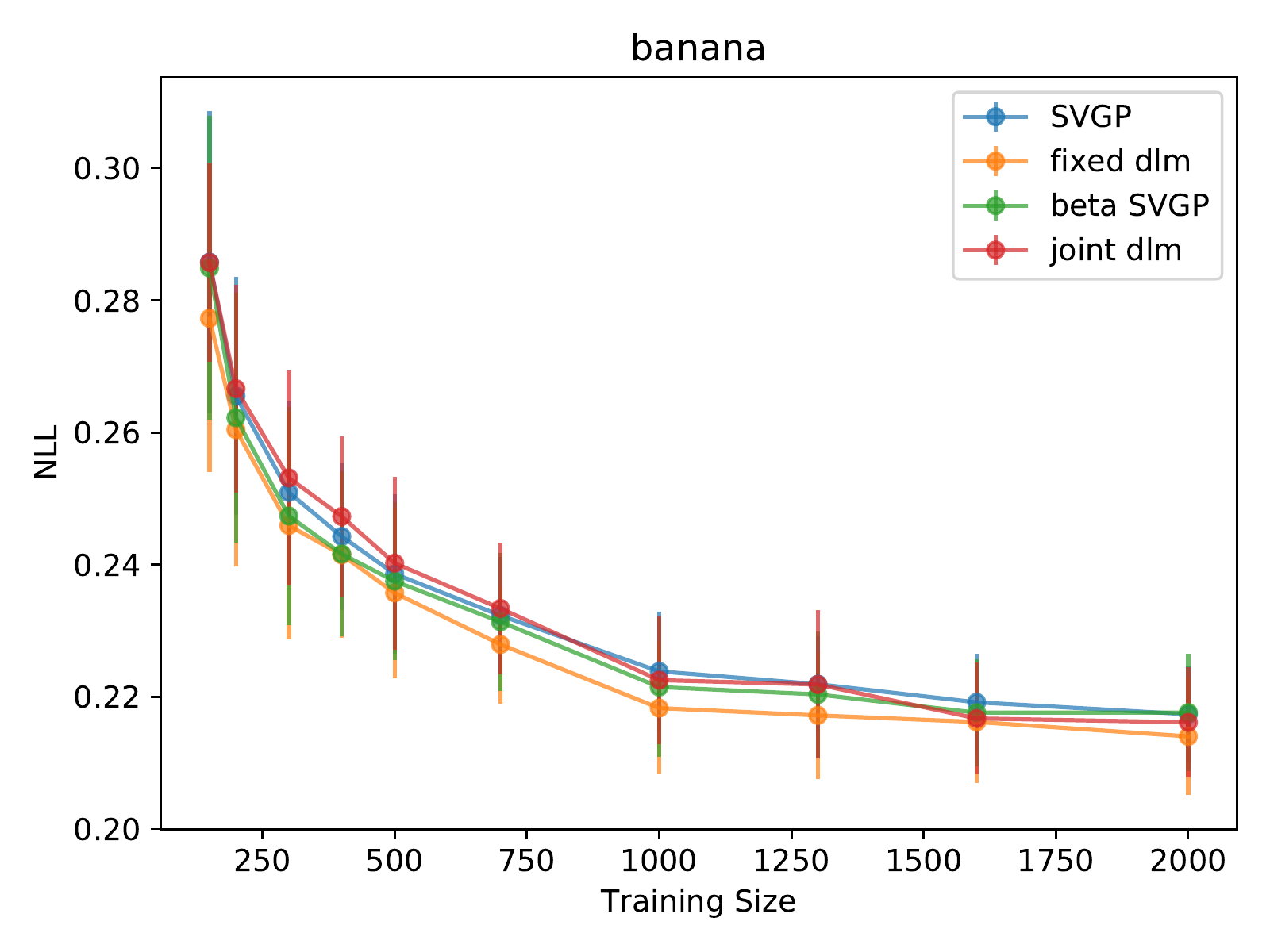}
\end{center}
\end{minipage}\quad
\begin{minipage}{0.433\linewidth}
\begin{center}
    \includegraphics[width=0.99\linewidth]{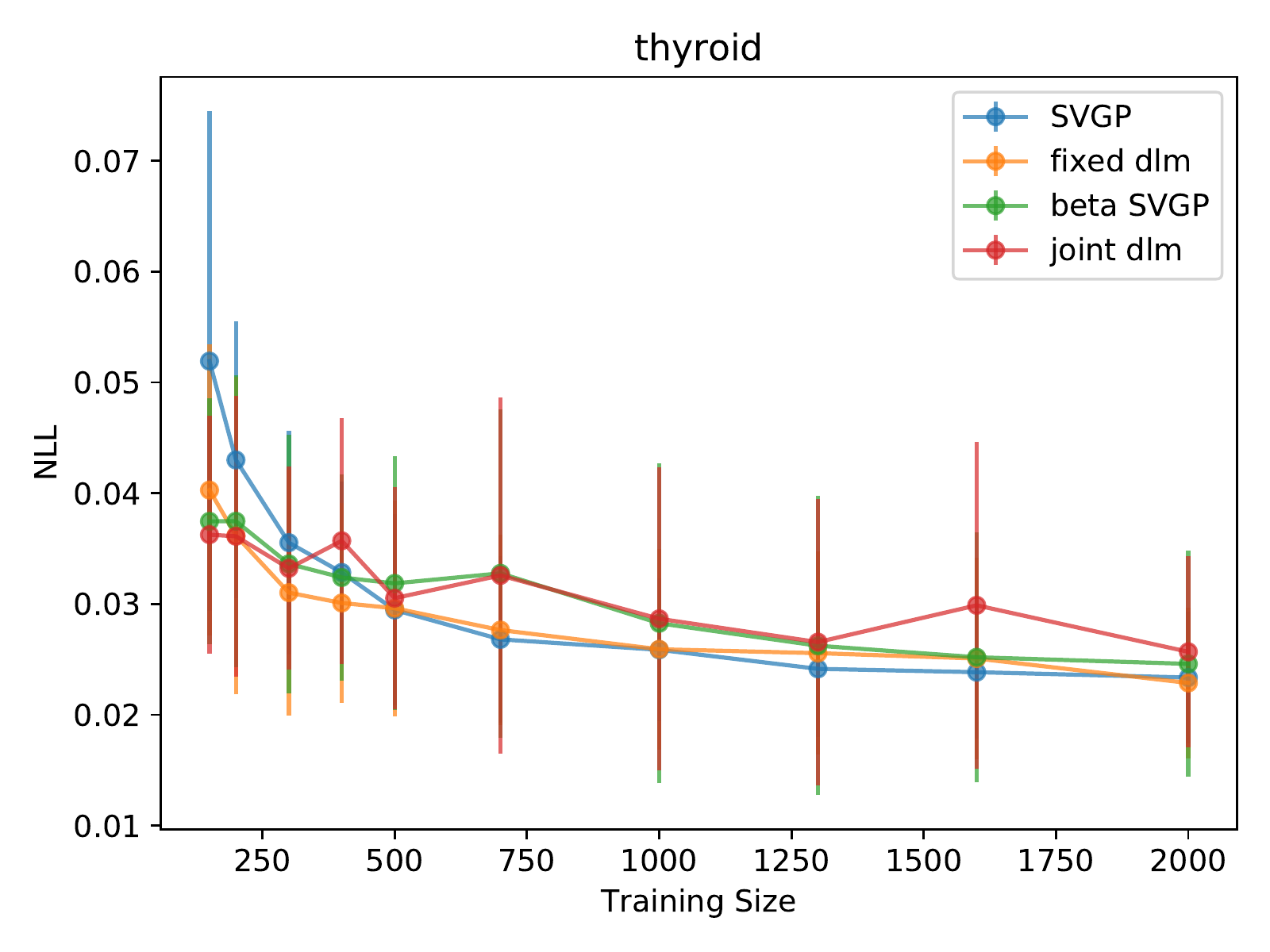}
\end{center}
\end{minipage}\quad
\begin{minipage}{0.433\linewidth}
\begin{center}
    \includegraphics[width=0.99\linewidth]{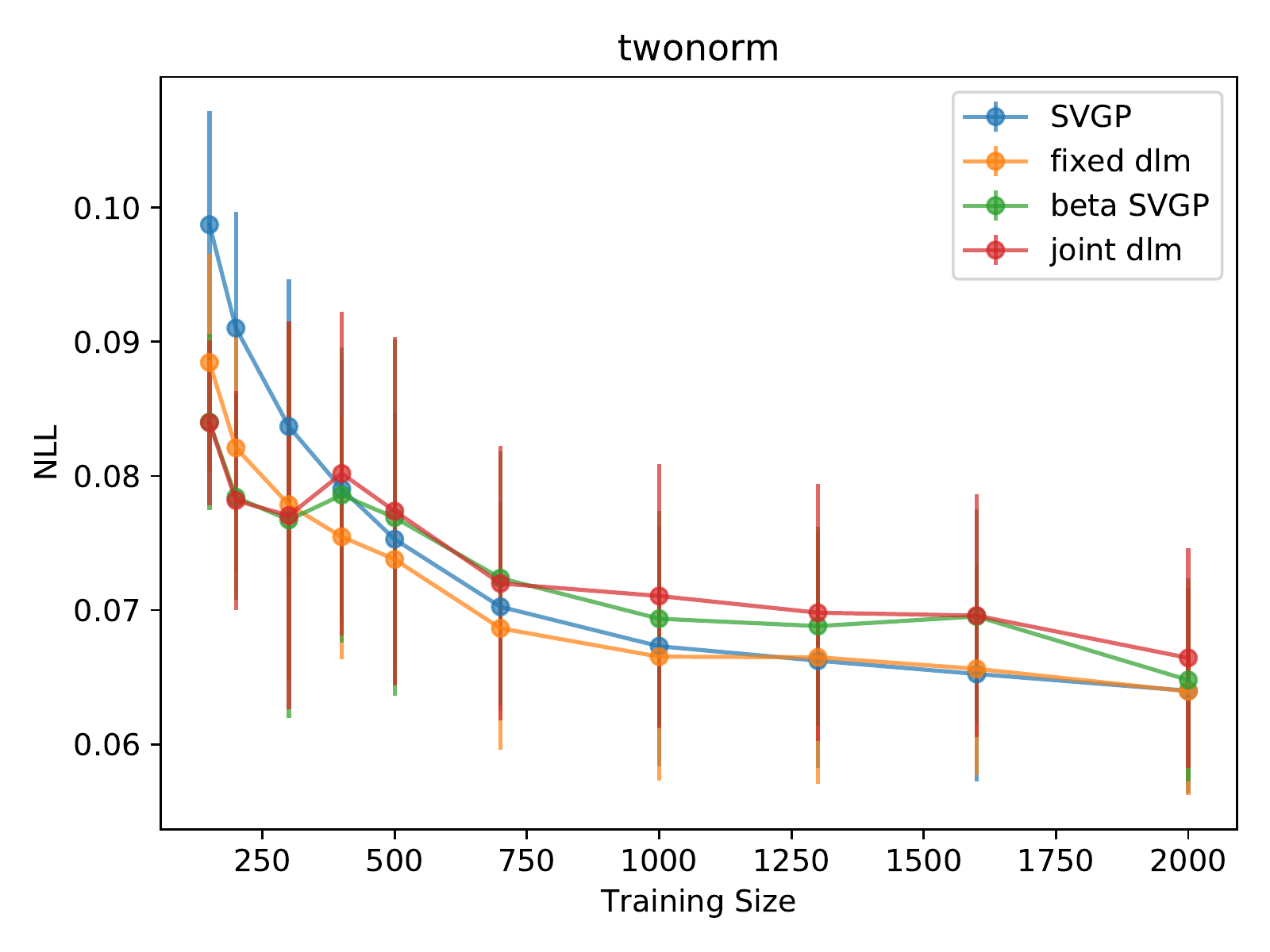}
\end{center}
\end{minipage}\quad
\begin{minipage}{0.433\linewidth}
\begin{center}
    \includegraphics[width=0.99\linewidth]{figs/log_ringnorm.pdf}
\end{center}
\end{minipage}
\end{center}
\caption{sGP Classification: Comparison of SVGP and DLM in mean NLL. In all plots, lower values imply better performance.}
\label{#1}
\end{figure*}

}

\newcommand{\PutClassificationErr}[1]{
\begin{figure*}[h]
\begin{center}
\begin{minipage}{0.433\linewidth}
\begin{center}
    \includegraphics[width=0.99\linewidth]{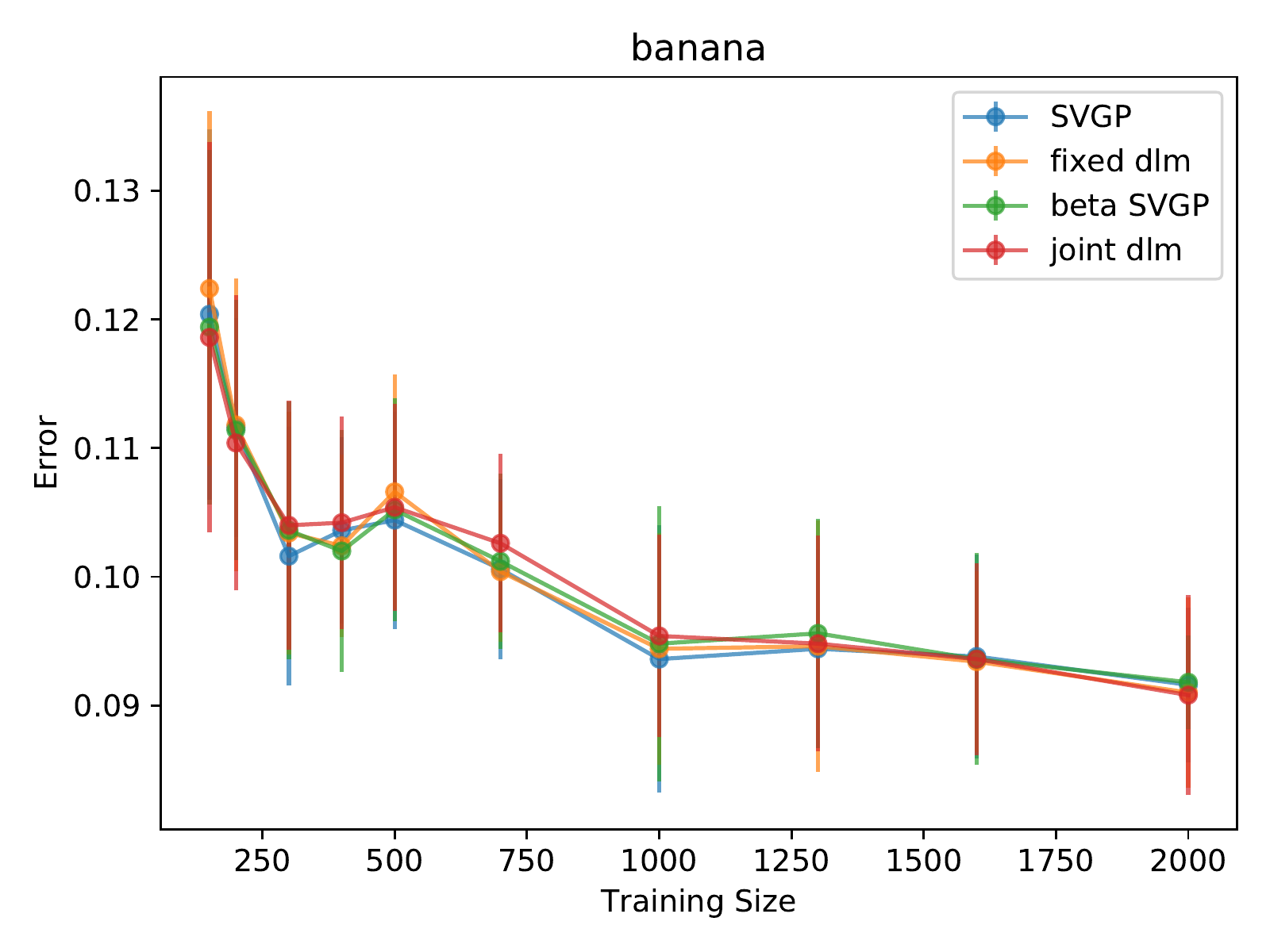}
\end{center}
\end{minipage}\quad
\begin{minipage}{0.433\linewidth}
\begin{center}
    \includegraphics[width=0.99\linewidth]{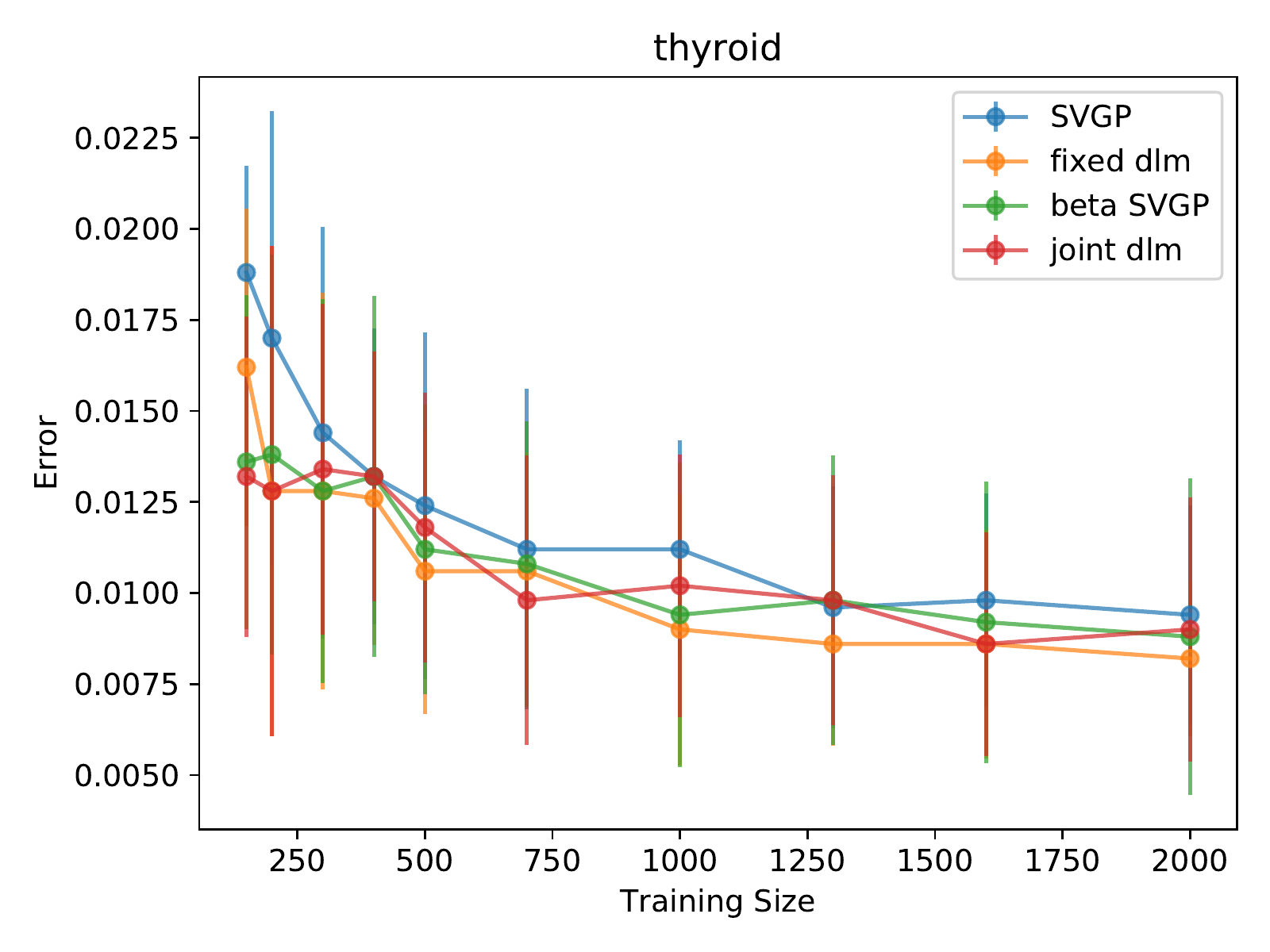}
\end{center}
\end{minipage}\quad
\begin{minipage}{0.433\linewidth}
\begin{center}
    \includegraphics[width=0.99\linewidth]{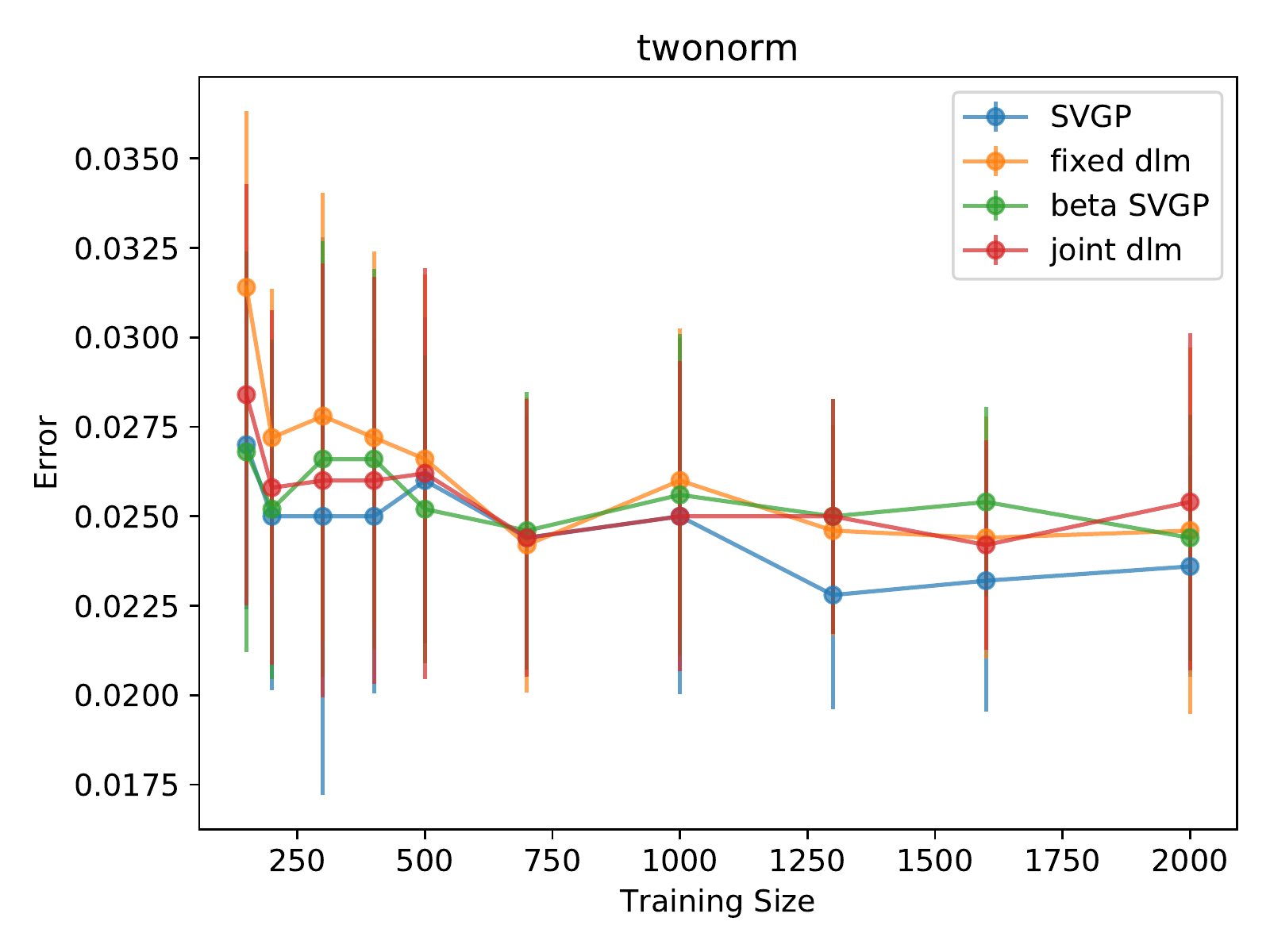}
\end{center}
\end{minipage}\quad
\begin{minipage}{0.433\linewidth}
\begin{center}
    \includegraphics[width=0.99\linewidth]{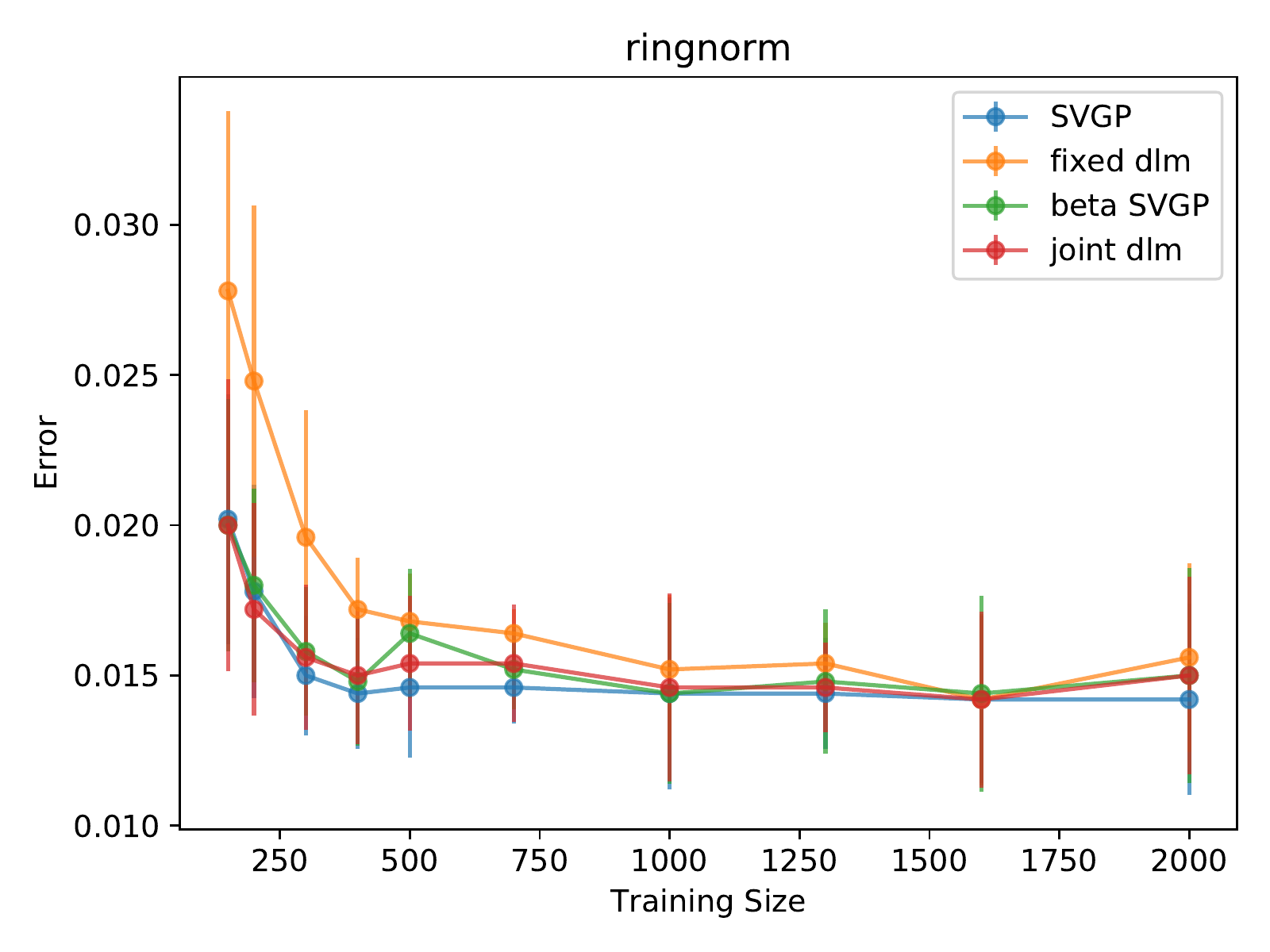}
\end{center}
\end{minipage}
\end{center}
\caption{sGP Classification: Comparison of SVGP and DLM  in term of mean error. In all plots, lower values imply better performance.}
\label{#1}
\end{figure*}

}

\newcommand{\PutSineDLM}[1]{
\begin{figure*}[th]
\begin{center}
\begin{minipage}{0.403\linewidth}
\begin{center}
    \includegraphics[width=0.99\linewidth]{figs/sine_opt_log_dlm.pdf}
\end{center}
\end{minipage}\quad
\begin{minipage}{0.403\linewidth}
\begin{center}
    \includegraphics[width=0.99\linewidth]{figs/sine_svgp_init_log_dlm.pdf}
\end{center}
\end{minipage}\quad
\begin{minipage}{0.403\linewidth}
\begin{center}
    \includegraphics[width=0.99\linewidth]{figs/sine_opt_log_dlm_reg.pdf}
\end{center}
\end{minipage}\quad
\begin{minipage}{0.403\linewidth}
\begin{center}
    \includegraphics[width=0.99\linewidth]{figs/sine_svgp_init_log_dlm_reg.pdf}
\end{center}
\end{minipage}
\end{center}
\caption{Comparison of DLM with two settings. Plots on the left show results with all parameters optimized by the DLM objective.
Plot on the right show results for DLM with hyperparameters chosen by SVGP.}
\label{#1}
\end{figure*}

}

\newcommand{\PutLossAndEta}[1]{
\begin{figure*}[h]
\begin{center}
\begin{minipage}{0.433\linewidth}
\begin{center}
    \includegraphics[width=0.99\linewidth]{figs/log_cadata_size691.pdf}
\end{center}
\end{minipage}\quad
\begin{minipage}{0.433\linewidth}
\begin{center}
    \includegraphics[width=0.99\linewidth]{figs/sq_cadata_size691.pdf}
\end{center}
\end{minipage}\quad
\begin{minipage}{0.433\linewidth}
\begin{center}
    \includegraphics[width=0.99\linewidth]{figs/log_cadata_size11063.pdf}
\end{center}
\end{minipage}\quad
\begin{minipage}{0.433\linewidth}
\begin{center}
    \includegraphics[width=0.99\linewidth]{figs/sq_cadata_size11063.pdf}
\end{center}
\end{minipage}
\end{center}
\caption{Comparison of validation error of different regularization parameters $\eta$. Error type (NLL or MSE) and training size are stated in titles.}
\label{#1}
\end{figure*}

}

\newcommand{\PutPoissonLoss}[1]{
\begin{figure*}[h]
\begin{center}
\begin{minipage}{0.433\linewidth}
\begin{center}
    \includegraphics[width=0.99\linewidth]{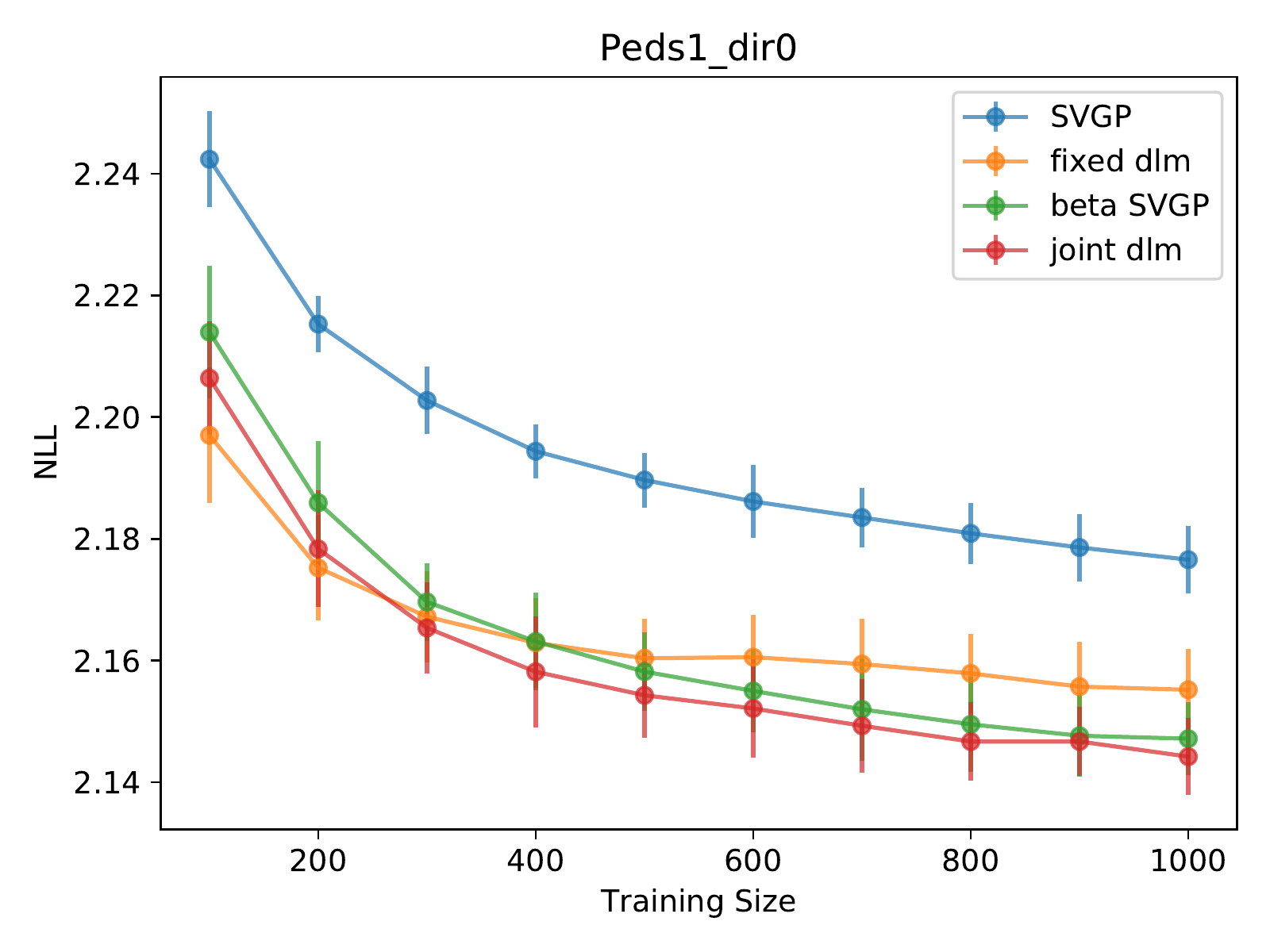}
\end{center}
\end{minipage}\quad
\begin{minipage}{0.433\linewidth}
\begin{center}
    \includegraphics[width=0.99\linewidth]{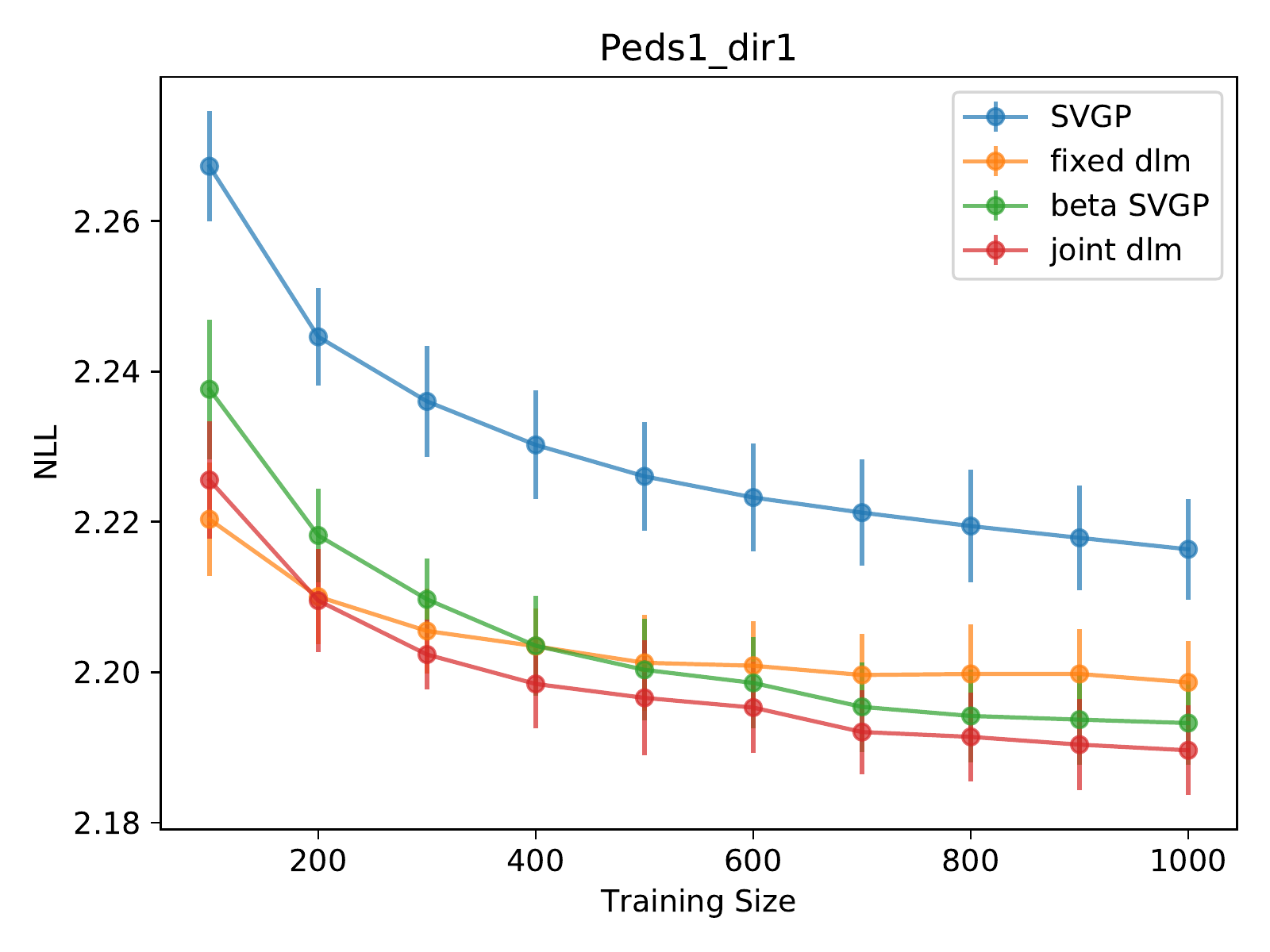}
\end{center}
\end{minipage}\quad
\begin{minipage}{0.433\linewidth}
\begin{center}
    \includegraphics[width=0.99\linewidth]{figs/log_Peds1_dir2.pdf}
\end{center}
\end{minipage}\quad
\begin{minipage}{0.433\linewidth}
\begin{center}
    \includegraphics[width=0.99\linewidth]{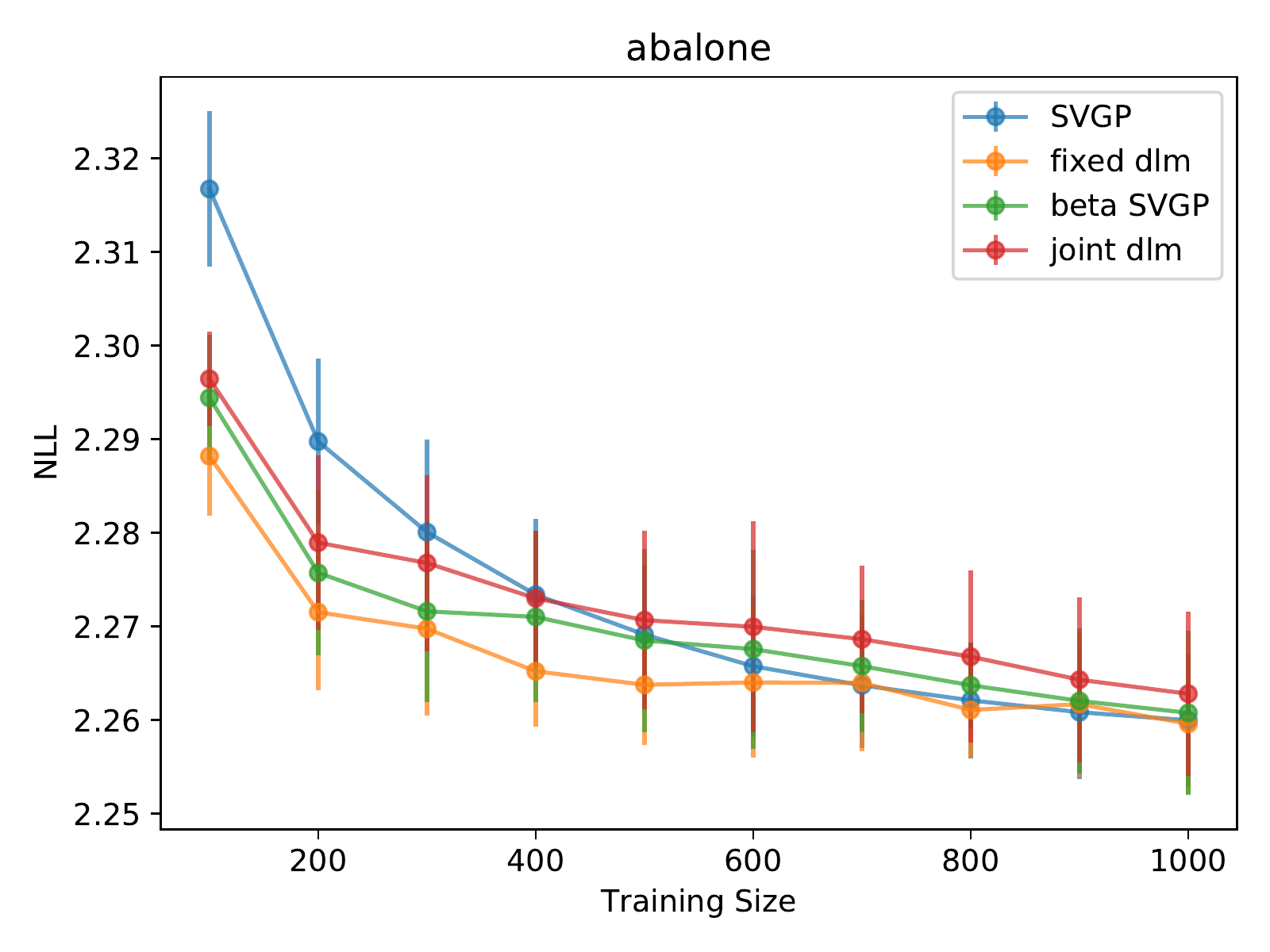}
\end{center}
\end{minipage}
\end{center}
\caption{sGP Count Prediction: Comparison of SVGP and DLM  with 10 MC samples in terms of mean NLL. In all plots, lower values imply better performance.}
\label{#1}
\end{figure*}

}

\newcommand{\PutPoissonMRE}[1]{
\begin{figure*}[h]
\begin{center}
\begin{minipage}{0.433\linewidth}
\begin{center}
    \includegraphics[width=0.99\linewidth]{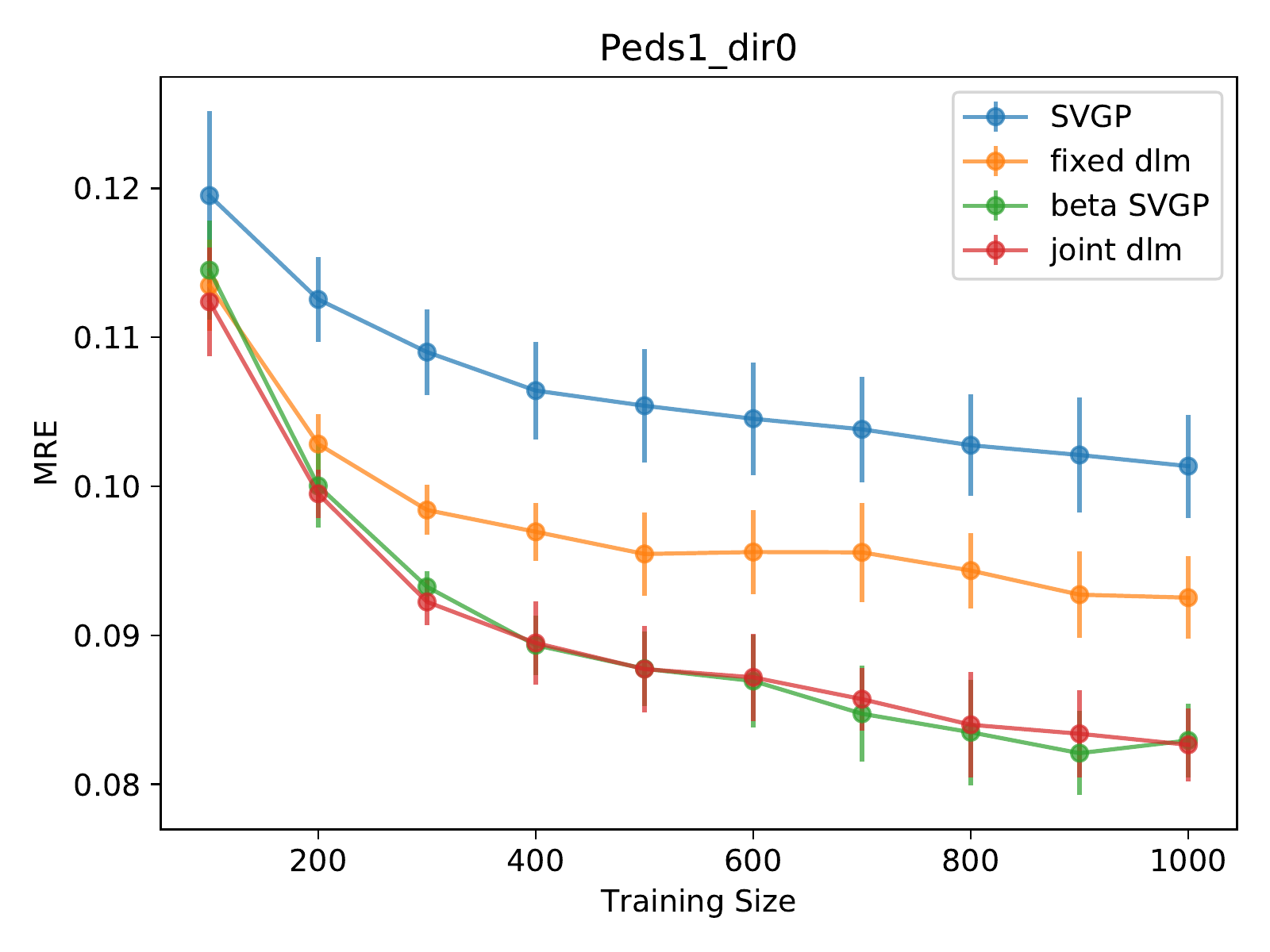}
\end{center}
\end{minipage}\quad
\begin{minipage}{0.433\linewidth}
\begin{center}
    \includegraphics[width=0.99\linewidth]{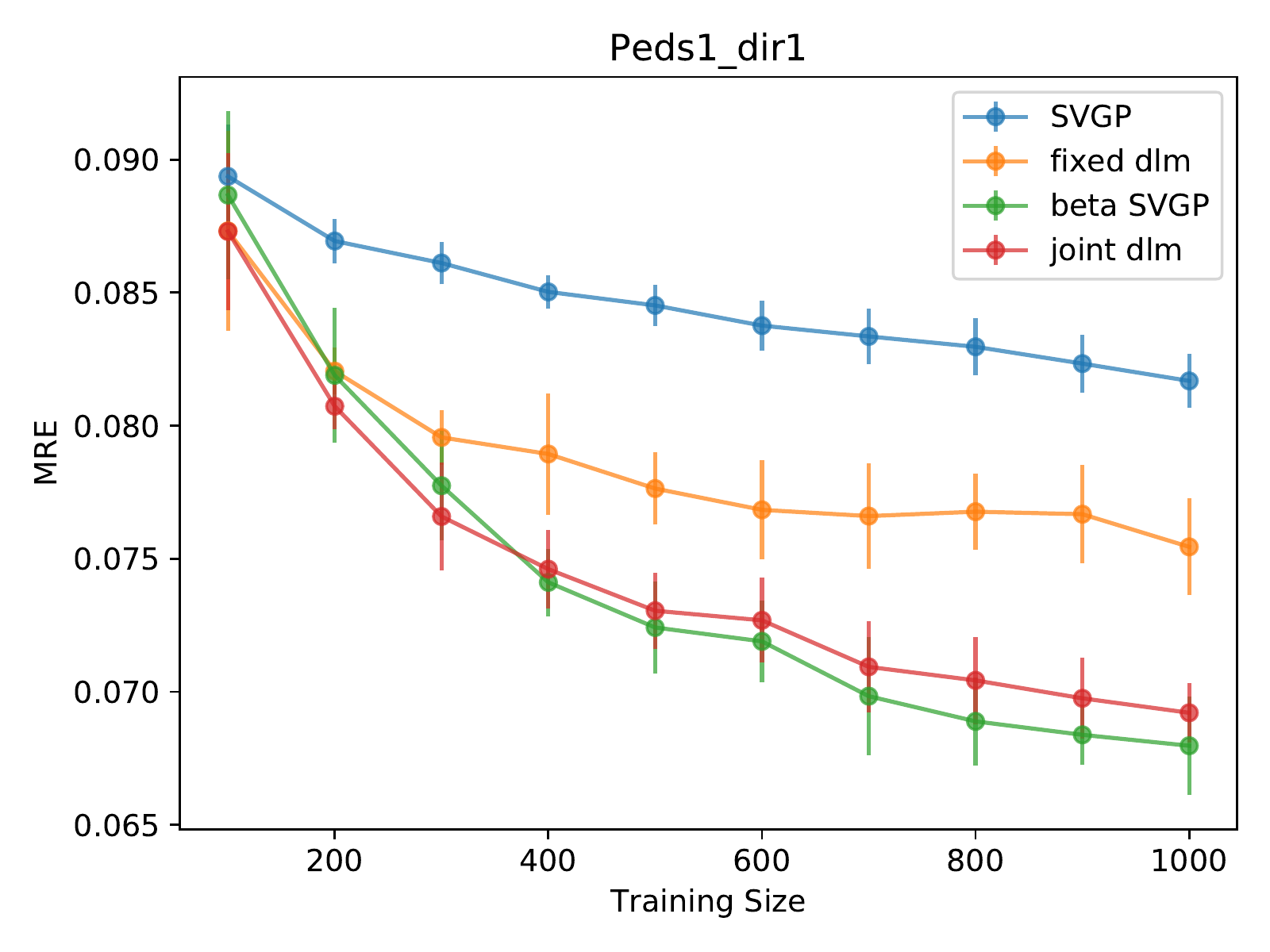}
\end{center}
\end{minipage}\quad
\begin{minipage}{0.433\linewidth}
\begin{center}
    \includegraphics[width=0.99\linewidth]{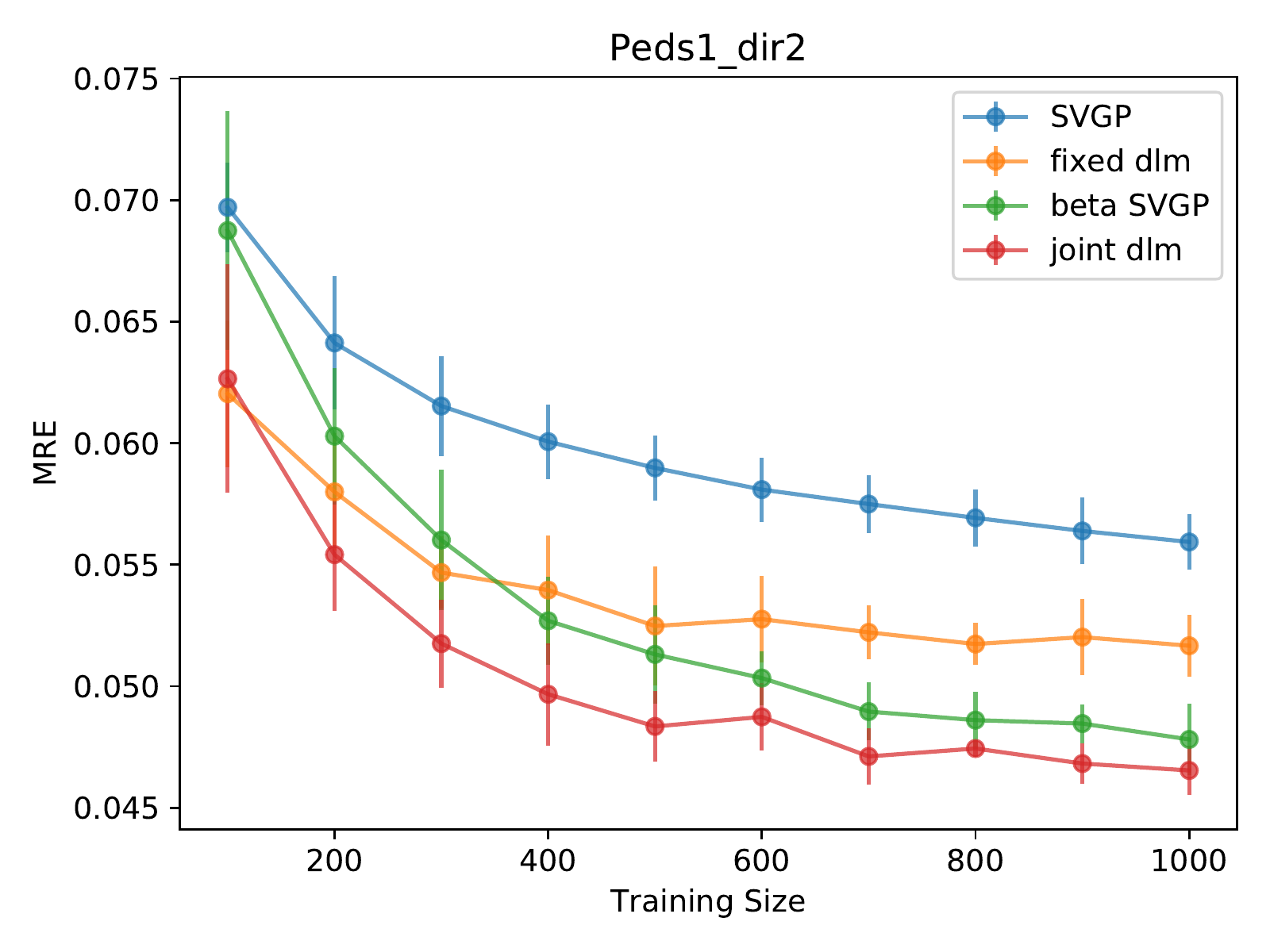}
\end{center}
\end{minipage}\quad
\begin{minipage}{0.433\linewidth}
\begin{center}
    \includegraphics[width=0.99\linewidth]{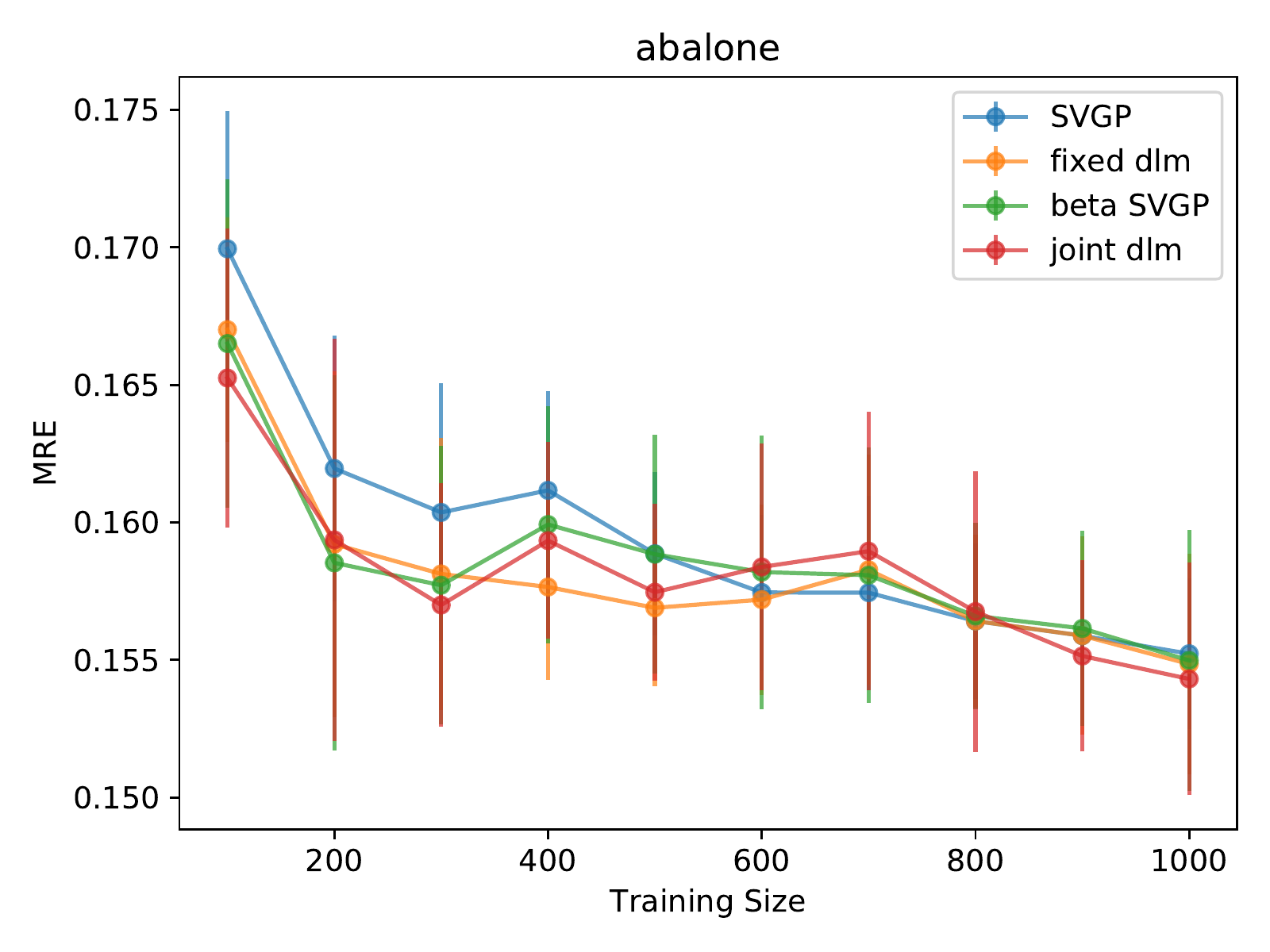}
\end{center}
\end{minipage}
\end{center}
\caption{sGP Count Prediction: Comparison of SVGP and DLM  with 10 MC samples in terms of MRE. In all plots, lower values imply better performance.}
\label{#1}
\end{figure*}
}

\begin{abstract}
The paper provides a thorough investigation of Direct loss minimization (DLM), which optimizes the posterior to minimize predictive loss,  in sparse Gaussian processes. For the conjugate case, we consider DLM for log-loss and DLM for square loss showing a significant performance improvement in both cases. The application of DLM in non-conjugate cases is more complex because the logarithm of expectation in the log-loss DLM objective is often intractable and simple sampling leads to biased estimates of gradients.  The paper makes two technical contributions to address this. First, a new method using product sampling is proposed, which gives unbiased estimates of gradients (uPS) for the objective function. Second, a theoretical analysis of biased Monte Carlo estimates (bMC) shows that stochastic gradient descent converges despite the biased gradients. Experiments demonstrate empirical success of DLM. A comparison of the sampling methods shows that, while uPS is potentially more sample-efficient, bMC provides a better tradeoff in terms of convergence time and computational efficiency.
\end{abstract}

\section{Introduction}

Bayesian models provide an attractive approach for learning from data. 
Assuming that model assumptions are correct, given the data and prior one can calculate a posterior distribution that compactly captures all our knowledge about the problem. Then, given a prediction task with an associated loss for wrong predictions, we can pick the best action according to our posterior. 
This is less clear, however, when exact inference is not possible.
As argued by several authors  (e.g.,\ \cite{Lacoste-JulienHG11,StoyanovRE11}),
in this case it makes sense to optimize the choice of approximate posterior so as to minimize the expected loss of the learner in the future.
This requires using the loss function directly during training of the model. 
Following \cite{Sheth2019pseudo} %
we call this approach {\em direct loss minimization} (DLM).
In this paper we explore the potential of DLM to improve performance in practice, in the context of sparse Gaussian Processes (sGP), and in the process make technical contributions to the problem of gradient estimation for log-expectation terms.

To motivate the discussion consider a model with latent variables $z$ and observations $y$, generating examples via $p(z)\prod p(y_i|z_i)$. When calculating the posterior $p(z|y)$ is hard, variational inference finds an approximation $q(z)$ by maximizing the evidence lower bound (ELBO) or minimizing its negation:
\begin{align*}
-\log p(y) 
 &\leq 
- \int  q(z) \log \left( \frac{ p(z)}{q(z)} \prod_i p(y_i|z_i) \right) dz \\
 &=  \sum_i E_{q(z_i)} [-\log p(y_i|z_i) ]  +\beta \ d_{KL}(q(z) \| p(z))  
 \end{align*}
where $d_{KL}$ is the Kullback-Leibler divergence, and $\beta=1$ (but we discuss other values of $\beta$ below).
From this perspective variational inference is seen to perform regularized loss minimization, 
with $d_{KL}$ as the regularizer. 
But viewed in this manner the loss on example $i$ is assumed to be $E_{q(z_i)} [-\log p(y_i|z_i) ]$ which is not the intended process for a Bayesian predictor. 
Instead, given a posterior, $q(z)$, the Bayesian algorithm first calculates its predictive distribution $q(y_i)=E_{q(z_i)} [p(y_i|z_i) ]$, potentially calculates a prediction $\hat{y}_i$, and then suffers 
a loss that depends on the context in which the algorithm is used. 
For the case of log-loss, where $\hat{y}_i$ is not used, the loss term is 
$-\log q(y_i)$ and the corresponding objective is  
\begin{align*}
&\mbox{LogLoss DLM objective} \\
&=\sum_i -\log E_{q(z_i)}[p(y_i|z_i)] + \beta \ d_{KL}(q(z),p(z)).
\end{align*}
Comparing LogLoss DLM to the ELBO we see that the main difference is the log term which is applied before the expectation. 
On the other hand, if we care about  
square loss in the case of regression, the training criterion becomes 
\begin{align*}
&\mbox{squareLoss DLM objective} \\
&=\sum_i (\hat{y}_i- y_i)^2 + \beta \ d_{KL}(q(z),p(z))
\end{align*}
and other losses will similarly lead to different objectives. 
This distinction is in contrast with some previous work that aims to find the best posterior without regard to its intended use. 
Our experimental evaluation shows that this distinction is important in practice.

Applying DLM for non-conjugate prediction raises 
the difficulty of optimizing objectives including $\log E_{q(z_i)}[p(y_i|z_i)]$.
The standard  Monte Carlo estimate of the objective, $\log \frac{1}{L} \sum_k p(y_i|z_i^{(k)})$, where $z_i^{(k)}\sim q(z_i)$ (or its reparameterized version) is biased leading to biased gradients --- we call this approach bMC.
We make two technical contributions in this context.
The first is a new method, uPS, for unbiased estimates of gradients for objectives with log-expectation terms through Product Sampling. 
The method is general and we develop a practical version 
for the case when $q(z_i)$ is Gaussian. 
Our second contribution is a theoretical analysis of bMC, 
showing that (under some technical conditions) stochastic gradient descent using bMC gradients converges 
despite the bias. 
bMC has been used in some prior work either explicitly or implicitly and therefore the result may be of independent interest. 

An empirical evaluation in sGP for regression, classification and count prediction
compares log-loss DLM, to ELBO, as well as  $\beta$-ELBO (which explicitly optimizes the regularization parameter for ELBO). 
The evaluation
shows that DLM is an effective approach which in some cases matches and in some cases significantly improves over the performance of variational inference and $\beta$-ELBO. 
Results comparing the sampling methods show that uPS is potentially more sample-efficient but bMC provides a better tradeoff in terms of convergence time and computational efficiency.

To summarize, the paper develops new analysis for sampling methods and optimization with log-expectation terms,
shows how this can be incorporated in DLM for sGP, and shows empirically that DLM has the potential for significant performance improvements over ELBO.

\section{ELBO and DLM for Sparse GP}

In this section we review sGP and the development of ELBO and DLM for this model. 
The GP \citep{Rasmussen2006} is a flexible Bayesian model capturing functions over arbitrary spaces but the complexity of inference in GP is cubic in the number of examples $n$.
Sparse GP solutions reduce this complexity to $O(M^2 n)$ where $M$ is the number of pseudo inputs which serve as an approximate sufficient statistic for prediction. The two approaches most widely used are FITC \citep{snelson2006sparse} and the variational solution of \cite{titsias2009variational}.
The variational solution has been extended for large datasets and general likelihoods and is known as SVGP \citep{Hensman2013,hensman2015scalable,Sheth2015,bauer2016understanding}.

In sGP, the GP prior jointly generates the pseudo values $u$ and the latent variables $f$  
which we write as $p(u)p(f|u)$ 
and the observations $y=\{y_i\}$ are generated from the likelihood model $p(y_i|f_i)$.
Most previous works use a restricted form for the posterior $q(u,f)=q(u)p(f|u)$ where where $q(u)={\cal N}(m,V)$ is Gaussian and where the conditional $p(f|u)$ remains fixed from the prior. 
Although sGP is slightly more general than the model discussed in the introduction a similar derivation yields the same forms for ELBO and DLM as above, where the loss term in the ELBO is 
$E_{q(u)p(f_i|u)}[-\log p(y_i|f_i)]=E_{q(f_i)}[-\log p(y_i|f_i)]$. SVGP optimizes the objective through reparameterization. The collapsed form  \citep{titsias2009variational}
for the regression case uses the fact that $E_{q(f_i)}[-\log p(y_i|f_i)]$ has an analytic solution and through it derives an analytic solution for $m,V$ so that only hyperparameters need to be optimized explicitly. FITC \citep{snelson2006sparse} is not specified using the same family of objective functions but has a related collapsed form which is used in our experiments. 

The log-loss term for DLM is 
$-\log  E_{q(u)p(f_i|u)}[p(y_i|f_i)]$ $= -\log   E_{q(f_i)}[p(y_i|f_i)]$ $= -\log  q(y_i)$.
Since both $q(u)$ and $p(f_i|u)$ are Gaussian distributions, the marginal $q(f_i)$ is also Gaussian with mean $\mu_i = K_{iu} K_{uu}^{-1} m$ and variance $v_i = K_{ii} + K_{iu} K_{uu}^{-1} (V - K_{uu}) K_{uu}^{-1} K_{ui}$
where $K_{uu}=K(u,u)$, $K_{iu}=K(x_i,u)$ etc.

In the following we consider log loss for regression, binary prediction through Probit regression and count prediction through Poisson regression. 
For regression we have $p(y_i|f_i)={\cal N}(f_i,\sigma_n^2)$ and the loss term is 
$-\log q(y_i)
=-\log {\cal N} (y_i | \mu_i, v_i + \sigma_n^2)$.
For probit regression $p(y_i=1|f_i)=\Phi(f_i)$ where $\Phi(f)$ is the CDF of the standard normal distribution. 
Here we have for $y_i\in\{0,1\}$, 
$-\log q(y_i)
=-\log \Phi \left(\frac{(2 y_i - 1) \mu_i}{\sqrt{v_i + 1}} \right)$.
In both cases we can calculate derivatives directly through $-\log q(y_i)$.
For Poisson regression (with log link function) we have 
$p(y_i|f_i)=e^{-e^{f_i}}e^{y_i f_i}/y_i!$ and we do not have a closed form for $q(y_i)$. In this case we must resort to sampling when optimizing the DLM objective. 

For square loss, $q(y_i)$ is the same as in the regression case, but calculating the loss requires optimal prediction $\hat{y}_i$.
In this case, the optimal prediction is the mean of the predictive distribution,
that is 
$\hat{y}_i=K_{iu}  K_{uu}^{-1}  m$.
Therefore the loss term in square loss DLM is $\frac{1}{2}(K_{iu}  K_{uu}^{-1} m -y_i)^2$. 
It is easy to show that the the optimization criterion simplifies into an objective that depends only on $m$, and the square loss DLM objective for sparse GP is 
$\frac{1}{2}\sum_i (K_{iu}  K_{uu}^{-1} m -y_i)^2 +  \frac{\beta}{2} m^T K_{uu}^{-1} m$.

To summarize, both ELBO and DLM include a loss term and KL regularization term. 
When the loss term is analytically tractable optimization can be performed as usual. When it is not, solutions use sampling where ELBO can use unbiased estimates of derivatives through reparameterization, but log-loss DLM has to compute derivatives for log-expectation terms which are more difficult.

\section{Unbiased Gradient Estimates}

In this section we develop a new approach for gradients of log-expectation terms. 
In particular, we describe an extension of a standard technique from the Reinforce algorithm \citep{Williams1992simple} that yields unbiased gradient estimates,
by sampling from a product of distributions.
The following proposition describes the technique. 
\begin{proposition}
    The estimate
    \begin{equation}
        \hat{G}(\theta) 
        = 
        \nabla_\theta \log q(f^{(l)}|\theta),
        \label{eq:unbiased-G}
    \end{equation}
    where $f^{(l)}\sim \tilde{q}(f^{(l)}|\theta)$ and $\tilde{q}(f|\theta)=\frac{q(f|\theta)p(y|f)}{\E_{q(f|\theta)} p(y|f)}$, is an unbiased estimate of $\nabla_\theta \log \E_{q(f|\theta)} p(y|f)$.
\end{proposition}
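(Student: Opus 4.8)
The plan is to reduce the claim to the classical score-function (REINFORCE) identity followed by a change of measure. Write $Z(\theta) = \E_{q(f|\theta)}[p(y|f)] = \int q(f|\theta)\, p(y|f)\, df$, so that the target quantity is $\nabla_\theta \log Z(\theta) = \nabla_\theta Z(\theta)\,/\,Z(\theta)$. The key structural observation is that $p(y|f)$ does not depend on $\theta$: the parameter enters only through $q(f|\theta)$, so the whole problem is a differentiation of an integral against the fixed weight $p(y|f)$.

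First I would differentiate $Z$ under the integral sign and apply the log-derivative identity $\nabla_\theta q(f|\theta) = q(f|\theta)\,\nabla_\theta \log q(f|\theta)$, which gives $\nabla_\theta Z(\theta) = \int q(f|\theta)\, p(y|f)\, \nabla_\theta \log q(f|\theta)\, df$. Dividing by $Z(\theta)$ and recognizing that $q(f|\theta)\,p(y|f)/Z(\theta)$ is precisely the product density $\tilde q(f|\theta)$ defined in the statement, the ratio rewrites as an expectation under $\tilde q$, namely $\nabla_\theta \log Z(\theta) = \E_{\tilde q(f|\theta)}[\nabla_\theta \log q(f|\theta)]$. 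Since a single draw $f^{(l)} \sim \tilde q(\cdot|\theta)$ yields $\hat G(\theta) = \nabla_\theta \log q(f^{(l)}|\theta)$, and its expectation under $\tilde q$ is exactly this quantity, unbiasedness follows immediately. This is the ``product sampling'' reading: the reweighting by $p(y|f)$ is absorbed into sampling from the product $q \cdot p(y|f)$ rather than into the integrand.

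The only genuinely technical step is justifying the interchange of $\nabla_\theta$ and the integral defining $Z$, which requires a dominated-convergence (uniform-integrability) condition on $q(f|\theta)\,p(y|f)$ and its $\theta$-gradient; this holds in the setting considered here, where $q$ is Gaussian and the likelihood is integrable. A point worth flagging to forestall confusion is that although the sampling distribution $\tilde q(\cdot|\theta)$ itself depends on $\theta$, the estimator differentiates only $\log q$ and \emph{not} the sampling measure; unbiasedness is a statement about the expectation at a single fixed $\theta$, so no additional terms arising from the $\theta$-dependence of $\tilde q$ enter the computation. I expect the main obstacle is therefore not the short algebraic chain above but rather the regularity justification for differentiating under the integral, together with the separate practical question (handled elsewhere in the paper) of how to actually draw samples from $\tilde q$ when $q$ is Gaussian.
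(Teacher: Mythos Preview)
Your proposal is correct and follows essentially the same argument as the paper: both write $\nabla_\theta \log Z(\theta)=\nabla_\theta Z(\theta)/Z(\theta)$, apply the score-function (REINFORCE) identity $\nabla_\theta q = q\,\nabla_\theta \log q$ to rewrite the numerator as $\E_{q}[p(y|f)\,\nabla_\theta \log q(f|\theta)]$, and then absorb the weight $p(y|f)/Z(\theta)$ into the sampling distribution $\tilde q$. Your additional remarks on the regularity condition for differentiating under the integral and on the irrelevance of the $\theta$-dependence of $\tilde q$ are sound and go slightly beyond what the paper states explicitly, but the core proof is identical.
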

\begin{proof}
The true derivative $G(\theta)=\nabla_\theta \log \E_{q(f|\theta)} p(y|f)$ is given by
\begin{equation}
    \frac{\nabla_\theta \E_{q(f|\theta)} p(y|f)}{\E_{q(f|\theta)} p(y|f)} = \frac{G_n(\theta)}{\E_{q(f|\theta)} p(y|f)}.
    \label{eq:true-deriv}
\end{equation}
We next observe using  \citep{Williams1992simple} that $G_n(\theta)$ can be written as
\begin{equation}
    G_n(\theta) = \E_{q(f|\theta)}\Big[p(y|f) \nabla_\theta \log q(f|\theta)\Big].
    \label{eq:rl-trick}
\end{equation}
The expectation of 
(\ref{eq:unbiased-G}) with respect to the sample $f^{(l)}$ is given by %
\begin{align*}
    &\E_{\tilde{q}(f^{(l)}|\theta)} \nabla_\theta \log q(f^{(l)}|\theta) \\
    & = 
    \int_{f^{(l)}} \Big[{\nabla_\theta \log q(f^{(l)}|\theta)}\Big] \frac{q(f^{(l)}|\theta)p(y|f^{(l)})}{C} \text{d}f^{(l)}
    \\
    & = 
    \frac{1}{C}
    \E_{q(f^{(l)}|\theta)}\Big[{p(y|f^{(l)}) \nabla_\theta \log q(f^{(l)}|\theta)}\Big]
    =
    \frac{G_n(\theta)}{C}
    =
    G(\theta),
\end{align*}
where $C=\E_{q(f|\theta)} p(y|f)$, and 
the second-to-last equality follows from the identity (\ref{eq:rl-trick}).
\end{proof}

The derivation in the lemma is general and does not depend on the form of $f$. However, the estimate can have high variance and  in addition the process of sampling can be expensive. 
In this paper we develop an effective rejection sampler for the case where $f$ is 1-dimensional and $q(f)={\cal N}(\mu,\sigma^2)$. We provide a sketch here and full details are given in the supplement. 
Let $\ell(f)=p(y|f)$. 
To avoid a high rejection rate we sample from $h_2(f)={\cal N}(\mu,n \sigma^2)$ with the same mean as $q()$ but larger variance. We optimize the width multiplier $n$ to balance rejection rate in the region between intersection points of $q()$ and $h_2()$ (where $q()$ is larger) and outside this region ($q()$ is smaller). It is easy to show that this gives a valid rejection sampler with $K=\mbox{max}_f \ell(f)$, that is, $h_2(f) K \geq q(f)\ell(f)$. 
This construction requires separate sampling for each example in a batch and significant speedup can be obtained by partly vectorizing the individual samples.
\section{Convergence with Biased Gradients}

\newcommand{\draw}{\ell}
\newcommand{\draws}{L}
\newcommand{\fidraw}{f_i^{(\draw)}}
\newcommand{\eidraw}{\epsilon_i^{(\draw)}}
\newcommand{\smoothness}{\alpha}
\newcommand{\mcsample}{\{\epsilon^{(\draw)}\}_{\draw=1}^\draws}
\newcommand{\gap}[1]{\sqrt{\frac{1}{2\draws}\log \frac{#1}{\delta}}}
\newcommand{\gaptwo}[1]{\sqrt{\frac{1}{\draws}\log \frac{#1}{\delta}}}

This section shows that biased Monte Carlo estimates can be used to optimize the DLM objective. 
For presentation clarity, in this section we scale the objective by the number of examples $n$ to get $ -\frac{1}{n} \sum_i \log E_{q(f_i)}[p(y_i|f_i)] + \beta \frac{1}{n} \ d_{KL}(q(),p())$.\footnote{For the sparse GP case, the KL term is over the inducing inputs, whereas for the simpler model in the introduction, the KL term is over $f$.
} 
Let $r\defined(m,V)$
and consider the univariate distribution $q(f_i|r)\defined\mathcal{N}(f_i|a_{i,1}^\top m+b_{i,1},a_{i,2}^\top V a_{i,2}+b_{i,2})$
for known vector $a_{i,1},a_{i,2}$ and scalar constants $b_{i,1},b_{i,2}$.
This form includes many models including sGP. 
In the following, references to the parameter $V$ and gradients w.r.t.\ it should be understood as appropriately vectorized.
We consider the reparameterized objective
$
\objective_i(r)
=$
$-\log \E_{\mathcal{N}(\epsilon|0,1)} p(y_i|f_i=g_i(r,\epsilon))
$
and its gradient
\begin{align}
    \nabla_r \objective_i(r)
    &=
    -
    \frac{
        \nabla_r \E_{\mathcal{N}(\epsilon|0,1)} p(y_i|f_i=g_i(r,\epsilon))
    }{
        \E_{\mathcal{N}(\epsilon|0,1)} p(y_i|f_i=g_i(r,\epsilon))
    } \nonumber\\
    &=
    -
    \frac{
        \E_{\mathcal{N}(\epsilon|0,1)}\Big[
            \frac{\partial}{\partial f_i} \big[ p(y_i|f_i=g_i(r,\epsilon)) \big]
            \nabla_r g_i(r,\epsilon)
        \Big]
    }{
        \E_{\mathcal{N}(\epsilon|0,1)} p(y_i|f_i=g_i(r,\epsilon))
    }
    \label{eq:n-true-reparam-grad-i}
    ,
\end{align}
where
$
g_i(r,\epsilon)
=
\sqrt{a_{i,2}^\top V a_{i,2} + b_{i,2}}\epsilon
+
a_{i,1}^\top m + b_{i,1}
.
$
Letting
$\phi_i(r,\epsilon):=p(y_i|f_i=g_i(r,\epsilon))$
,
$\phi_i'(r,\epsilon):=\frac{\partial}{\partial f_i}p(y_i|f_i=g_i(r,\epsilon))$,
and
$\phi_i''(r,\epsilon):=\frac{\partial^2}{\partial f_i^2}p(y_i|f_i=g_i(r,\epsilon))$,
the components of the gradient in \cref{eq:n-true-reparam-grad-i} are
\begin{align}
    \label{eq:true-reparam-grad-m-i}
    \nabla_m \objective_i(r)
    & =
    -
    \frac{
        \E_{\mathcal{N}(\epsilon|0,1)}
        \big[
            \phi_i'(r,\epsilon)
        \big]
    }{
        \E_{\mathcal{N}(\epsilon|0,1)}
        \big[
            \phi_i(r,\epsilon)
        \big]
    }
    a_{i,1}
    ,
\end{align}
\begin{align}
    \nabla_V \objective_i(r)
    & =
    -
    \frac{
        \E_{\mathcal{N}(\epsilon|0,1)}
        \big[
            \phi_i'(r,\epsilon)
            \epsilon
        \big]
    }{
        \E_{\mathcal{N}(\epsilon|0,1)}
        \big[
            \phi_i(r,\epsilon)
        \big]
    }
    \frac{a_{i,2} a_{i,2}^\top}{2\sqrt{a_{i,2}^\top V a_{i,2} + b_{i,2}}}
    \nonumber\\
    \label{eq:true-reparam-grad-V-i}
    &=
    -
    \frac{
        \E_{\mathcal{N}(\epsilon|0,1)}
        \big[
            \phi_i''(r,\epsilon)
        \big]
    }{
        \E_{\mathcal{N}(\epsilon|0,1)}
        \big[
            \phi_i(r,\epsilon)
        \big]
    }
    \frac{a_{i,2} a_{i,2}^\top}{2}
    ,
\end{align}
where the final equality holds under various conditions \citep{opper2008variational,rezende2014stochastic}.

We consider the bMC procedure that replaces the fraction in the true gradients of the loss term with 
$(\sum_{\draw=1}^\draws \nabla_r p(y_i|\fidraw)) / (\sum_{\draw=1}^\draws p(y_i|\fidraw))$
where
$\fidraw\sim q(f_i|r),1\le\draw\le\draws.$
The corresponding bMC estimates of the gradients are
\begin{align}
    \label{eq:n-step-m-i}
    d_{i,m}(r)
    & \defined
    \frac{
        \sum_{\draw=1}^\draws
        \phi_i'(r,\epsilon^{(\draw)})
    }{
        \sum_{\draw=1}^\draws
        \phi_i(r,\epsilon^{(\draw)})
    }
    a_{i,1}
    \\
    \label{eq:n-step-V-i}
    d_{i,V}(r)
    & 
    \defined
    \frac{
        \sum_{\draw=1}^\draws
        \phi_i''(r,\epsilon^{(\draw)})
    }{
        \sum_{\draw=1}^\draws
        \phi_i(r,\epsilon^{(\draw)})
    }
    \frac{a_{i,2} a_{i,2}^\top}{2}
\end{align}
where
$\mcsample$
are drawn i.i.d.\ from
$\mathcal{N}(\epsilon|0,1).$

Our proof uses
the following result from \cite{BertsekasTs1996} establishing conditions under which deterministic gradient descent with errors converges:
\begin{proposition}[Proposition 3.7 of \cite{BertsekasTs1996}]
    Let $r_t$ be a sequence generated by a gradient method
    $r_{t+1}=r_t+\gamma_t d_t$, where $d_t=(s_t+w_t)$ and $s_t$ and $w_t$  satisfy
    (i) $c_1\Vert \nabla\objective(r_t) \Vert^2 \le -\nabla \objective(r_t)^\top s_t$,
    (ii) $\Vert s_t\Vert \le c_2\Vert \nabla\objective(r_t)\Vert$,
    and
    (iii) $\Vert w_t\Vert \le \gamma_t (c_3 + c_4\Vert \nabla\objective(r_t)\Vert)$
    for some positive constants $c_1$,$c_2$, $c_3$, $c_4$.
    If $\nabla\objective()$ is Lipschitz and
    $\sum_{t=0}^\infty \gamma_t^2=0$ and $\sum_{t=0}^\infty \gamma_t=\infty$,
    then either $\objective(r_t)\rightarrow -\infty$ or else
    $\objective(r_t)$ converges to a finite value and
    $\lim_{t\rightarrow\infty}\nabla \objective(r_t)=0$.
    \label{prop:bertsekas}
\end{proposition}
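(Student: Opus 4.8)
The plan is to treat this as a standard ``gradient method with errors'' result and prove it by combining a one-step descent inequality with a deterministic convergence lemma for sequences satisfying a supermartingale-type recursion. Since $\nabla\objective$ is Lipschitz with some constant $L$, the descent lemma applied to the update $r_{t+1}=r_t+\gamma_t d_t$ with $d_t=s_t+w_t$ gives
\[
\objective(r_{t+1}) \le \objective(r_t) + \gamma_t\nabla\objective(r_t)^\top s_t + \gamma_t\nabla\objective(r_t)^\top w_t + \tfrac{L}{2}\gamma_t^2\Vert s_t+w_t\Vert^2.
\]
First I would bound each term on the right using the three hypotheses. Condition (i) controls the main term, $\nabla\objective(r_t)^\top s_t \le -c_1\Vert\nabla\objective(r_t)\Vert^2$; Cauchy--Schwarz together with (iii) gives $\nabla\objective(r_t)^\top w_t \le \gamma_t\Vert\nabla\objective(r_t)\Vert(c_3+c_4\Vert\nabla\objective(r_t)\Vert)$; and (ii)--(iii) bound $\Vert s_t+w_t\Vert^2$ by a constant times $(1+\Vert\nabla\objective(r_t)\Vert^2)$ once $\gamma_t\le 1$.

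Collecting terms and using $\Vert\nabla\objective(r_t)\Vert \le \tfrac12(1+\Vert\nabla\objective(r_t)\Vert^2)$ to fold the cross term into the quadratic, the inequality takes the form
\[
\objective(r_{t+1}) \le \objective(r_t) - \gamma_t(c_1 - B\gamma_t)\Vert\nabla\objective(r_t)\Vert^2 + C\gamma_t^2
\]
for constants $B,C$ depending only on $L,c_1,\dots,c_4$. Because the step sizes are square-summable they tend to $0$, so there is a $T$ beyond which $B\gamma_t\le c_1/2$ and the bracket is at least $c_1/2>0$. For $t\ge T$ this matches the template $a_{t+1}\le a_t - b_t + e_t$ with $b_t=\tfrac{c_1}{2}\gamma_t\Vert\nabla\objective(r_t)\Vert^2\ge 0$ and $\sum_t e_t = C\sum_t\gamma_t^2<\infty$.

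Next I would invoke the deterministic convergence lemma for such sequences, which yields that $\objective(r_t)$ either diverges to $-\infty$ or converges to a finite limit, and in the latter case that $\sum_t\gamma_t\Vert\nabla\objective(r_t)\Vert^2<\infty$. Combining this summability with the hypothesis $\sum_t\gamma_t=\infty$ immediately forces $\liminf_t\Vert\nabla\objective(r_t)\Vert=0$, since a gradient norm bounded away from $0$ along a divergent $\sum\gamma_t$ would make the weighted sum infinite.

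The hard part is upgrading $\liminf=0$ to the full statement $\lim_t\Vert\nabla\objective(r_t)\Vert=0$. Here I would argue by contradiction: if $\Vert\nabla\objective(r_t)\Vert\ge\epsilon$ infinitely often while also dropping below $\epsilon/2$ infinitely often, then between each down-crossing below $\epsilon/2$ and the next up-crossing above $\epsilon$ the iterates must traverse the band where $\Vert\nabla\objective\Vert\in[\epsilon/2,\epsilon]$. Lipschitz continuity of $\nabla\objective$ together with $\Vert r_{t+1}-r_t\Vert=\gamma_t\Vert d_t\Vert$ shows the gradient norm can change only by $O(\gamma_t)$ per step, so each crossing accumulates a lower-bounded amount of $\sum\gamma_t\Vert\nabla\objective\Vert^2$; infinitely many crossings then contradict the finiteness established above, giving $\lim_t\Vert\nabla\objective(r_t)\Vert=0$. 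This band-crossing argument, and the bookkeeping needed to verify that $\Vert d_t\Vert$ and hence the step lengths are uniformly controlled on the band, is the most delicate step; the descent inequality and the convergence lemma are otherwise routine.
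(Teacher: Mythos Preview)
The paper does not prove this proposition at all; it is quoted as Proposition~3.7 of Bertsekas and Tsitsiklis (1996) and used as a black box in the proof of the subsequent corollary. Your outline is correct and is precisely the standard argument from that reference: apply the descent lemma, bound the inner-product and squared-norm terms via conditions (i)--(iii), reduce to a recursion of the form $a_{t+1}\le a_t-b_t+e_t$ with $\sum_t e_t<\infty$, invoke the deterministic sequence-convergence lemma to obtain convergence of $\objective(r_t)$ and $\sum_t\gamma_t\Vert\nabla\objective(r_t)\Vert^2<\infty$, and finish with the band-crossing argument to upgrade $\liminf$ to $\lim$. For what it is worth, the paper's appendix proves a stochastic variant (its Proposition~8, adapted from Proposition~4.1 of the same reference) following exactly this template, so your approach matches both the original source and the paper's treatment of the analogous result.
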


The next condition is needed for the proof, and as shown by the following proposition it is easy to satisfy.

{\small 
\scriptsize 
\begin{table*}[t]
\caption{derivative bounds for different models}
\label{table:derivative-bounds}
\begin{center}
\begin{tabular}{l|ccccc}
    Likelihood & $B$ & $b'$ & $B'$ & $b''$ & $B''$ \\
    \hline
    Logistic, $\sigma(yf)$
        & $1$ & $-\tfrac{1}{4}$ & $\tfrac{1}{4}$ & $-\tfrac{1}{4}$ & $\tfrac{1}{4}$ \\
    \hline
    Gaussian, $e^{-(y-f)^2/2\sigma^2},c=\frac{1}{\sqrt{2\pi}\sigma}$
        & $c$ & $-{\tfrac{c}{\sqrt{e}\sigma} }$ & ${\tfrac{c}{\sqrt{e}\sigma} }$ & $-\tfrac{c}{\sigma^2}$ & $\tfrac{2c}{\sigma^2 e^{3/2}}$ \\
    \hline
    Probit, $\Phi(yf)$, $\Phi$ is cdf of Gaussian 
        & $1$ & $-1/\sqrt{2\pi}$ & $1/\sqrt{2\pi}$ & $-1/\sqrt{2\pi e}$ & $1/\sqrt{2\pi e}$ \\
    \hline
    Poisson, $\frac{g(f)^y e^{g(f)}}{y!}$,$g(f)=\log(e^f+1)$
        & $1$ & $-1$ & $1$ & $-2.25$ & $2.25$ \\
    \hline
    Poisson, $\frac{g(f)^y e^{g(f)}}{y!}$,$g(f)=e^f$
        & $1$ & $-y-1$ & $y$ & $-y-1/4$ & $2 y^2 + 3y + 2$ \\ 
    \hline
    Student's t, $c(1+\frac{(y-f)^2}{\sigma^2\nu})^{-\tfrac{\nu+1}{2}},c=\frac{\Gamma(\frac{\nu+1}{2})}{\Gamma(\frac{\nu}{2})\sqrt{\pi\nu}\sigma}$
    & $c$ & $-\frac{\frac{c}{\sigma}\frac{\nu+1}{\nu}\sqrt{\frac{\nu}{\nu+2}}}{(\frac{\nu+3}{\nu+2})^{(\nu+3)/2}}$ & $\frac{\frac{c}{\sigma}\frac{\nu+1}{\nu}\sqrt{\frac{\nu}{\nu+2}}}{(\frac{\nu+3}{\nu+2})^{(\nu+3)/2}}$ & $-\frac{c}{\sigma^2}\frac{\nu+1}{\nu}$ & 
    $2\frac{c}{\sigma^2}\frac{\nu+1}{\nu}(\frac{\nu+2}{\nu+5})^{(\nu+5)/2}$
\end{tabular}
\end{center}
\end{table*}
}

\begin{assumption}
\label{assumption:derivative-bound}
There exist finite constants $B,b',B',b'',B''$ such that
$B\ge \phi_i(r,\epsilon) \ge 0$,
$B'\ge \phi_i'(r,\epsilon) \ge b'$
and
$B''\ge \phi_i''(r,\epsilon) \ge b''$. Further, denote $B^*=\mbox{max}\{B,|B'|,|B''|,|b'|,|b''|\}$.
\end{assumption}

\begin{proposition}
Assumption \ref{assumption:derivative-bound} holds for the likelihood models  shown in Table \ref{table:derivative-bounds}.
\end{proposition}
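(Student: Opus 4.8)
The plan is to reduce the proposition to a careful but routine calculus computation carried out one row of Table~\ref{table:derivative-bounds} at a time. Since the reparameterization map $g_i(r,\epsilon)$ ranges over all of $\mathbb{R}$ as $\epsilon$ varies (the variance $a_{i,2}^\top V a_{i,2}+b_{i,2}$ being positive), bounding $\phi_i,\phi_i',\phi_i''$ over all $(r,\epsilon)$ is equivalent to bounding $p(y|f)$ and its first two derivatives in $f$ uniformly over $f\in\mathbb{R}$ (and, where relevant, over admissible $y$). In every row $\phi_i\ge 0$ because $p(y|f)$ is a density or probability mass, and the stated $B$ is just the value of $p(y|f)$ at its mode in $f$: this is $1$ for the logistic, probit and Poisson likelihoods and the normalizing constant $c$ for the Gaussian and Student's~$t$ likelihoods. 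So the content of the proposition lies in the first- and second-derivative bounds.

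For the logistic and probit rows I would differentiate the link directly. With $s=\sigma(yf)$ one has $\partial_f p = y\,s(1-s)$ and $\partial_f^2 p = y^2 s(1-s)(1-2s)$, so the bounds follow from $s(1-s)\le\tfrac14$ and $|1-2s|\le 1$, giving $|\phi_i'|\le\tfrac14$ and $|\phi_i''|\le\tfrac14$. For probit, $\partial_f\Phi(yf)=y\varphi(yf)$ and $\partial_f^2\Phi(yf)=\varphi'(yf)$ with $\varphi$ the standard normal density, so the bounds are $\varphi(0)=1/\sqrt{2\pi}$ and $\max_z|z|\varphi(z)=\varphi(1)=1/\sqrt{2\pi e}$. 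For the Gaussian and Student's~$t$ rows I would substitute $z=(f-y)/\sigma$, at which point $\phi_i'$ and $\phi_i''$ each become $c/\sigma^k$ times an explicit univariate function of $z$ whose critical points are the roots of a polynomial factor; evaluating there reproduces the table entries (e.g.\ for the Gaussian, $\phi_i'\propto z e^{-z^2/2}$ is extremal at $z=\pm1$ and $\phi_i''\propto(z^2-1)e^{-z^2/2}$ at $z=0$ and $z=\pm\sqrt3$).

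The Poisson rows are the crux, and here I would avoid naive differentiation in favour of two complementary representations of the derivatives of the Poisson mass $P(y;\lambda)=\lambda^y e^{-\lambda}/y!$ evaluated at $\lambda=g(f)$. The first is the recurrence $\partial_\lambda P(y;\lambda)=P(y-1;\lambda)-P(y;\lambda)$ (with $P(-1;\lambda)\equiv 0$), which composed with the chain rule through the link gives $\phi_i'=g'(f)\,[P(y-1)-P(y)]$ and an analogous two-term expression for $\phi_i''$; bounding every mass by $0\le P(\cdot;\lambda)\le 1$ then yields the constants. For the softplus link this is exactly what makes the bounds $y$-independent, since $g'(f)=\sigma(f)\in(0,1)$ and $g''(f)=\sigma(f)(1-\sigma(f))\in(0,\tfrac14]$, so $|\phi_i'|\le1$ and $|\phi_i''|\le 2\cdot1+1\cdot\tfrac14=\tfrac94$. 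For the exponential link $g'=g''=\lambda$, and the recurrence form collapses $\phi_i'$ and $\phi_i''$ into integer combinations of $P(y),P(y+1),P(y+2)$ whose independent $[0,1]$ bounds give the upper constants $B'=y$ and $B''=2y^2+3y+2$.

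The main obstacle, and the one place where the recurrence form does not deliver the stated value, is the \emph{lower} bound $b''=-y-\tfrac14$ for the exponential-link Poisson. For this I would switch to the log-derivative representation $\phi_i''=p(y|f)\,[(y-\lambda)^2-\lambda]$: the bracket is a quadratic in $\lambda$ minimized at $\lambda=y+\tfrac12$ with value $-y-\tfrac14$, and since $0\le p(y|f)\le 1$, multiplying a negative bracket by $p$ can only move it toward zero, so $\phi_i''\ge -(y+\tfrac14)$. Recognizing that the clean upper and lower bounds come from two \emph{different} forms of the same second derivative is the key step; what remains is the Student's~$t$ second-derivative critical-point computation, which is tedious but, as in the Gaussian case, reduces to locating the roots of an explicit polynomial in $z$ and evaluating.
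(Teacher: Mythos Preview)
Your proposal is correct and follows essentially the same approach as the paper's proof: direct differentiation and critical-point evaluation for the logistic, probit, Gaussian, and Student's~$t$ rows, and for the Poisson rows the same two complementary representations---the recurrence $\partial_\lambda P(y;\lambda)=P(y-1;\lambda)-P(y;\lambda)$ composed with the chain rule for the upper bounds, and the completed-square form $\phi_i''=P(y;\lambda)\big[(\lambda-(y+\tfrac12))^2-y-\tfrac14\big]$ for the lower bound $b''$ in the exponential-link case. The paper's write-up is just a more explicit version of exactly these computations.
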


The proof of the proposition is given in the supplement. 
We can now state the main result of this section:
\begin{corollary}
\label{corollary:convergence}
Suppose Assumption \ref{assumption:derivative-bound} holds. If for every $t$ and $i$, $\E_{q(f_i|r)}p(y_i|f_i)\ge\zeta>0$ and 

\begin{align}
\draws
    & >
    \frac{\log (6n/\delta_t)}{2\gamma_t^2} M \mbox{\ \ \ where } 
    \\
   M & = \max\Bigg\{
        \frac{
            B^2
        }{
            |\E_{\mathcal{N}(\epsilon|0,1)}\phi_i(r,\epsilon)|^2
        }
        ,%
        \frac{
            (B'-b')^2
        }{
            P^2
        }
        ,%
        \frac{
            (B''-b'')^2
        }{
            Q^2
        }
    \Bigg\},
\end{align}
$P=\begin{cases}|\E_{\mathcal{N}(\epsilon|0,1)}\phi_i'(r,\epsilon)|, &\text{if } \E_{\mathcal{N}(\epsilon|0,1)}\phi_i'(r,\epsilon)\neq 0 \\
1, & \text{otherwise} \end{cases}
$,
$Q=\begin{cases}|\E_{\mathcal{N}(\epsilon|0,1)}\phi_i''(r,\epsilon)|, &\text{if } \E_{\mathcal{N}(\epsilon|0,1)}\phi_i''(r,\epsilon)\neq 0 \\
1, & \text{otherwise} \end{cases} 
$,
and $\sum_t \delta_t = \delta$, then with probability at least $1-\delta$, bMC satisfies the conditions of the Proposition~\ref{prop:bertsekas} and hence converges.
\end{corollary}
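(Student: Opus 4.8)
The plan is to instantiate Proposition~\ref{prop:bertsekas} for the bMC run and verify its three conditions through a convenient decomposition of the step direction. Let $d_t$ be the actual bMC step direction at iteration $t$, assembled from the per-example quantities \cref{eq:n-step-m-i,eq:n-step-V-i} together with the exactly-computed KL gradient, and set $s_t=-\nabla\objective(r_t)$ and $w_t=d_t-s_t$. With $s_t$ chosen to be the exact descent direction, conditions (i) and (ii) are immediate: $-\nabla\objective(r_t)^\top s_t=\Vert\nabla\objective(r_t)\Vert^2$ and $\Vert s_t\Vert=\Vert\nabla\objective(r_t)\Vert$, so both hold with $c_1=c_2=1$. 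The whole argument therefore reduces to condition (iii), i.e.\ bounding the bMC error $\Vert w_t\Vert$, which equals the deviation between the bMC gradient and the true gradient $\nabla\objective(r_t)$, by $\gamma_t(c_3+c_4\Vert\nabla\objective(r_t)\Vert)$. This is exactly where the sample-size lower bound on $\draws$ enters.

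The error in each per-example component is the error of a ratio estimator, which I would control through the algebraic identity
\begin{equation*}
\frac{\bar\phi_i'}{\bar\phi_i}-\frac{\E\phi_i'}{\E\phi_i}
=\frac{\bar\phi_i'-\E\phi_i'}{\bar\phi_i}
-\frac{\E\phi_i'}{\E\phi_i}\cdot\frac{\bar\phi_i-\E\phi_i}{\bar\phi_i},
\end{equation*}
where bars denote sample means over the $\draws$ draws (I suppress the $\mathcal{N}(\epsilon|0,1)$ subscript), and the same identity is applied with $\phi_i''$ in place of $\phi_i'$. Assumption~\ref{assumption:derivative-bound} bounds the ranges of $\phi_i,\phi_i',\phi_i''$, so Hoeffding's inequality gives each of the three sample means a deviation of order $(\text{range})\sqrt{\tfrac{1}{2\draws}\log(1/\delta')}$ with probability $1-\delta'$. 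Taking $\delta'=\delta_t/(3n)$ and the stated $\draws>\tfrac{\log(6n/\delta_t)}{2\gamma_t^2}M$ forces each deviation below $\gamma_t$ times the matching scale that appears in $M$ (namely $|\E\phi_i|$, $P$, and $Q$). The hypothesis $\E_{q(f_i|r)}p(y_i|f_i)\ge\zeta>0$ simultaneously keeps the denominator $\bar\phi_i$ bounded away from zero once $\draws$ is this large. Substituting these bounds into the identity makes the first term $O(\gamma_t)$, while the second term is a factor $O(\gamma_t)$ times the true ratio $\E\phi_i'/\E\phi_i$, which is (up to the factor $a_{i,1}$ and a sign) the true gradient component; since every quantity in sight is bounded, both pieces are at most an absolute constant times $\gamma_t$, giving $\Vert w_t\Vert\le\gamma_t c_3$ (so one may even take $c_4=0$).

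To finish, I would take a union bound over the $n$ examples and the three estimated quantities at step $t$ (the source of the $6n=2\cdot 3n$ inside the logarithm, the factor $2$ being Hoeffding's two-sided constant), and then over $t$ using $\sum_t\delta_t=\delta$, so that with probability at least $1-\delta$ condition (iii) holds at every iteration at once. On this event the bMC iteration is a legitimate instance of gradient descent with errors, and Proposition~\ref{prop:bertsekas}---whose remaining hypotheses are the Lipschitzness of $\nabla\objective$ (following from the boundedness in Assumption~\ref{assumption:derivative-bound} together with $\E\phi_i\ge\zeta$) and the usual step-size conditions $\sum_t\gamma_t^2<\infty$, $\sum_t\gamma_t=\infty$---yields convergence. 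I expect the main obstacle to be the ratio-estimator step: one must simultaneously control the numerator deviation, keep the denominator away from zero, and split the bound into a constant part and a gradient-proportional part, all while ensuring that the aggregation of the $n$ per-example errors (each carrying its own $a_{i,1}$ or $a_{i,2}a_{i,2}^\top$ factor) collapses to a clean multiple of $\gamma_t$ rather than inflating the constants with $n$.
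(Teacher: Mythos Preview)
Your proposal is correct and follows the same overall architecture as the paper: set $s_t=-\nabla\objective(r_t)$ so conditions (i) and (ii) are automatic with $c_1=c_2=1$, reduce everything to bounding $\Vert w_t\Vert\le\gamma_t c_3$ (with $c_4=0$) via Hoeffding concentration at level $\alpha=\gamma_t$, take the union bound over the $3n$ events per step and over steps via $\sum_t\delta_t=\delta$, and invoke the $\zeta$ lower bound both to keep denominators away from zero and to obtain Lipschitz gradients.

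The one place you diverge is the ratio-estimator step. The paper packages Hoeffding into a \emph{relative} bound (its Lemma~\ref{lemma:n-relative-bound}) so that each of $\bar\phi_i/\E\phi_i$, $\bar\phi_i'/\E\phi_i'$, $\bar\phi_i''/\E\phi_i''$ lies in $[1-\alpha,1+\alpha]$, and then bounds the difference of ratios by $\big(\tfrac{1+\alpha}{1-\alpha}-1\big)\cdot\big|\E\phi_i'/\E\phi_i\big|$; this forces a separate case analysis when $\E\phi_i'=0$ (or $\E\phi_i''=0$), handled there by reverting to an additive Hoeffding bound. Your additive identity
\[
\frac{\bar\phi_i'}{\bar\phi_i}-\frac{\E\phi_i'}{\E\phi_i}
=\frac{\bar\phi_i'-\E\phi_i'}{\bar\phi_i}
-\frac{\E\phi_i'}{\E\phi_i}\cdot\frac{\bar\phi_i-\E\phi_i}{\bar\phi_i}
\]
treats both cases uniformly (when $\E\phi_i'=0$ the second term vanishes and the first is controlled by the $P=1$ branch of $M$) and is the more standard device for analyzing ratio estimators. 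Either route produces the same per-example bound up to small numerical factors and, after summing over $i$ with the $a_{i,1}$ and $a_{i,2}a_{i,2}^\top$ weights (the paper absorbs these into $A=\max_i\Vert a_i\Vert$, which answers your worry about $n$-dependence), the same conclusion $\Vert w_t\Vert\le\gamma_t\cdot\text{const}$.
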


To prove the claim we introduce the following lemma.
\begin{lemma}[Two-sided relative Hoeffding bound]
    \label{lemma:n-relative-bound}
    Consider i.i.d.\ draws $\{x^{(\draw)}\}$ from a random variable with mean $\mu\ne 0$ and support $[a,b]$.
    For $\delta,\alpha\in(0,1)$,
    if
    $
    \draws
    >
    \frac{1}{2}
    \frac{(b-a)^2}{(\alpha\mu)^2}
    \log\frac{2}{\delta},
    $
    then, w.p. at least $1-\delta$ over $\{x^{(\draw)}\}_{\draw=1}^\draws$,
    $(1/\draws)\sum_\draw x^{(\draw)}$ and ${\mu}$ have the same sign and 
    $
    0
    <
    1-\alpha
    \le
    \frac{(1/\draws)\sum_\draw x^{(\draw)}}{\mu}
    \le
    1+\alpha
    .
    $
\end{lemma}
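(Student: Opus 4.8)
The plan is to reduce the claim to a single application of the two-sided Hoeffding inequality. Writing $\bar{X}=(1/L)\sum_\ell x^{(\ell)}$, Hoeffding's inequality for i.i.d.\ draws with support $[a,b]$ gives $\Pr(|\bar{X}-\mu|\ge t)\le 2\exp\!\big(-2Lt^2/(b-a)^2\big)$ for any $t>0$. The natural move here is to set the absolute-deviation tolerance to $t=\alpha|\mu|$, which converts the desired \emph{relative} bound $|\bar{X}/\mu-1|\le\alpha$ into an ordinary absolute-deviation event that Hoeffding directly controls.

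First I would substitute $t=\alpha|\mu|$ into the bound and demand that the resulting failure probability be at most $\delta$, i.e.\ $2\exp\!\big(-2L\alpha^2\mu^2/(b-a)^2\big)\le\delta$. Taking logarithms and solving for $L$ yields exactly the stated threshold $L>\tfrac{1}{2}\tfrac{(b-a)^2}{(\alpha\mu)^2}\log\tfrac{2}{\delta}$. Hence, under the hypothesis on $L$, the event $|\bar{X}-\mu|\le\alpha|\mu|$ holds with probability at least $1-\delta$. This is the entire probabilistic content of the lemma; everything after is deterministic.

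The remaining step is to translate this absolute-deviation event into the two stated conclusions (same sign, and the two-sided relative bound). From $|\bar{X}-\mu|\le\alpha|\mu|$ I would write $\mu-\alpha|\mu|\le\bar{X}\le\mu+\alpha|\mu|$ and split on the sign of $\mu$. When $\mu>0$ this reads $\mu(1-\alpha)\le\bar{X}\le\mu(1+\alpha)$, and since $\alpha<1$ the lower endpoint is strictly positive, forcing $\bar{X}>0$; dividing by $\mu>0$ gives $1-\alpha\le\bar{X}/\mu\le 1+\alpha$. When $\mu<0$ the same manipulation yields $\mu(1+\alpha)\le\bar{X}\le\mu(1-\alpha)$ with a strictly negative upper endpoint, so $\bar{X}<0$; dividing by the negative $\mu$ flips the inequalities back to $1-\alpha\le\bar{X}/\mu\le 1+\alpha$.

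The main obstacle is bookkeeping rather than anything conceptual: one must carry the sign of $\mu$ through carefully so that $|\mu|$ unfolds correctly in each case and the inequality directions are handled properly when dividing by a possibly negative $\mu$. The assumption $\alpha\in(0,1)$ is exactly what guarantees the strict positivity (resp.\ negativity) of the critical endpoint needed for the same-sign conclusion, so I would invoke it explicitly in both branches. No concentration machinery beyond Hoeffding is required.
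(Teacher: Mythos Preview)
Your proposal is correct and matches the paper's proof: both apply Hoeffding with tolerance $t=\alpha|\mu|$, solve for the sample-size condition, and then read off the relative bound by a sign split. The only cosmetic difference is that the paper dispatches the $\mu<0$ case by applying the $\mu>0$ argument to the negated random variable, whereas you handle it directly by dividing through by a negative $\mu$; the content is identical.
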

\begin{proof}
First, assume $\mu>0$.
From 
Hoeffding's inequality,
we know that if the condition on $\draws$ is met,
then w.p. $\geq 1-\delta$, we have
$
\mu-\alpha\mu \le (1/\draws)\sum_\draw x^{(\draw)} \le \mu+\alpha\mu
$
from which the result follows.
If $\mu<0$, apply the same argument to the negation of the random variable.
\end{proof}

\begin{proof}[Proof of Corollary \ref{corollary:convergence}]
We first show that $\objective$ has Lipshitz gradients. 
This follows from a generalization of the mean-value theorem applied to continuous and differentiable vector-valued functions (see e.g., Theorem 5.19 of \cite{rudin1976principles}).
The Lipschitz constant will be equal to the maximum norm of the gradient over the domain and, in our case, will be finite when $\E_{q(f_i|r)}p(y_i|f_i)\ge\zeta>0.$
Note that it is always the case that the expectation is $>0$ but we must assume a uniform bound for all $t,i$.

Let $d_t=s_t+w_t$ where
$s_t=-\nabla \objective(r_t)$ so that conditions (i),(ii) hold trivially with $c_1=c_2=1$. 
We next develop the expression for $w_t$ to show that condition (iii) holds. 
Now $w_t=\sum_i w_{t,i}$ where $m$'s portion of $w_{t,i}$ is 
\begin{align}
w_{t,i,m} & = 
       \frac{a_{i,1}}{n} 
       \bigg(
        \frac{
            (1/\draws)
            \sum_\draw \phi_i'(r,\epsilon^{(\draw)})
        }{
            (1/\draws)
            \sum_\draw \phi_i(r,\epsilon^{(\draw)})
        } 
        -
        \frac{
            \E_{\mathcal{N}(\epsilon|0,1)}
                \phi_i'(r,\epsilon)
        }{
            \E_{\mathcal{N}(\epsilon|0,1)}
                \phi_i(r,\epsilon)
        }
       \bigg)
    \label{eq:n-wt-i-single}
\end{align}
and a similar expression holds for $V$'s portion.

Our claim follows from three conditions that hold with high probability.
When $E\phi'$ $\not=$ $0$ and $E\phi''\not=0$ 
the conditions require the averages
$
(1/\draws)\sum_\draw \phi_i(r,\epsilon^{(\draw)})$,
$(1/\draws)\sum_\draw \phi_i'(r,\epsilon^{(\draw)})$, 
$(1/\draws)\sum_\draw \phi_i''(r,\epsilon^{(\draw)})$
to be close to their expectations, i.e.\ within $1\pm \alpha$ relative error, 
w.p.\ $\ge 1-\delta/(3n)$.
Using Lemma~\ref{lemma:n-relative-bound}
these are accomplished by assuming that $\draws > \frac{\log (6n/\delta)}{2\alpha^2} M$.

From (\ref{eq:n-wt-i-single}) we have 
\begin{align}
&\Vert w_{t,i,m} \Vert^2 \nonumber \\
&\quad = 
       \frac{{\Vert a_{i,1}\Vert}^2}{n^2} 
       \bigg|
        \frac{
            (1/\draws)
            \sum_\draw \phi_i'(r,\epsilon^{(\draw)})
        }{
            (1/\draws)
            \sum_\draw \phi_i(r,\epsilon^{(\draw)})
        } 
        -
        \frac{
            \E_{\mathcal{N}(\epsilon|0,1)}
                \phi_i'(r,\epsilon)
        }{
            \E_{\mathcal{N}(\epsilon|0,1)}
                \phi_i(r,\epsilon)
        }
       \bigg|^2.
    \label{eq:n-wt-i-norm}
\end{align}
Considering the portion with absolute value,
if $\E_{\mathcal{N}(\epsilon|0,1)}\phi_i'(r,\epsilon)\not = 0$ then both fractions have the same sign.
Then since $\frac{1+\alpha}{1-\alpha}-1 > 1- \frac{1-\alpha}{1+\alpha}$
we have
\begin{align}
\Vert w_{t,i,m} \Vert^2 & \leq 
       \frac{{\Vert a_{i,1}\Vert}^2}{n^2} 
       \bigg(
        (\frac{
           (1+\alpha)
        }{
            (1-\alpha) 
        }  
        - 1)
        \frac{
            \E_{\mathcal{N}(\epsilon|0,1)}
                \phi_i'(r,\epsilon)
        }{
            \E_{\mathcal{N}(\epsilon|0,1)}
                \phi_i(r,\epsilon)
        }
        \bigg)^2 \nonumber\\
        &\le
        \frac{{\Vert a_{i,1}\Vert}^2}{n^2} 
        \bigg(
            \frac{
               2\alpha
            }{
                1-\alpha
            }  
        \bigg)^2
        \bigg(\frac{B^*}{\zeta}\bigg)^2
        ,
    \label{eq:n-wt-i-norm-b}
\end{align}
and using $\alpha\leq 0.5$ we get 
$\Vert w_{t,i,m} \Vert^2  \leq
        \big(
            \frac{4B^*\alpha}{\zeta}
            \frac{{\Vert a_{i,1}\Vert}}{n} 
        \big)^2
$.

When $\E_{\mathcal{N}(\epsilon|0,1)}\phi_i'(r,\epsilon)=0$,
we use the standard Hoeffding bound and  $L>\frac{(B'-b')^2\log (6 n/\delta)}{2\alpha^2}$
to guarantee that $|(1/\draws) \sum_\draw \phi_i'(r,\epsilon^{(\draw)}| \leq \alpha$ w.p.\ $\geq 1-\delta/(3 n)$. 
We also have 
$
(1/\draws)\sum_\draw\phi_i(r,\epsilon^{(\draw)})
\ge
\E_{\mathcal{N}(\epsilon|0,1)}\phi_i(r,\epsilon)(1-\alpha)
\ge 
\zeta(1-\alpha)$, 
and therefore,
for $m$'s portion we have
\begin{align}
\Vert w_{t,i,m} \Vert & \leq 
       \frac{{\Vert a_{i,1}\Vert}}{n} 
      \frac{
            \alpha
        }{
          \zeta(1-\alpha)  
        }
        \leq 
      \frac{{\Vert a_{i,1}\Vert}}{n} 
      \frac{
            2\alpha
        }{
          \zeta
      }
      ,
\end{align}
and we can bound $m$'s portion by the sum of bounds from the two cases:
\begin{align}
    \Vert w_{t,i,m} \Vert & \leq 
      \frac{{\Vert a_{i,1}\Vert}}{n} 
      \frac{
            2\alpha
        }{
          \zeta
      }
      (2B^*+1)
      .
\end{align}
Similar expressions for both cases hold simultaneously for $V$, replacing $a_{i,1}$ with $a_{i,2} a_{i,2}^\top$, and, therefore, combining bounds for $m,V$ we have
\begin{align}
\Vert w_{t,i} \Vert & \leq 
\frac{\Vert a_i\Vert}{n} \frac{2\alpha}{\zeta} (2B^*+1)
\end{align}
where $a_{i}$ is the concatenation of $a_{i,1}$ and the vectorization of $a_{i,2} a_{i,2}^\top$. %

Summing over all examples, we see that
\begin{align}
\Vert w_{t} \Vert  = \Vert \sum_i w_{t,i} \Vert  &\leq \sqrt{\sum_i \sum_j \Vert w_{t,i} \Vert\ \Vert w_{t,j} \Vert} \nonumber \\
      &\leq
A \frac{2\alpha}{\zeta} (2B^*+1)
    \label{eq:wt-i-norn}
\end{align}
where $A=\max_i \Vert a_{i} \Vert$. Using the union bound we see that this holds w.p.\ $\geq 1-\delta$.

To complete the analysis we need to make sure that the above holds for all iterations simultaneously. For this let 
$\delta_t$ be such that $\sum_t\delta_t=\delta$.
For example, $\delta_t = \frac{6}{\pi^2 }\frac{\delta}{t^2}$. Use $\delta_t$ in the definition of $L$ above to obtain the result. 

This satisfies condition (iii) if we set $\alpha$ for step $t$ to be $\alpha_t=\gamma_t$ and set $c_3=A \frac{2\sqrt{2}}{\zeta} (2B^*+1)$ and %
$c_4=0$.
\end{proof}
The implication of the choices of $\alpha_t$ and $\delta_t$ is that the number of samples $L$ increases with $t$. 
Specifically, for $\gamma_t=1/t$ this implies $L\propto t^2 \log (nt)$.
While this is a strong condition, we are not aware of any other analysis for a procedure like bMC. In practice, we use a fixed sample size $L$ in our experiments, and as shown there, the procedure is very effective. 

Notice that Corollary \ref{corollary:convergence} only guarantees convergence with high probability.
By adding a smoothing factor to the denominator of (\ref{eq:n-step-m-i}) and (\ref{eq:n-step-V-i}), we can 
strengthen the result and prove convergence w.p.\ 1. However, smoothing did not lead to a significant difference in results of our experiments. Details of the proof and experimental results are provided in the supplement.

\section{Related Work}

DLM is not a new idea and it can be seen as regularized empirical risk minimization (ERM) which is a standard approach in the frequentist setting. 
An intriguing line of work in the frequentist setting follows \cite{McAllesterHK10} to develop DLM algorithms for non-differentiable losses. 
Extending the ideas in this paper to develop Bayesian DLM for non-differentiable losses is an important challenge for future work.

In the Bayesian context,
DLM can be seen as part of a larger theme which modifies the standard ELBO objective to change the loss term, change the regularization term, and allow for a regularization parameter, as captured by the GVI framework
\citep{knoblauch2019generalized,knoblauch2019robust} which is a view strongly connected to regularized loss minimization.
For example, the robustness literature, e.g., \cite{knoblauch2019generalized,cheriefabdellatif2019mmdbayes,Bissiri_2016,futami2018variational,knoblauch2019robust}, aims to optimize log loss but changes the training loss function in order to be robust to outliers or misspecification
and the safe-Bayesian approach of \cite{Grunwald2012,Grunwald2017} 
selects $\beta$ in order to handle misspecification. 
However, in all these papers the loss term is the {\em Gibbs loss}, 
$E_{q()}[\ell()]$, where $\ell()$  is the training loss.
In contrast, DLM uses the \emph{loss of the Bayesian predictor} with the motivation that this makes sense as an ERM algorithm.

A range of approaches have also been used from a theoretical perspective.
Some prior analysis of Bayesian  algorithms aims to show that the approximations recover exact inference under some conditions.
This includes, for example, consistency results for variational inference \citep{WangBlei2017, WangBlei2019} and the Laplace approximation \citep{dehaene2017computing}. For sparse GP, \cite{BurtRW19} shows that this holds when using the RBF kernel, and when the number and location of pseudo inputs are carefully selected. 
The work of 
\cite{Alquier2016properties} uses PAC Bayes theory and formulates conditions under which the variational approximation is close to the true posterior. 
In contrast to these,
\cite{Alquier2016properties} and \cite{Sheth2017excess,Sheth2019pseudo} analyze variational and DLM algorithms 
bounding their prediction loss relative to the ``best approximate pseudo posterior''.
Our paper further elaborates algorithmic details of DLM and provides an empirical evaluation.

Sparse GPs have received significant attention in the last few years. 
\cite{bauer2016understanding} investigates the performance of the variational and FITC approximations and provide many insights.
Their observations on difficulties in the optimization of hyperparameters in FITC might have parallels in DLM. Our experimental setup explicitly evaluates joint optimization of hyperparameters with DLM as well as a hybrid algorithm to address these difficulties. 
\cite{ReebDGR18} develops a new sGP algorithm by optimizing a PAC-Bayes bound. The output of their algorithm is chosen in a manner that provides better upper bound guarantees on its true error, but the actual test error is not improved over SVGP. 
The work of \cite{Cheng-NIPS-18} develops a novel variant of SVGP that uses different pseudo locations for $m$ and $V$. 
In contrast with these works our paper emphasizes the DLM objective and evaluates its potential to improve performance. 

Finally, several works have explored the idea of DLM for Bayesian algorithms.
\cite{Sheth2017excess} demonstrated the success of DLM in topic models.
The work of \cite{Sheth2016monte,jankowiak2019sparse,jankowiak2020deep}  applied log loss DLM and variants for regression showing competitive performance with ELBO. Our work significantly improves over this work by exhibiting the differences between square-loss DLM and log-loss DLM for regression, and by developing extensions, sampling methods and analysis for the non-conjugate case of log-loss DLM, which are stated as open questions  by \cite{jankowiak2020deep}.
Finally, 
\cite{masegosa2020learning} 
motivates DLM as the right procedure, but then identifies a novel alternative objective which is sandwiched between ELBO and DLM.
This offers an interesting 
alternative to DLM with the potential advantage that its loss term is 
is the Gibbs loss
(i.e., does not have log-expectation issues), but the disadvantage that it is an approximation to true DLM.
Overall, the space of loss terms, regularizers, and the balance between them offer a range of choices and identifying the best choice in any application is a complex problem. We believe that DLM is an important contribution in this space.

\section{Experimental Evaluation}

Our experiments have two goals, the first is to evaluate whether DLM provides advantages over variational inference in practice, and the second is to explore the properties of the sampling methods, including efficiency, accuracy and stability. 
Due to space constraints, we summarize the main results here, and full details %
are provided in the supplement.

\subsection{Details of Algorithms and Experiments}

Preliminary experiments with joint optimization of variational parameters and hyperparameters in DLM showed that it is successful in many problems but that in some specific cases the optimization is not stable. We suspect that this is due to interaction between optimization of variational parameters and hyperparameters which complicates an experimental comparison. 
We therefore run two variants of DLM. The first performs joint optimization of variational parameters and hyperparameters. 
The second uses fixed hyperparameters, fixing them to the values learned by SVGP. This also allows us to compare the variational posterior of SVGP and DLM on the same hyperparameters.

Prior theoretical results do not have a clear recommendation for setting the regularization parameter $\beta$ where some analysis uses $\beta=0$ (no regularization), 
$\beta=1$ (the standard setting), and $\beta=\Theta(\sqrt{n})$.
Here we use grid search with a validation set on an exponentially-spaced grid, i.e., $\beta=[n,n/2,n/4,n/8,...,0.01]$.
In some experiments below we diverge from this and present results for specific values of $\beta$. 
To facilitate a fair comparison, we include ELBO with $\beta=1$ and a variant of ELBO that selects $\beta$ in exactly the same manner as DLM.

We selected 4 moderate size datasets for each of the likelihoods, giving 16 test cases including regression, square error, classification, and count prediction. In addition, we selected one large classification dataset that has been used before for evaluating sparse GP. 

All algorithms are trained with the Adam optimizer. 
 Isotropic RBF kernels are used except for the {\em airline} dataset where an ARD RBF kernel was used. 
 Evaluations are performed on held-out test data and 5 repetitions are used to generate error bars.
Full details of the experiments are given in the supplement.

\PutPlotSelection{fig:selection}

\subsection{Results}

Our first set of experiments aims to evaluate the merit of the DLM objective as compared to ELBO. 
To achieve this, we fix the number of pseudo points and then each point in Figure~\ref{fig:selection} (a-e), 
shows the final test set loss score {\em when the algorithm has converged on the corresponding sample size}. 
That is, we compare the quality that results from optimizing the objective, and not the optimization algorithm or convergence speed. This  allows a cleaner separation of the objectives.

{\bf Log-loss and sq-loss in sGP Regression:} Figure~\ref{fig:selection}(a) shows the result for log-loss regression on the {\em sarcos} dataset where log-loss DLM has a significant advantage. 
Figure~\ref{fig:selection}(b) shows the result for square-loss on the same dataset. Here we see that square-loss DLM has a significant advantage over other algorithms (including log-loss DLM). 
This illustrates the point made in the introduction, that optimizing DLM for a specific loss can have an advantage over methods that aim for a generic posterior. 
We can also observe that $\beta$-ELBO shows a clear improvement over ELBO, 
which suggests that selection of $\beta$ should be adopted more generally in variational inference. 
The supplement includes results for  3 additional datasets with similar trends. 

{\bf $\beta$-values:} It is interesting to consider the $\beta$ values selected by the algorithms. 
For most datasets and most training set sizes a small value of  $\beta<1$ is often a good choice. 
However, this is not always the case.
Figure~\ref{fig:selection}(c) shows
a plot of log-loss as a function of $\beta$ for a small (691) training set size on the {\em cadata} dataset.
We observe that the optimal $\beta$ is larger than 1 for all methods. 
For larger size data (see supplement) joint DLM selects $\beta<1$ but other methods do not. 

{\bf Log-loss DLM in non-conjugate sGP:} 
Figure~\ref{fig:selection} (d-e) show log-loss results for classification on the {\em ringnorm} dataset and for count regression on the {\em peds1} dataset, where 
DLM for count regression uses bMC sampling with 10 samples. 
We observe that log-loss DLM is comparable to or better than ELBO and $\beta$-ELBO. 
The supplement includes results for  3 additional datasets for each likelihood with similar trends. 
In some cases hyperparameter optimization in joint-DLM is sensitive, but taken together
the two DLM variants are either comparable to or significantly better than ELBO and $\beta$-ELBO. 
In addition, results from the same experiments which are included in the supplement show that DLM achieves better calibration in the non-conjugate cases without sacrificing classification error or count mean relative error.

{\bf Non-conjugate DLM on a large dataset:} 
We next consider whether DLM is applicable on large datasets and whether it still shows an advantage over ELBO. 
For this we use the {\em airline} dataset \citep{hensman2015scalable} which has been used before to evaluate sGP for classification.
Due to the size of the dataset we do not perform $\beta$ selection and instead present results for values 0.1, 1, and 10. 
In contrast with previous plots, 
Figure~\ref{fig:selection}(f) is a learning curve, showing log-loss {\em as a function of training epochs}.
We observe that 
{\em  for all values of $\beta$ in the experiment} both variants of 
$\beta$-DLM significantly improve over $\beta$-ELBO and  they significantly improve over ELBO ($\beta=1$). 

{\bf Evaluation of the sampling algorithms:} 
We first explore the quality of samples regardless of their effect on learning. 
Figure~\ref{fig:selection} portions (g,h) show estimates of bias for bMC and uPS on the {\em abalone} count prediction dataset (where the true gradient is estimated from 10000 bMC samples). 
The statistics for the gradients are collected immediately after the initialization of the algorithm. 
Additional plots in the supplement show estimates for the direction of the update step $d_t$ and its norm relative to the true gradient 
(similar to conditions (i) and (ii) of Proposition~\ref{prop:bertsekas} but for  $d_t$ and similar to conditions in Proposition 4.1 in \cite{BertsekasTs1996}).
The plots show that uPS indeed has lower bias as expected (note the scale in the plots).

We next compare the quality of predictions when learning using the sampling methods, to each other and to the results of exact computations.
Learning curves for {\em airline} for $\beta=0.1$ are shown in Figure~\ref{fig:selection}(i)
and plots for $\beta=1,10$ are given in the supplement. 
We observe that with enough samples both algorithms can recover the performance of the exact algorithm. 
We also observe in plot 1(i) that to achieve this uPS can use 10 samples and bMC needs 100 samples.
Similarly, uPS with 1 sample is is better than bMC with 10 samples. This suggests that uPS makes better use of samples and has a potential advantage. 
The supplement shows learning curves comparing uPS and bMC for count prediction on two datasets. 
In this case even one sample of bMC yields good results and there are no significant differences between bMC and uPS in terms of log-loss.
Finally, 
learning curves for log-loss in regression given in the supplement show that bMC can recover the results of exact gradients with $\geq 10$ samples. 
Overall, uPS is unbiased and might make more efficient use of samples.
However, despite the speedup developed for uPS, it is significantly slower in practice due to the cost of generating the samples, and bMC provides a better tradeoff in practice.

\section{Conclusion}

The paper explores the applicability and utility of DLM in sparse GP.
We make two technical contributions for sample based estimates of gradients of log-expectation terms: 
uPS provides unbiased samples and bMC is biased but is proved to lead to convergence nonetheless.
An extensive experimental evaluation shows that DLM for sparse GP is competitive and in some cases significantly better than the variational approach and that bMC provides a better time-accuracy tradeoff than uPS in practice.
While we have focused on sGP, DLM is at least in principle generally applicable. 
As mentioned above, this has already been demonstrated for the correlated topic model, where the hidden variable is not 1-dimensional, but where  equations simplify and gradients can be efficiently estimated through sampling. 
We believe that variants of the methods in this paper will enable applicability in 
probabilistic matrix factorization, GPLVM (through its reparameterized objective), and the variational auto-encoder and we leave these for future work. 
Extending the analysis of bMC 
to provide finite time bounds is another important direction for future work.

\section*{Acknowledgements}
 This work was partly supported by NSF under grant IIS-1906694.
 Some of the experiments in this paper were run on the Big Red 3 computing system at Indiana University,
 supported in part by Lilly Endowment, Inc., through its support for the Indiana University Pervasive Technology Institute.

\bibliography{../local}

\newpage
\appendix
\section{Efficient Implementation of Product Sampling}

\paragraph{Efficient Rejection Sampling:}
Recall that we want to sample from 
$\tilde{q}(f|\theta)=\frac{q(f|\theta)p(y|f)}{\E_{q(f|\theta)} p(y|f)}$ where the
normalizing constant $\E_{q(f|\theta)}p(y|f)$ is not known. Naive rejection sampling will have a high rejection rate and more advanced sampling techniques, such as adaptive rejection sampling, will be too slow because we need to sample the gradient for each example in each minibatch of optimization. 
We next show how to take advantage of the structure of $\tilde{q}(f)$ to construct an efficient sampler. 
Recall the standard setting for rejection sampling. To sample from an unnormalized distribution $h_1(f)$ we introduce $h_2(f)$ which is easy to sample from and such that $K h_2(f)\geq h_1(f)$. Then we sample $f^*\sim h_2(f)$, and accept $f^*$ with probability $h_1(f^*)/K h_2(f^*)$.

In our case $h_1$ is a product of a normal distribution $q(f)={\cal N}(\mu,\sigma^2)$ and a likelihood function $\ell(f)=p(y|f)$.
In the following we assume that $\ell(f)\leq \ell_{max}$ is bounded, which true for discrete $y$ and can be enforced by lower bounding the variance when $y$ is continuous. 
The main issue for sampling is the overlap between the ``high value regions'' of $q()$ and $\ell()$. If they are well aligned, 
for example, $\mbox{argmax}_{f\in \mu \pm \sigma} \ell(f)\geq 0.5$, then we can use $h_2(f)=q(f)$ with $K=1$ and the rejection rate will not be high. However, if they are not aligned then sampling from $q()$ will have a high rejection rate. To address this, we fix a small integer $n$ and sample from a broader distribution  with the same mean $h_2(f)={\cal N}(\mu,n \sigma^2)$. 

Let $a,b$ be the intersection points of the PDFs of $q()$ and $h_2()$ ($\mu\pm r$ for $r=\sigma\sqrt{{\log n}/(1-1/n)}$) and let 
$m_1=\mbox{max}_{f\in [a,b]} \ell(f)$ and $m_2=\mbox{min}_{f\in [a,b]} \frac{h_2(f)}{q(f)}=\frac{1}{\sqrt{n}}$.
Note that $\frac{m_1}{m_2}$ increases with $n$. 
To balance the sampling ratios within and outside $[a,b]$, 
we pick the largest $n\leq 10$ s.t.\ $m_1\leq m_2 \ell_{max}$ and use $K=\ell_{max}$.
Then in the interval $[a,b]$ we have $h_2(f) \ell_{max}\geq h_2(f) \frac{m_1}{m_2}\geq q(f)\ell(f)$ and outside the interval we have 
$h_2(f)\geq q(f)$ and therefore $h_2(f)\ell_{max} \geq q(f)\ell(f)$ as required.

The only likelihood specific step in the computation is the value of $m_1$. For the binary case with sigmoid or probit likelihood the maximum is obtained at one of the endpoints $p(a),p(b)$. For count regression with Poisson likelihood with link function $\lambda=e^f$, if the observation $\log y\in[a,b]$ then we also need to evaluate $p(y|\lambda=y)$.
The crucial point is that because of the structure of $q()$ and $h_2()$ the values of 
$m_1$,$m_2$ can be calculated analytically in constant time and the cost of determining $n$ is not prohibitive.

\paragraph{Vectorized sampling:} The process above yields efficient sampling, where after an initial set of learning iterations the average number of rejected samples is low (approximately 2 in our evaluation).
However, in practice the process is still slow. One of the reasons is the fact that we calculate $n$ which defines the sampling distribution separately for each example $i$ and then perform rejection sampling separately for each $i$. Modern implementations gain significant speedup by vectorizing operations, but this is at odds with individual rejection sampling. We partly alleviate this cost by a hybrid procedure as follows. 
Note that for each $i$ we have $h_2(f_i)={\cal N}(\mu_i,n_i\sigma_i^2)$ and that the samples for different $i$'s are independent. We can therefore collect these and sample from a multivariate normal with diagonal covariance. However, each such vector of samples will have some rejected entries. 
Our hybrid procedure repeats the vectorized sampling twice, uses the first successful sample for each $i$, and for entries which had no successful sample, resorts to individual sampling. We have found that this reduces overall run time by at least 50\%.

\section{Convergence of smooth-bMC with Probability 1}

\newcommand{\R}{\mathcal{R}}
\newcommand{\F}{\mathcal{F}}

\begin{assumption}
\label{assumption}
We assume that there exists a function $f:\R^n \rightarrow \R$ with the following properties:
\begin{enumerate}[label=(\alph*)]
    \item There holds $f(r) \geq 0$ for all $r \in \R^n$.
    \item The function $f$ is continuously differentiable and there exists some constant $L$ such that 
    \begin{equation*}
        \Vert \nabla f(r) - \nabla f(\Bar{r}) \rVert \leq L \Vert r-\Bar{r} \rVert, \forall r, \Bar{r} \in \R^n.
    \end{equation*}
    \item There exists positive constant $c_1, c_2$ such that $\forall t$,
    \begin{align*}
        &c_1 \Vert \nabla f(r_t) \rVert^2 \leq -\nabla f(r_t)^T \E[s_t|\F_t], \\
        &\E[\Vert s_t \rVert^2] \leq c_2 \Vert \nabla f(r_t) \rVert^2.
    \end{align*}
    \item There exists positive constant $p, q$ such that
    \begin{equation*}
        \E[\Vert w_t \rVert^2] \leq (\gamma_t (q+p\Vert \nabla f(r_t) \rVert))^2.
    \end{equation*}
\end{enumerate}
\label{assumption:BT41modified}
\end{assumption}
Notice that condition (d) in Assumption \ref{assumption} implies $\E[\Vert w_t \rVert] \leq \gamma_t (q + p\Vert \nabla f(r_t) \rVert)$. This can be derived from Jensen's inequality where quadratic function is convex.

\begin{proposition}
Consider the algorithm
\begin{equation*}
    r_{t+1} = r_t + \gamma_t (s_t + w_t),
\end{equation*}
where the stepsizes $\gamma_t$ are nonnegative and satisfy
\begin{equation*}
    \sum_{t=0}^{\infty} \gamma_t = \infty, \sum_{t=0}^{\infty} \gamma_t^2 \leq \infty. 
\end{equation*}
Under Assumption \ref{assumption}, the following hold with probability 1:
\begin{enumerate}[label=(\alph*)]
    \item The sequence $f(r_t)$ converges.
    \item We have $\lim_{t\rightarrow \infty} \nabla f(r_t) = 0.$
    \item Every limit point of $r_t$ is a stationary point of $f$.
\end{enumerate}
\label{proposition:BT41modified}
\end{proposition}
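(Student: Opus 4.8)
The plan is to reduce the claim to the Robbins--Siegmund almost-supermartingale convergence theorem by deriving a one-step recursion for the conditional expectation of $f(r_{t+1})$ given the natural filtration $\F_t$, with respect to which $r_t$ (hence $\nabla f(r_t)$) is measurable. The engine is the descent lemma, available because $\nabla f$ is $L$-Lipschitz by condition (b) of Assumption~\ref{assumption:BT41modified}. Applying it along the update $r_{t+1}-r_t=\gamma_t(s_t+w_t)$ gives
\begin{equation*}
f(r_{t+1}) \le f(r_t) + \gamma_t \nabla f(r_t)^T(s_t+w_t) + \tfrac{L\gamma_t^2}{2}\Vert s_t+w_t\Vert^2 .
\end{equation*}

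First I would take $\E[\cdot\mid\F_t]$ of this inequality and bound each term using the (conditional versions of the) hypotheses. The leading linear term is controlled by condition (c), giving $\nabla f(r_t)^T\E[s_t\mid\F_t]\le -c_1\Vert\nabla f(r_t)\Vert^2$. The cross term $\nabla f(r_t)^T w_t$ is bounded in absolute value by Cauchy--Schwarz together with the Jensen consequence of condition (d), $\E[\Vert w_t\Vert\mid\F_t]\le\gamma_t(q+p\Vert\nabla f(r_t)\Vert)$, producing a contribution of order $\gamma_t^2(q\Vert\nabla f(r_t)\Vert + p\Vert\nabla f(r_t)\Vert^2)$; the stray linear factor is split via $q\Vert\nabla f(r_t)\Vert\le\tfrac12(q^2+\Vert\nabla f(r_t)\Vert^2)$ into a summable constant and a $\Vert\nabla f(r_t)\Vert^2$ piece. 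The quadratic term is handled with $\Vert s_t+w_t\Vert^2\le 2\Vert s_t\Vert^2+2\Vert w_t\Vert^2$ and the second-moment bounds of (c) and (d). Collecting everything yields an inequality of the form
\begin{equation*}
\E[f(r_{t+1})\mid\F_t] \le f(r_t) - \gamma_t\big(c_1 - \gamma_t A\big)\Vert\nabla f(r_t)\Vert^2 + \gamma_t^2 B ,
\end{equation*}
for constants $A,B$ depending only on $c_1,c_2,L,p,q$.

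Since $\sum_t\gamma_t^2<\infty$ forces $\gamma_t\to0$, for all $t$ beyond some $t_0$ we have $\gamma_t A\le c_1/2$, so the bracket is at least $c_1/2>0$. Taking $X_t:=f(r_t)\ge0$ (nonnegativity is condition (a)), $\xi_t:=\gamma_t^2B$ which is summable, and $\zeta_t:=\tfrac{c_1}{2}\gamma_t\Vert\nabla f(r_t)\Vert^2\ge0$, the Robbins--Siegmund theorem applies and gives, with probability $1$, that $f(r_t)$ converges to a finite limit (the first assertion) and that $\sum_t\gamma_t\Vert\nabla f(r_t)\Vert^2<\infty$. Because $\sum_t\gamma_t=\infty$, this summability immediately yields $\liminf_t\Vert\nabla f(r_t)\Vert=0$ almost surely.

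The main obstacle is upgrading $\liminf_t\Vert\nabla f(r_t)\Vert=0$ to the full limit $\lim_t\Vert\nabla f(r_t)\Vert=0$ needed for the second assertion. The standard remedy is a no-persistent-oscillation argument: suppose for contradiction that $\Vert\nabla f(r_t)\Vert\ge 2\epsilon$ infinitely often while returning below $\epsilon$ infinitely often. On each upcrossing window the gradient norm must rise by at least $\epsilon$, yet by $L$-Lipschitzness the per-step change is at most $L\gamma_t\Vert s_t+w_t\Vert$, so each window consumes a nonnegligible amount of $\sum_{t}\gamma_t\Vert s_t+w_t\Vert$; since the windows are disjoint and $\Vert\nabla f(r_t)\Vert\ge\epsilon$ there, the bound $\sum_t\gamma_t\Vert\nabla f(r_t)\Vert^2<\infty$ caps the total window weight $\sum_{\text{windows}}\sum_{t}\gamma_t$, forcing finitely many windows and a contradiction. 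The delicate point is that the increment bounds must hold \emph{almost surely} rather than merely in expectation: this requires controlling $\sum_t\gamma_t^2\Vert s_t+w_t\Vert^2<\infty$ a.s.\ from the conditional second-moment bounds (via a Borel--Cantelli / martingale argument on the noise $s_t-\E[s_t\mid\F_t]$ and $w_t$), and this is exactly where $\sum_t\gamma_t^2<\infty$ does the work. Finally, the third assertion is immediate: continuity of $\nabla f$ gives $\nabla f(\bar r)=\lim_k\nabla f(r_{t_k})=0$ at any limit point $\bar r$, so every limit point is stationary.
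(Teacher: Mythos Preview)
Your proposal is correct and follows essentially the same approach as the paper: descent lemma, conditions (c)--(d) to bound the resulting terms, the supermartingale convergence theorem (Robbins--Siegmund, which is precisely Proposition~4.2 of \cite{BertsekasTs1996} invoked in the paper) to obtain convergence of $f(r_t)$ and $\sum_t\gamma_t\Vert\nabla f(r_t)\Vert^2<\infty$, and then an upcrossing argument using the second-moment bound $\E[\Vert s_t+w_t\Vert^2\mid\F_t]\le K_1\Vert\nabla f(r_t)\Vert^2+K_2$ to pass from $\liminf$ to the full limit. The only cosmetic difference is that the paper explicitly defers the last step to the remainder of the Bertsekas--Tsitsiklis proof after establishing that bound, whereas you sketch the oscillation argument directly.
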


The proposition and its proof are a slight modification of Proposition 4.1 by \cite{BertsekasTs1996}. Compared to that result, 
Assumption~\ref{assumption:BT41modified} splits the conditions on the step direction $g_t=s_t+w_t$ from \cite{BertsekasTs1996} into portion c on $s_t$ and portion d on $w_t$. This slight weakening of the condition enables our application in Theorem~\ref{thm:wp1}.

\begin{proof}
This proof slightly modifies the proof of Proposition 4.1 in \cite{BertsekasTs1996}. 
As shown there (in Eq 3.39), if $\nabla f()$ is $L$-Lipschitz then
$f(r+z)-f(r)\leq z^T \nabla f(r) + \frac{L}{2} \Vert z \rVert^2$ for two vectors $r, z$,
Then replacing $z$ with $\gamma_t (s_t + w_t)$ and taking expectation, we have 
\begin{align*}
    \E [f(r_{t+1})] &\leq f(r_t) + \gamma_t \nabla f(r_t)^T \E[s_t+w_t] + \frac{\gamma_t^2 L}{2} \E[\Vert s_t + w_t \rVert^2] \\
    &\leq f(r_t) + \gamma_t \nabla f(r_t)^T \E[s_t] + \gamma_t \nabla f(r_t)^T \E [w_t] + \gamma_t^2 L \E[\Vert s_t\rVert^2] + \gamma_t^2 L \E[\Vert w_t \rVert^2] \\
    &\leq f(r_t) + \gamma_t (-c_1 \Vert \nabla f(r_t)\rVert^2 + \Vert \nabla f(r_t)\rVert \E[\Vert w_t \rVert]) + \gamma_t^2 L (c_2^2 \Vert \nabla f(r_t) \rVert^2 \\
    &+ \gamma_t^2 q^2 + 2 \gamma_t^2 pq \Vert \nabla f(r_t)\rVert + \gamma_t^2 p^2 \Vert \nabla f(r_t) \rVert^2) \\
    &\leq f(r_t) - \gamma_t (c_1 - \gamma_t p - \gamma_t c_2^2 L  - \gamma_t^3 p^2 L) \Vert \nabla f(r_t) \rVert^2 + \gamma_t^2 (q+2\gamma_t^2 pq L) \Vert \nabla f(r_t) \rVert + \gamma_t^4 q^2 L.
\end{align*}
The second inequality uses $\Vert a+b\Vert^2 \leq 2\Vert a \Vert^2 + 2\Vert b \Vert^2$ for any two vector $a, b$. The third inequality uses the conditions in Assumption \ref{assumption}.
Let $c_t=c_1 - \gamma_t p - \gamma_t c_2^2 L - \gamma_t^3 p^2 L$, $d_t = q + 2 \gamma_t^2 pq L$. Then,
\begin{align*}
    \E[f(r_{t+1})] &\leq f(r_t) - \gamma_t c_t \Vert \nabla f(r_t) \rVert^2 + \gamma_t^2 d_t \Vert \nabla f(r_t) \rVert + \gamma_t^4 q^2 L \\
    &\leq f(r_t) -\gamma_t c_t \Vert \nabla f(r_t) \rVert^2 + \gamma_t^2 d_t (1+\Vert \nabla f(r_t) \rVert^2) + \gamma_t^4 q^2 L \\
    &=f(r_t) - \gamma_t (c_t - \gamma_t d_t) \Vert \nabla f(r_t) \rVert^2 + \gamma_t^2 d_t + \gamma_t^4 q^2 L \\
    &=f(r_t) - X_t + Z_t,
\end{align*}
where $X_t = \begin{cases} \gamma_t (c_t - \gamma_t d_t) \Vert \nabla f(r_t) \rVert^2, & \text{if } c_t\geq \gamma_t d_t, \\ 0, & \text{otherwise,} \end{cases}$, and \\$Z_t = \begin{cases} \gamma_t^2 d_t + \gamma_t^4 q^2 L, &\text{if } c_t \geq \gamma_t d_t, \\ \gamma_t^2 d_t + \gamma_t^4 q^2 L - \gamma_t (c_t - \gamma_t d_t) \Vert \nabla f(r_t) \rVert^2, &\text{otherwise.} \end{cases}$ 

Notice that $c_t - \gamma_t d_t$ is monotonically decreasing in $\gamma_t$ and $\lim_{t\rightarrow \infty} \gamma_t = 0$, so there exists some finite time after which $\gamma_t d_t \leq c_t$. It follows that after some finite time, we have $Z_t = \gamma_t^2 d_t + \gamma_t^4 q^2 L$ and therefore $\sum_{t=0}^\infty Z_t < \infty$. Applying Proposition 4.2 (Supermartingale Convergence Theorem) in \cite{BertsekasTs1996}, we can conclude that $f(r_t)$ converges 
which establishes part (a) of the proposition, and in addition that
$\sum_t X_t < \infty$. 

Similarly after some time, we have $c_t - \gamma_t d_t \geq \frac{c_1}{2}$ and
\begin{equation*}
    X_t = \gamma_t (c_t - \gamma_t d_t) \Vert \nabla f(r_t) \rVert^2 \geq \frac{c_1}{2} \gamma_t \Vert \nabla f(r_t) \rVert^2.
\end{equation*}
Hence,
\begin{equation*}
    \sum_{t=0}^{\infty} \gamma_t \Vert \nabla f(r_t) \rVert^2 < \infty.
\end{equation*}

Below we prove that $\Vert \nabla f(r_t) \rVert$ converges to $0$. Let $g_t = s_t + w_t$,
\begin{align*}
    \E[\Vert g_t \rVert^2] &= \E[\Vert s_t + w_t \rVert^2] \\
    &\leq \E[2 \Vert s_t \rVert^2 + 2 \Vert w_t \rVert^2] \\
    &\leq 2 c_2 \Vert \nabla f(r_t) \rVert^2 + 2 \gamma_t^2 (q + p \Vert \nabla f(r_t) \rVert)^2 \\
    &=2 c_2 \Vert \nabla f(r_t) \rVert^2 + 2 \gamma_t^2 (q^2 + 2 pq \Vert \nabla f(r_t) \rVert + p^2 \Vert \nabla f(r_t) \rVert^2 ) \\
    &\leq 2 (c_2 + p^2 \gamma_t^2 + 2pq \gamma_t^2) \Vert \nabla f(r_t) \rVert^2 + 2 \gamma_t^2 q^2 + 4pq \gamma_t^2.
\end{align*}
Suppose $\max \gamma_t \leq \gamma$, let $K_1 = 2(c_2 + p^2 \gamma^2 + 2pq \gamma^2)$ and $K_2 = 2 \gamma^2 q^2 + 4pq \gamma^2$, we have $\E[\Vert g_t \rVert^2] \leq K_1 \Vert \nabla f(r_t) \rVert^2 + K_2 $. 
Then all the remaining steps in the proof in \citep{BertsekasTs1996} for claims (b),(c) can be followed by replacing $s_t$ in \cite{BertsekasTs1996} with our $g_t$. 
\end{proof}

In order to establish convergence w.p.\ 1 we need to bound the norm of the step direction. To achieve this we add a smoothing parameter $\nu$ to the denominator of the estimate. This yields the  \textit{smooth-bMC} algorithm, whose step directions are
\begin{align*}
d_{i,m}(r)
    & \defined
    \frac{
        \sum_{\draw=1}^\draws
        \phi_i'(r,\epsilon^{(\draw)})
    }{
        \sum_{\draw=1}^\draws
        \phi_i(r,\epsilon^{(\draw)}) + \nu
    }
    a_{i,1}
    \\
    d_{i,V}(r)
    & 
    \defined
    \frac{
        \sum_{\draw=1}^\draws
        \phi_i''(r,\epsilon^{(\draw)})
    }{
        \sum_{\draw=1}^\draws
        \phi_i(r,\epsilon^{(\draw)}) + \nu
    }
    \frac{a_{i,2} a_{i,2}^\top}{2}
\end{align*}
and thus
\begin{align*}
w_{t,i,m} & = 
       \frac{a_{i,1}}{n} 
       \bigg(
        \frac{
            (1/\draws)
            \sum_\draw \phi_i'(r_t,\epsilon^{(\draw)})
        }{
            (1/\draws)
            \sum_\draw \phi_i(r_t,\epsilon^{(\draw)}) + \nu_t
        } 
        -
        \frac{
            \E_{\mathcal{N}(\epsilon|0,1)}
                \phi_i'(r_t,\epsilon)
        }{
            \E_{\mathcal{N}(\epsilon|0,1)}
                \phi_i(r_t,\epsilon)
        }
       \bigg)
\end{align*}
and a similar expression holds for V's portion. 
We now have:

\begin{corollary}
\label{thm:wp1}
If for every $t$ and $i$, $\E_{q(f_i|r)}p(y_i|f_i)\ge\zeta>0$,  smooth-bMC uses $\nu_t=\gamma_t \zeta$, and 
\begin{align}
\draws
    >
    \frac{\log (6n/\delta_t)}{2\gamma_t^2}
    \max\Bigg\{
        &\frac{
            B^2
        }{
            |\E_{\mathcal{N}(\epsilon|0,1)}\phi_i(r,\epsilon)|^2
        }
        ,\nonumber\\
        &\frac{
            (B'-b')^2
        }{
            P^2
        }
        ,\nonumber\\
        &\frac{
            (B''-b'')^2
        }{
            Q^2
        }
    \Bigg\},
\end{align}
where 
$P=\begin{cases}|\E_{\mathcal{N}(\epsilon|0,1)}\phi_i'(r,\epsilon)|, &\text{if } \E_{\mathcal{N}(\epsilon|0,1)}\phi_i'(r,\epsilon)\neq 0 \\
1, & \text{otherwise} \end{cases}
$,
$Q=\begin{cases}|\E_{\mathcal{N}(\epsilon|0,1)}\phi_i''(r,\epsilon)|, &\text{if } \E_{\mathcal{N}(\epsilon|0,1)}\phi_i''(r,\epsilon)\neq 0 \\
1, & \text{otherwise} \end{cases} 
$,
and $\delta_t = \gamma_t^4$, then smooth-bMC satisfies the conditions of 
Proposition~\ref{proposition:BT41modified}
and hence converges w.p.\ 1. 
\end{corollary}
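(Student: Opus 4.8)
The plan is to verify the four conditions (a)--(d) of Assumption~\ref{assumption:BT41modified} so that Proposition~\ref{proposition:BT41modified} applies directly, and to concentrate essentially all the work on condition (d). Conditions (a)--(c) are inherited almost verbatim from the setup of Corollary~\ref{corollary:convergence}. For (a), each loss term is bounded below, since $\phi_i\le B$ gives $-\log\E_{q(f_i)}[p(y_i|f_i)]\ge -\log B$, and the KL term is nonnegative; if needed one adds a constant to $\objective$ to make it nonnegative, which changes neither its gradient nor its Lipschitz constant. For (b), I would reuse the mean-value-theorem argument already given in the proof of Corollary~\ref{corollary:convergence}: the gradient norm is uniformly bounded, hence $\nabla\objective$ is Lipschitz, precisely because $\E_{q(f_i|r)}p(y_i|f_i)\ge\zeta>0$. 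For (c), I would again take $s_t=-\nabla\objective(r_t)$; since this is $\F_t$-measurable we get $\E[s_t\mid\F_t]=-\nabla\objective(r_t)$ and $\E[\Vert s_t\Vert^2]=\Vert\nabla\objective(r_t)\Vert^2$, so (c) holds trivially with $c_1=c_2=1$. The error is then $w_t=\sum_i w_{t,i}$, computed exactly for the KL portion and estimated only for the loss portion, as in the displayed expression for $w_{t,i,m}$.

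The entire difficulty is condition (d): forcing $\E[\Vert w_t\Vert^2]$ to shrink like $\gamma_t^2$. The key idea is a two-regime split of the sample space at each step. On a ``good'' event I would apply Lemma~\ref{lemma:n-relative-bound} with relative tolerance $\alpha_t=\gamma_t$; the sample-size hypothesis on $\draws$ (identical to Corollary~\ref{corollary:convergence} but with $\delta_t=\gamma_t^4$) guarantees that each of the three averages is within $1\pm\gamma_t$ of its expectation with probability $\ge 1-\delta_t/(3n)$. On this event the same algebra as in Corollary~\ref{corollary:convergence} yields $\Vert w_{t,i}\Vert \le \frac{\Vert a_i\Vert}{n}\frac{2\gamma_t}{\zeta}(2B^*+1)$, which is $O(\gamma_t)$. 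On the complementary ``bad'' event, which has probability at most $\delta_t$ after a union bound over the three conditions, I would instead exploit the smoothing parameter: because the denominator is bounded below by $\nu_t=\gamma_t\zeta$ and the numerators by $B^*$, each smooth-bMC ratio is bounded in absolute value by $B^*/(\gamma_t\zeta)$, so $\Vert w_{t,i}\Vert=O(1/\gamma_t)$ holds \emph{deterministically} rather than merely with high probability.

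Combining the two regimes, I would write $\E[\Vert w_{t,i}\Vert^2]\le(\text{good bound})^2+\delta_t\cdot(\text{worst-case bound})^2$. The good term is $O(\gamma_t^2)$ directly, and the crucial cancellation is that the bad term scales as $\delta_t\cdot(1/\gamma_t)^2=\gamma_t^4\cdot\gamma_t^{-2}=\gamma_t^2$, precisely because of the joint choices $\nu_t=\gamma_t\zeta$ and $\delta_t=\gamma_t^4$. Summing over examples with $\Vert\sum_i w_{t,i}\Vert\le\sqrt{\sum_i\sum_j\Vert w_{t,i}\Vert\,\Vert w_{t,j}\Vert}$ as before gives $\E[\Vert w_t\Vert^2]=O(\gamma_t^2)$, which is absorbed into the required form $(\gamma_t(q+p\Vert\nabla\objective(r_t)\Vert))^2$ by taking $q$ large enough and any positive $p$. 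The main obstacle, and the only genuinely new step relative to Corollary~\ref{corollary:convergence}, is exactly this passage from a high-probability bound to a second-moment bound: in unsmoothed bMC the rare near-zero-denominator event could make $\E[\Vert w_t\Vert^2]$ unbounded, and one must check that the interplay of $\nu_t$ and $\delta_t$ tames it. I would also take care to carry out the bad-event probabilities and worst-case bounds separately for the $m$ group and the vectorized $V$ group and to recombine them consistently, exactly as in the original argument.
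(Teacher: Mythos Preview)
Your proposal is correct and follows essentially the same route as the paper: inherit (a)--(c) from Corollary~\ref{corollary:convergence} with $s_t=-\nabla\objective(r_t)$, then split the sample space into a high-probability good event (relative Hoeffding with $\alpha_t=\gamma_t$) and a rare bad event on which the smoothing lower bound $\nu_t=\gamma_t\zeta$ gives a deterministic $O(1/\gamma_t)$ cap, so that $\delta_t\cdot(1/\gamma_t)^2=\gamma_t^2$ and condition (d) holds with $p=0$.

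One small correction: on the good event you cannot literally reuse ``the same algebra as in Corollary~\ref{corollary:convergence},'' because the smoothing term also perturbs the denominator there. The paper redoes the ratio bound with $(1/\draws)\sum_\draw\phi_i+\nu_t$ in the denominator, observing that it lies between $(1-\alpha)\E\phi_i$ and $(1+\alpha+\nu_t/\zeta)\E\phi_i=(1+2\alpha)\E\phi_i$; this changes the multiplicative error from $\frac{2\alpha}{1-\alpha}$ to $\max\bigl(\frac{2\alpha}{1-\alpha},\frac{3\alpha}{1+2\alpha}\bigr)\le\frac{3\alpha}{1-\alpha}$ and hence the constant from $2B^*+1$ to $3B^*+1$. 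This is a cosmetic change only and does not affect the structure of your argument.
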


\begin{proof}
Conditions (a,b,c) of Assumption \ref{assumption} are handled exactly as in the main paper.
Thus we only need to show that (d) holds.

First consider the case when $\E[\phi']\neq 0$ and $\E[\phi'']\neq 0$. With
\begin{align*}
    L \geq \frac{\log (6 n/ \delta_t)}{2 \alpha^2} \max\left\{\frac{B^2}{\E[\phi_i(r, \epsilon)]^2}, \frac{(B'-b')^2}{\left|\E[\phi_i' (r, \epsilon)] \right|^2}, \frac{(B''-b'')^2}{\left|\E[\phi_i'' (r, \epsilon)] \right|^2} \right\}
\end{align*}
we have that $\forall i, t$, 
\begin{align*}
    (1-\alpha) |\E[\phi_i]| \leq \left|(1/L) \sum_{l=1}^L \phi_i (r, \epsilon^{(l)})\right| \leq (1+\alpha) |\E[\phi_i]|,\\
    (1-\alpha) |\E[\phi_i']| \leq \left|(1/L) \sum_{l=1}^L \phi_i' (r, \epsilon^{(l)})\right| \leq (1+\alpha) |\E[\phi_i']|, \\
    (1-\alpha) |\E[\phi_i'']| \leq \left|(1/L) \sum_{l=1}^L \phi_i'' (r, \epsilon^{(l)})\right| \leq (1+\alpha) |\E[\phi_i'']|.
\end{align*}
hold simultaneously w.p. $\geq 1-\frac{\delta_t}{n}$.

Since $\phi_i >0$, we have $(1/L) \sum_{l=1}^L \phi_i (r, \epsilon^{(l)}) + \nu \geq (1-\alpha) \E[\phi_i] +\nu \geq (1-\alpha) \E [\phi_i]$. 
In addition 
\begin{align*}
    (1/L) \sum_{l=1}^L \phi_i (r, \epsilon^{(l)}) + \nu \leq (1+\alpha) \E[\phi_i] + \nu \leq (1+\alpha + \frac{\nu}{\zeta})\E[\phi_i].
\end{align*}
Then 
\begin{align*}
    \Vert w_{t, i, m} \rVert^2 &\leq \frac{\Vert a_{i, 1} \rVert^2}{n^2} \max \left(\frac{1+\alpha}{1-\alpha} -1, 1-\frac{1-\alpha}{1+\alpha + \frac{\nu}{\zeta}}\right)^2 \left(\frac{\E[\phi_i'(r_t, \epsilon)]}{\E[\phi_i (r_t, \epsilon)]}\right)^2 \\
    &= \frac{\Vert a_{i, 1} \rVert^2}{n^2} \max \left(\frac{2\alpha}{1-\alpha}, \frac{2\alpha + \frac{\nu}{\zeta}}{1+\alpha + \frac{\nu}{\zeta}}\right)^2 \left(\frac{\E[\phi_i'(r_t, \epsilon)]}{\E[\phi_i (r_t, \epsilon)]}\right)^2.
\end{align*}

Let $\nu = \alpha \zeta$, then $\frac{\nu}{\zeta} = \alpha$. Thus
\begin{align*}
    \max \left(\frac{2\alpha}{1-\alpha}, \frac{2\alpha + \frac{\nu}{\zeta}}{1+\alpha + \frac{\nu}{\zeta}}\right) &= \max \left(\frac{2\alpha}{1-\alpha}, \frac{3\alpha}{1+2\alpha}\right) \leq \frac{3\alpha}{1-\alpha}.
\end{align*}
Using $\alpha \leq 0.5$ we get $\Vert w_{t, i, m} \rVert^2 \leq \left(\frac{6B^* \alpha}{\zeta} \frac{\Vert a_{i, 1} \rVert}{n} \right)^2$.

Next consider the case when $\E[\phi']=0$. In this case we can select $L\geq \frac{(B'-b')^2 \log (6n/\delta_t)}{2\alpha^2}$ such that $|(1/L) \sum_l \phi'(r, \epsilon^{(l)})| \leq \alpha$ w.p.$\geq 1-\delta_t/(3n)$. At the same time, $(1/L) \sum_l \phi_i(r, \epsilon^{(l)}) + \nu \geq (1-\alpha)\E[\phi_i(r,\epsilon)]$. Thus,
\begin{align*}
    \Vert w_{t, i, m} \rVert^2 \leq \left(\frac{\Vert a_{i, 1} \rVert}{n} \frac{\alpha}{\zeta(1-\alpha)} \right)^2 \leq \left(\frac{2\alpha}{\zeta} \frac{\Vert a_{i, 1}\rVert}{n} \right)^2.
\end{align*}
Combined two cases, we have
\begin{align*}
\Vert w_{t,i,m} \Vert^2 \leq \left( \frac{2\alpha} {\zeta} \frac{\Vert a_{i,1} \Vert}{n} (3B^* +1) \right)^2.
\end{align*}

$w_{t,i, V}$ can be bounded similarly. Thus, $\Vert w_{t,i} \rVert^2$ can be upper bounded with high probability. Overall, w.p. at least $1-\delta_t$, 
\[
\Vert w_t \rVert^2 \leq \sum_i \sum_j \Vert w_{t,i} \Vert \Vert w_{t,j} \Vert \leq \left(A \frac{2\alpha}{\zeta} (3B^* + 1)\right)^2,
\]
where $A=\max_i \Vert a_i \Vert$.

However, w.p. at most $\delta_t$, the above inequality does not hold.
In order to bound the expectation we use the following upper bound which always holds:
\begin{align*}
    \Vert w_{t,i,m} \rVert^2 &= \frac{\Vert a_{i, 1} \rVert^2}{n^2} \left| \frac{(1/L) \sum_l \phi_i'(r, \epsilon^{(l)})}{\nu + (1/L) \sum_l \phi_i (r, \epsilon)} - \frac{\E[\phi_i'(r, \epsilon)]}{\E[\phi_i(r,\epsilon)]} \right|^2 \\
    &\leq \frac{\Vert a_{i, 1}\rVert^2}{n^2} \left(2\left( \frac{(1/L) \sum_l \phi_i'(r, \epsilon^{(l)})}{\nu + (1/L) \sum_l \phi_i (r, \epsilon^{(l)})} \right)^2 + 2 \left( \frac{\E[\phi_i'(r, \epsilon)]}{\E[\phi_i (r, \epsilon)]} \right)^2 \right) \\
    &\leq 2\frac{\Vert a_{i, 1}\rVert^2}{n^2} \max\{|b'|, |B'|\}^2 \left( \frac{1}{\nu^2} + \frac{1}{\zeta^2} \right) \\
    &\leq 2 \frac{\Vert a_{i, 1}\rVert^2}{n^2} (B^*)^2 \left( \frac{1}{\nu^2} + \frac{1}{\zeta^2} \right) \\
    &= 2 \frac{\Vert a_{i, 1}\rVert^2}{n^2} (B^*)^2 \left(\frac{1}{\alpha^2} + 1 \right)\frac{1}{\zeta^2} \\
    &\leq 4 \frac{\Vert a_{i, 1}\rVert^2}{n^2} (B^*)^2 \frac{1}{\alpha^2 \zeta^2}.
\end{align*}
In the third step, we used the fact that $\phi_i'$ is bounded between $b'$ and $B'$. In the last step, we use the fact that $1\leq \frac{1}{\alpha^2}$. Similar arguments can be derived for $w_{t, i, V}$. Thus $\Vert w_{t, i} \rVert^2 \leq 4\frac{\Vert a_i \rVert^2}{n^2} (B^*)^2 \frac{1}{\alpha^2 \zeta^2}$. Further, 
\begin{align*}
    \Vert w_t \rVert^2 = \Vert \sum_i w_{t, i} \rVert^2 \leq \sum_i \sum_j \Vert w_{t, i} \rVert \Vert w_{t,j} \rVert \leq 4 A^2 (B^*)^2 \frac{1}{\alpha^2 \zeta^2} = \frac{D}{\alpha^2}.
\end{align*}
where $D=\left(A\frac{2}{\zeta} B^* \right)^2$ is a constant.

Thus, $\E[\Vert w \rVert^2]$ can be bounded,
\begin{align*}
    \E [\Vert w_{t} \rVert^2] &\leq (1-\delta_t) (A \frac{2 \alpha}{\zeta} (3B^* +1))^2 + \delta_t D \leq (A \frac{2 \alpha}{\zeta} (3 B^* +1))^2 + \delta_t \frac{D}{\alpha^2}.
\end{align*}
Here let $\alpha=\gamma_t$ (hence $\nu_t=\gamma_t \zeta$) and $\delta_t = \gamma_t^4$, then condition (d) of Assumption \ref{assumption} holds with $p=0$ and $q^2=(A \frac{2}{\zeta} (3B^* +1))^2 + \left(A \frac{2}{\zeta} B^* \right)^2.$ 
\end{proof}

Suppose $\gamma_t = \frac{1}{t}$, then the sample size $L\propto t^2 \log (nt)$, which matches the sample size in the main paper. 
As in the discussion there, in practice we use a fixed sample size $L$ and we use a fixed smoothing factor $\nu$.

\section{Proof of Proposition 4 from Main Paper: Bounds on $\phi, \phi', \phi''$}

In this section we show that bounds of the form $\phi<B$, $b'<\Phi'<B'$, and $b''<\Phi''<B''$ holds in each cases as listed in the following table:

\smallskip

{\small 
\scriptsize 
\begin{tabular}{l|ccccc}
    Likelihood & $B$ & $b'$ & $B'$ & $b''$ & $B''$ \\
    \hline
    Logistic, $\sigma(yf)$
        & $1$ & $-\tfrac{1}{4}$ & $\tfrac{1}{4}$ & $-\tfrac{1}{4}$ & $\tfrac{1}{4}$ \\
    \hline
    Gaussian, $e^{-(y-f)^2/2\sigma^2},c=\frac{1}{\sqrt{2\pi}\sigma}$
        & $c$ & $-{\tfrac{c}{\sqrt{e}\sigma} }$ & ${\tfrac{c}{\sqrt{e}\sigma} }$ & $-\tfrac{c}{\sigma^2}$ & $\tfrac{2c}{\sigma^2 e^{3/2}}$ \\
    \hline
    Probit, $\Phi(yf)$, $\Phi$ is cdf of Gaussian 
        & $1$ & $-1/\sqrt{2\pi}$ & $1/\sqrt{2\pi}$ & $-1/\sqrt{2\pi e}$ & $1/\sqrt{2\pi e}$ \\
    \hline
    Poisson, $\frac{g(f)^y e^{g(f)}}{y!}$,$g(f)=\log(e^f+1)$
        & $1$ & $-1$ & $1$ & $-2.25$ & $2.25$ \\
    \hline
    Poisson, $\frac{g(f)^y e^{g(f)}}{y!}$,$g(f)=e^f$
        & $1$ & $-y-1$ & $y$ & $-y-1/4$ & $2 y^2 + 3y + 2$ \\ 
    \hline
    Student's t, $c(1+\frac{(y-f)^2}{\sigma^2\nu})^{-\tfrac{\nu+1}{2}},c=\frac{\Gamma(\frac{\nu+1}{2})}{\Gamma(\frac{\nu}{2})\sqrt{\pi\nu}\sigma}$
    & $c$ & $-\frac{\frac{c}{\sigma}\frac{\nu+1}{\nu}\sqrt{\frac{\nu}{\nu+2}}}{(\frac{\nu+3}{\nu+2})^{(\nu+3)/2}}$ & $\frac{\frac{c}{\sigma}\frac{\nu+1}{\nu}\sqrt{\frac{\nu}{\nu+2}}}{(\frac{\nu+3}{\nu+2})^{(\nu+3)/2}}$ & $-\frac{c}{\sigma^2}\frac{\nu+1}{\nu}$ & 
    $2\frac{c}{\sigma^2}\frac{\nu+1}{\nu}(\frac{\nu+2}{\nu+5})^{(\nu+5)/2}$
\end{tabular}
}

{\bf logistic:} For convenience let the label $y_i$ be in $\{-1,1\}$. Then 
$\phi=\sigma(y_i f_i)\leq 1$, $\phi' = y_i \sigma(y_i f_i) (1-\sigma(y_i f_i)) \in [-0.25,0.25]$, 
and 
$\phi'' = y_i^2 \sigma(y_i f_i) (1-\sigma(y_i f_i)) [1- 2 \sigma(y_i f_i)] \in [-0.25,0.25]$.

{\bf Gaussian:}
The Gaussian likelihood is
$\phi=p(y|f,\sigma^2)=c\exp(-(y-f)^2/2\sigma^2),c=\frac{1}{\sqrt{2\pi}\sigma}$.
In the following, let $x=(y-f)/\sigma$ to reduce clutter.
The first derivative of $\phi$ w.r.t.\ $f$ is given by
$\phi'=c\exp(-x^2/2)x\frac{1}{\sigma}$ whose root at $f=y~(x=0)$ corresponds to the maximum of the likelihood $c$.
The second derivative is given by
$\phi''=\frac{c}{\sigma^2}\exp(-x^2/2)(x^2-1)$.
The first derivative evaluations at the second derivative roots defined by $f=y\pm\sigma~(x=\pm 1)$ are $\pm\frac{c}{\sigma}\sqrt{e}$.
The third derivative is given by
$\phi'''=\frac{c}{\sigma^3}\exp(-x^2/2)x(x^2-3)$.
The second derivative evaluations at the third derivative roots defined by $f=y~(x=0)$ and $f=y\pm\sigma\sqrt{3}~(x^2=3)$ are $-\frac{c}{\sigma^2}$ and
$\frac{2c}{\sigma^2}\exp(-3/2)$,
respectively.
Finally, the first and second derivatives clearly approach 0 as $f$ approaches $\pm\infty$ since $\exp(f^2/2)$ dominates the growth of any polynomial in $f$.

{\bf probit:}
For convenience let the label $y_i$ be in $\{-1,1\}$. Then 
$\phi=\Phi(y_i f_i)$, where $\Phi$ is the CDF of the standard normal and clearly $\phi\in [0,1]$.
Let $h()$ be the PDF of the standard normal.
Then $\phi'=y_i h(y_i f_i)$ and $\phi'=y_i^2 h'(y_i f_i)$. Bounds for these are given by bounds above for the normal distribution with $\mu=0$ and $\sigma^2=1$, where we have to account for the sign flip in $\phi'=y_i h()$.

{\bf Poisson:} Here we also need to consider the link function. Two standard options are $\lambda=g(f_i)=e^{f_i}$ or $\lambda=g(f_i)=\ln(e^{f_i}+1)$. 

For the first option we have $\lambda = g(f_i) = e^{f_i}$. As above it is obvious that $\phi \leq 1$. 
$\phi' = \frac{\lambda^y e^{-\lambda}}{y!} (y-\lambda)=y\phi -(y+1) \frac{\lambda^{y+1} e^{-\lambda}}{(y+1)!}.$ Thus, $\phi' \geq b' = -y-1 $ and $\phi' \leq B' = y $. 
$\phi'' = \frac{\lambda^y e^{-\lambda}}{y!}(y^2 - (2y+1)\lambda + \lambda^2) = \frac{\lambda^y e^{-\lambda}}{y!}((\lambda - (y+\frac{1}{2}))^2 - y -\frac{1}{4}) \geq -y - \frac{1}{4}$. 
On the other hand, $\phi'' = (\frac{\lambda^y e^{-y}}{y!}) y^2 - \frac{\lambda^{y+1} e^{-\lambda}}{(y+1)!} (2y+1)(y+1) + \frac{\lambda^{y+2} e^{-\lambda}}{(y+2)!} (y+1)(y+2) \leq 2 y^2 + 3y + 2$.

For the second option we have $\lambda = g(f_i)=\ln(e^{f_i}+1)$.
Below we use $\phi_{t} (\lambda_i)$ to denote $p(t|f_i)=\frac{\lambda_i^t e^{\lambda_i}}{t!}$.
First note that $g'(f_i) =\frac{e^{f_i}}{e^{f_i}+1}=\sigma(f_i)\in [0,1]$ and $g''(f_i)\in [0,0.25]$ .
We have 
$\phi=\frac{\lambda_i^{y_i} e^{\lambda_i}}{y_i!}\leq 1$,
$\phi' =g'(f_i)\phi'(\lambda_i) = g'(f_i) (\phi(\lambda_i) - \phi_{y_i-1}(\lambda_i))$ implying $-1<\phi'<1$,
and 
\begin{align*}
\phi'' &= g''(f_i) (\phi(\lambda_i) - \phi_{y_i-1}(\lambda_i)) + (g'(f_i))^2 (\phi_{y_i-1}'(\lambda_i)-\phi(\lambda_i)') \\
&= g''(f_i) (\phi(\lambda_i)-\phi_{y_i-1}(\lambda_i)) + (g'(f_i))^2 (\phi_{y_i-2}(\lambda_i)-\phi_{y_i-1}(\lambda_i)-\phi_{y_i-1}(\lambda_i)+\phi_{y_i}(\lambda_i))
\end{align*}
which is bounded because all its components are bounded.
Plugging in $0\leq g'\leq 1$, $0\leq g''\leq 0.25$, we get that $\phi''>-2.25$ and $\phi''<2.25$.

{\bf Student T:}
The student's t likelihood is
$p(y|f,\nu,\sigma^2)=c\Big(1+\frac{(y-f)^2}{\sigma^2\nu}\Big)^{-(\nu+1)/2},c=\frac{\Gamma(\frac{\nu+1}{2})}{\Gamma(\frac{\nu}{2})\sqrt{\pi\nu}\sigma},\nu\in\mathbb{R}^+$.
In the following, let $x=(y-f)/\sigma$ to reduce clutter.
The first derivative is given by
$-c\frac{\nu+1}{\nu}\frac{x}{(1+\frac{x^2}{\nu})^{(\nu+3)/2}}(-\frac{1}{\sigma})$.
The only first derivative root at $f=y~(x=0)$ corresponds to the maximum of the likelihood $c$.
The second derivative is given by
\begin{multline*}
    -c\frac{\nu+1}{\nu}
    [
        \frac{1}{(1+\frac{x^2}{\nu})^{(\nu+3)/2}}
        -
        \frac{x^2\frac{\nu+3}{\nu}}{(1+\frac{x^2}{\nu})^{(\nu+5)/2}}
    ]
    \frac{1}{\sigma^2}
    \\
    =
    -c\frac{\nu+1}{\nu}
    (1+\frac{x^2}{\nu})^{-(\nu+5)/2}
    [ 1 + \frac{x^2}{\nu} - x^2\frac{\nu+3}{\nu} ]
    \frac{1}{\sigma^2}
    \\
    =
    -c\frac{\nu+1}{\nu}
    \frac{1 - x^2\frac{\nu+2}{\nu}}{(1+\frac{x^2}{\nu})^{(\nu+5)/2}}
    \frac{1}{\sigma^2}.
\end{multline*}
The first derivative evaluations at the second derivative roots defined by
$f=y\pm\sigma\sqrt{\frac{\nu}{\nu+2}}~(x=\pm\sqrt{\frac{\nu}{\nu+2}})$ are
$\pm \frac{c}{\sigma}\frac{\nu+1}{\nu}\sqrt{\frac{\nu}{\nu+2}}/(\frac{\nu+3}{\nu+2})^{(\nu+3)/2}$.
Also, since $\nu>0$, the denominator of the first derivative is a polynomial of degree at least 3 implying that as $f$ approaches $\pm\infty$, the first derivative approaches 0.
Hence, the first derivative is bounded over its domain.
The third derivative is given by 
\begin{multline*}
    -c\frac{\nu+1}{\nu}
    [
        -\frac{\nu+5}{\nu}
        \frac{x}{(1+\frac{x^2}{\nu})^{(\nu+7)/2}}
        [1-x^2\frac{2+\nu}{\nu}]
        +
        \frac{1}{(1+\frac{x^2}{\nu})^{(\nu+5)/2}}
        (-2x)\frac{2+\nu}{\nu}
    ]
    (\frac{-1}{\sigma^3})
    \\ =
    c\frac{\nu+1}{\nu^2}
    (1+\frac{x^2}{\nu})^{-(\nu+7)/2}
    x
    [
        ({\nu+5})
        [1-x^2\frac{2+\nu}{\nu}]
        +
        (1+\frac{x^2}{\nu})
        2({2+\nu})
    ]
    (\frac{-1}{\sigma^3})
    \\ =
    c\frac{\nu+1}{\nu^2}
    (1+\frac{x^2}{\nu})^{-(\nu+7)/2}x
    [\nu+5-x^2\frac{(2+\nu)(5+\nu)}{\nu}
    +2(2+\nu)+\frac{x^2}{\nu}{2(2+\nu)}]
    (\frac{-1}{\sigma^3})
    \\ =
    c\frac{\nu+1}{\nu^2}
    (1+\frac{x^2}{\nu})^{-(\nu+7)/2}x
    [-\frac{(2+\nu)(3+\nu)}{\nu}x^2+3(3+\nu)]
    (\frac{-1}{\sigma^3})
    \\ =
    c\frac{(\nu+1)(\nu+3)}{\nu^2}
    (1+\frac{x^2}{\nu})^{-(\nu+7)/2}x
    [-\frac{2+\nu}{\nu}x^2+3]
    (\frac{-1}{\sigma^3}).
\end{multline*}
The second derivative evaluations at the third derivative roots $f=y~(x=0)$ and $f=y\pm\sigma\sqrt{\frac{3\nu}{2+\nu}} ~(x^2=\frac{3\nu}{2+\nu})$ are
$-\frac{c}{\sigma^2}\frac{\nu+1}{\nu}$ and
$2\frac{c}{\sigma^2}\frac{\nu+1}{\nu}(\frac{\nu+2}{\nu+5})^{(\nu+5)/2}$, respectively.
Also, the denominator of the second derivative is a polynomial of degree at least 5 whereas the numerator is a polynomial of degree 2 implying that as $f$ approaches $\pm\infty$, the second derivative approaches 0.
Hence, the second derivative is bounded over its domain.

\section{Complete experimental details}

\myparagraph{Training}
For regression, the algorithms are implemented in PyTorch. DLM is implemented as described in the main paper. 
Where simplified objectives are available, specifically regression ELBO for SVGP and regression objective for FITC, we implement the  
collapsed forms.
For classification and count prediction, 
we extend the implementation from GPyTorch \citep{GPyTorch}.
Isotropic RBF kernels are used unless otherwise specified. 
We use a zero mean function for experiments in regression and count prediction and a constant mean function for binary prediction (because some of the datasets require this to obtain reasonable performance with GP). 

All algorithms are trained with the Adam optimizer where we use a learning rate of $10^{-1}$ for batch data training and $10^{-3}$ for stochastic training.
The same stopping criteria consisting of either convergence or max iterations is used in all cases.
Almost all runs across algorithms and datasets resulted in convergence.
Convergence is defined when
the difference between the minimum and maximum of the loss in the last $I$ iterations does not exceed $10^{-4}$, for $I=50$ iterations in regression, and $I=20$ iterations in classification and count prediction.
For square loss DLM  the optimization for $m$ has a closed form, i.e., it is optimized in one step. 
If the log loss does not converge, we stop when the number of iterations exceeds 5000 for regression, and 3000 for classification and count regression.
Evaluations are performed on held-out test data and 5 repetitions are used to generate error bars.

\myparagraph{Datasets}
Table~\ref{tab:dataset} shows the datasets used and their characteristics.
In the table, ``dim'' refers to the number of features and $M$ is the number of inducing points used in our experiments. Notice that in some datasets, categorical features are converted to dummy coding, i.e., we use $L-1$ binary features to represent a feature with $L$ categories. One category is assigned the all zero code while the other $L-1$ categories are assigned to the unit vector with the corresponding entry set to 1.

\begin{table}[h]
    \centering
    \begin{tabular}{c|c|c|c|c}
        dataset & type & size &dim& $M$ \\
        \hline
        pol\footnotemark[1] & regression & 15000 & 26 & 100 \\
        cadata\footnotemark[2] & regression & 20640 & 8 & 206 \\
        sarcos\footnotemark[3] & regression & 48933 & 21 & 100 \\
        song\footnotemark[4] & regression & 515345 & 90 & 100 \\
        \hline
        banana\footnotemark[5] & classification & 5300 & 2 & 53 \\
        thyroid\footnotemark[4] & classification & 3772 & 6 & 37 \\
        twonorm\footnotemark[6] & classifcation & 7400 & 20 & 74 \\
        ringnorm\footnotemark[7] & classification & 7400 & 20 &74 \\
        airline\footnotemark[8] & classification & 2055733 & 8 & 200 \\
        \hline
        abalone\footnotemark[4] &count& 4177 & 9 & 41 \\
        Peds1\_dir0\footnotemark[9] &count& 4000 & 30 & 40 \\
        Peds1\_dir1\footnotemark[9] &count& 4000 & 30 & 40 \\
        Peds1\_dir2\footnotemark[9] &count& 4000 & 30 & 40 \\
        \hline
    \end{tabular}
    \caption{Details of datasets}
    \label{tab:dataset}
\end{table}
\footnotetext[1]{\url{https://github.com/trungngv/fgp/tree/master/data/pol}}
\footnotetext[2]{\url{https://www.csie.ntu.edu.tw/~cjlin/libsvmtools/datasets/regression/cadata}}
\footnotetext[3]{\citep{Rasmussen2006}}
\footnotetext[4]{\url{http://archive.ics.uci.edu/ml/index.php}}
\footnotetext[5]{\url{https://www.kaggle.com/saranchandar/standard-classification-banana-dataset}}
\footnotetext[6]{\url{https://www.cs.toronto.edu/~delve/data/twonorm/desc.html}}
\footnotetext[7]{\url{https://www.cs.toronto.edu/~delve/data/ringnorm/desc.html}}
\footnotetext[8]{\citep{hensman2015scalable}}
\footnotetext[9]{\url{http://visal.cs.cityu.edu.hk/downloads/}}

\myparagraph{Evaluation}
Each regression dataset is split into portions with relative sizes 67/8/25 for training, validation and testing.
For classification and count regression, we select a number of training sizes (up to 2000) and pick 10\% of all data to be the validation set.
From the remaining examples we randomly choose up to 1000 samples for testing (to reduce test time for the experiments). %
For the larger song dataset ($\approx 0.5$M samples in total), we randomly choose a subset of 10000 examples for test data in order to reduce the test time in experiments. 
To reduce run time for DLM on large datasets we use mini-batch training with batches of 6000 samples. 

For the $\approx 2$M-size airline dataset of \cite{hensman2015scalable}, we split a 100000 test set from the full dataset, and trained on the remaining data for 20 epochs with Adam and learning rate $10^{-3}$.
The number of inducing points was set to 200 and the mini-batch size was 1000.
Here, we used the RBF-ARD kernel.
For fixed-DLM the train/evaluation protocol is as follows: SVGP was trained with all hyperparameters and variational parameters being learned; then, DLM was initialized with the learned SVGP hyperparameters which were then fixed; the DLM variational parameters were learned from scratch.

In all cases, mean negative log likelihood (NLL) $-\log E_{q(f)}p(y|f)$ is calculated on the test set.
NLL is computed exactly for regression and classification. For count regression it is calculated using quadrature.
Additionally, we compute test set mean squared error (MSE) in regression, mean error in classification, and mean relative error (MRE) in count regression; the latter is defined as $\frac{|\hat{y} - y|}{\max(1, y)}$, $\hat{y}=E_{q(y)}[y]=E_{q(f)q(y|f)}[y]$.
$\hat{y}$ can be calculated analytically as $E_{q(y|f)}[y]=\lambda=e^f$ and $E_{q(f)}[e^f]$ is the MGF of the normal distribution.

All datasets are normalized with respect to training data and the same normalization is performed on validation and test data.

\myparagraph{Results}
Here, we include the complete experimental results stated in the main paper. 

{\bf Log-loss and sq-loss in sGP Regression and $\beta$ values:}
Figure~\ref{fig:reg-log} shows results for log loss in regression.
In 3 of the datasets joint-DLM is significantly better than other algorithms 
and in {\em cadata}, where hyperparameter selection is sensitive, fixed-DLM 
is significantly better than other algorithms.
Figure~\ref{fig:reg-log} also shows values selected for $\beta$ on small and large train sizes for the {\em cadata} dataset. 
As discussed in the main paper this illustrates that values of $\beta$ larger than 1 are needed in some cases. 
Figure~\ref{fig:reg-sq} shows results for square loss on the same datasets.
The pattern from log-loss is repeated here where joint-sq-DLM dominates in 3 of the datasets and fixed-sq-DLM dominates in {\em cadata}.

\PutRegressionLog{fig:reg-log}
\PutRegressionSq{fig:reg-sq}

{\bf Log-loss DLM in non-conjugate sGP:} 
Figure~\ref{fig:class-log} shows log loss in classification, and Figure~\ref{fig:class-err} shows the corresponding classification error 
in the same experiments. 
In this case except for ringnorm the differences are small and DLM variants are comparable to SVGP variants.

\PutClassificationLog{fig:class-log}
\PutClassificationErr{fig:class-err}

Figure~\ref{fig:poisson-log} shows log loss in count regression, and Figure~\ref{fig:poisson-mre} shows relative error
in the same experiments.
For log loss joint-DLM dominates in 3 of the datasets and fixed-DLM is equal or better in the 4th dataset. 
Figure~\ref{fig:poisson-mre} shows that 
the better calibrated prediction in terms of log loss is achieved while maintaining competitive MRE. 

\PutPoissonLoss{fig:poisson-log}
\PutPoissonMRE{fig:poisson-mre}

{\bf Non-conjugate DLM on a large dataset:} 
Figure~\ref{fig:airlineA} shows a comparison between SVGP and the two DLM variants on the airline dataset for three values of $\beta$.
As observed in the main paper, for this dataset, both DLM variants perform better than SVGP for all values of $\beta$ tested.

\PutAirlineA{fig:airlineA}

{\bf Evaluation of the sampling algorithms (bias statistics):}
Figure~\ref{fig:PoissonGradMean} shows statistics of the gradients for the mean variables using uPS, bMC and smooth-bMC.
The statistics for the gradients are collected immediately after the initialization of the algorithm. 
We show statistics for conditions similar to (i,ii) in Proposition 2 of the main paper as as as an estimate of the bias as compared to exact gradients. 
uPS is well behaved for all 3 measures. 
We observed that bMC with 1 sample is significantly more noisy. For the other cases the constant for condition (i) is roughly 1 (as would be with the true gradient) and the norm in condition (ii) is closer to the true gradient. The bias for bMC is significantly larger than uPS. 
smooth-bMC reduces the bias of bMC without negative effect on conditions (i,ii). 
However, as discussed below this does not lead to improvements in log loss.

\PutPoissonGradientsMean{fig:PoissonGradMean}

{\bf Evaluation of the sampling algorithms (learning comparison):}

Figures~\ref{fig:airlineB1}, \ref{fig:airlineB2}, and \ref{fig:airlineB3} compare learning with exact gradients to learning with bMC and uPC for $\beta=0.1, 1, 10$ respectively on the {\em airline} dataset. 
We observe that with enough samples both uPS and bMC recover the result of exact gradients, but uPS can do so with less samples. 
Figure~\ref{fig:airline-smooth-bMC} compares learning with exact gradients to learning with bMC and smooth-bMC when $\beta=0.1$. Smooth-bMC is very close to bMC when the number of samples are the same and thus, they overlap with each other. 
Hence in this case the potential improvement in bias resulting from smoothing does not lead to performance improvement in terms of log loss. 

\PutAirlineBA{fig:airlineB1}
\PutAirlineBB{fig:airlineB2}
\PutAirlineBC{fig:airlineB3}
\PutAirlineSmoothBMCLearningCurve{fig:airline-smooth-bMC}

Figure~\ref{fig:poisson-learning-curve} shows learning curves for count prediction on two datasets, comparing bMC and uPS sampling. 
Figure~\ref{fig:poisson-smooth-learning-curve} compares bMC and smooth-bMC.
In these experiments we have found uPS to be more sensitive and have reduced the learning rate for Adam from 0.1 to 0.01.
Here there are no significant differences between uPS, bMC and smooth bMC in terms of log loss. 

\PutPoissonLearningCurve{fig:poisson-learning-curve}
\PutPoissonSmoothBMCLearningCurve{fig:poisson-smooth-learning-curve}

Finally, 
Figure~\ref{fig:regression-learning} compares learning with exact gradients to learning with bMC on the 4 regression datasets. For all datasets, bMC-1 is the worst and there is almost no difference between exact and bMC-100.

\PutRegressionLearningCurve{fig:regression-learning}

In summary all the experiments suggest that with enough samples bMC results in competitive performance. uPS makes better use of samples in some cases. However, this comes with a significant cost in terms or run time. Hence bMC appears to be a better choice in practice.

\end{document}